\documentclass{article}

 \usepackage[preprint]{neurips_2026}

\usepackage[utf8]{inputenc} 
\usepackage[T1]{fontenc}    
\usepackage{hyperref}       
\usepackage{url}            
\usepackage{booktabs}       
\usepackage{amsfonts}       
\usepackage{nicefrac}       
\usepackage{microtype}      
\usepackage{xcolor}         
\usepackage{graphicx}
\usepackage{natbib}
\usepackage{doi}
\usepackage{empheq}
\usepackage{enumitem}
\usepackage{caption} 

\usepackage{graphics} 
\usepackage{epsfig} 
\usepackage{times} 
\usepackage{amsmath} 
\usepackage{amssymb}  
\usepackage{subcaption}
\usepackage{multicol}
\usepackage{pgfplots}
\pgfplotsset{compat=1.18}
\usetikzlibrary{patterns}
\usetikzlibrary{shapes.geometric, arrows.meta, positioning, calc}
\usepackage{wrapfig}
\usepackage{makecell}
\usepackage{adjustbox}
\usepackage{tikz}
\usetikzlibrary{fit}
\usetikzlibrary{arrows.meta, calc}

\definecolor{Corange}{RGB}{217,95,2}    \definecolor{CorangeL}{RGB}{253,232,206}
\definecolor{Cblue}{RGB}{21,101,192}    \definecolor{CblueL}{RGB}{221,238,255}
\definecolor{Cgreen}{RGB}{46,125,50}    \definecolor{CgreenL}{RGB}{220,237,200}
\definecolor{Cpurple}{RGB}{106,27,154}  \definecolor{CpurpleL}{RGB}{237,231,246}
\definecolor{Cteal}{RGB}{0,105,92}      \definecolor{CtealL}{RGB}{224,242,241}
\definecolor{Cred}{RGB}{183,28,28}      \definecolor{CredL}{RGB}{255,235,238}

\tikzset{
  myarr/.style={-{Stealth[length=9pt,width=6pt]}, line width=2.2pt, color=#1},
  myarr/.default=black,
  trunk/.style={line width=2.2pt, color=#1, line cap=round},
  trunk/.default=black,
}

\usepackage[utf8]{inputenc} 
\usepackage[T1]{fontenc}    
\usepackage{hyperref}       
\usepackage{url}            
\usepackage{booktabs}       
\usepackage{amsfonts}       
\usepackage{nicefrac}       
\usepackage{microtype}      
\usepackage{xcolor}         
\usepackage{float}

\usepackage{amsmath}
\usepackage{amssymb}
\usepackage{mathtools}
\usepackage{amsthm}

\usepackage{tabularray}
\usepackage{adjustbox}
\usepackage{graphicx}
\usepackage{dsfont}
\usepackage{svg}

 \usepackage[ruled,vlined]{algorithm2e}

\usepackage{xfrac}

\theoremstyle{plain}
\newtheorem{theorem}{Theorem}[section]
\newtheorem{proposition}[theorem]{Proposition}
\newtheorem{lemma}[theorem]{Lemma}
\newtheorem{corollary}[theorem]{Corollary}
\theoremstyle{definition}
\newtheorem{definition}[theorem]{Definition}
\newtheorem{assumption}[theorem]{Assumption}
\theoremstyle{remark}
\newtheorem{remark}[theorem]{Remark}
\title{Curvature-Guided LoRA: \\
Matching Full Fine-Tuning in Function Space}

\author{%
  Frédéric Zheng \\
  KTH, Stockholm, Sweden \\
  \texttt{fzheng@kth.se} \\
  \And
  Alexandre Proutiere \\
  KTH, Stockholm, Sweden \\
  \texttt{alepro@kth.se} \\
}

\begin{document}

\maketitle

\begin{abstract}
Parameter-efficient fine-tuning methods such as LoRA enable efficient adaptation of large pretrained models, but often lag behind full fine-tuning in both convergence speed and final performance. Recent approaches aim to reduce this gap by aligning LoRA parameter updates with those of full fine-tuning, but such parameter-space alignment only indirectly controls model predictions. Instead, we adopt a function-space perspective and formulate the \emph{prediction alignment problem}, whose objective is to match the outputs of LoRA fine-tuning to those of full fine-tuning. We show that this objective naturally leads to a curvature-aware, second-order formulation, where optimal low-rank updates correspond to a Newton-like, curvature-whitened gradient. Based on this insight, we propose Curvature-Guided LoRA (CG-LoRA), an algorithm that selects adaptation directions using local curvature information. Our method is computationally efficient and avoids explicit second-order matrix construction. Experiments on standard natural language understanding benchmarks demonstrate improved performance and faster convergence compared to existing LoRA variants. Code can be found at \url{https://github.com/fredML/CG-LoRA}.
\end{abstract}


\section{Introduction}

Parameter-efficient fine-tuning methods \cite{han2024parameter,houlsby2019parameter}, such as LoRA \cite{hu2022lora}, have become a standard approach for adapting large pretrained models, as they substantially reduce memory and computational costs compared to full fine-tuning \cite{biderman2024lora}. However, a persistent gap remains between LoRA and full fine-tuning, both in convergence speed and in final performance. A recent line of work aims to close this gap by designing alignment-aware variants of LoRA, typically by matching the first-step LoRA update to that of full fine-tuning, or by modifying the gradient structure accordingly \cite{gawang2024lora,zhang2025lora}. While effective, these approaches operate in parameter space and therefore only indirectly control the quantity of primary interest: the model's predictions. This motivates the central question of this work: {\it What is the appropriate performance-based notion of alignment between LoRA and full fine-tuning, and how can LoRA be constructed to achieve it?}

We address this question by formulating the \emph{prediction alignment problem}, whose goal is to ensure that the predictor obtained after LoRA fine-tuning matches as closely as possible the one obtained by full fine-tuning. Rather than aligning parameter updates, we directly target alignment in function space, namely at the level of the model outputs after fine-tuning. This shift in perspective is crucial: distinct parameter updates can induce similar predictions, while small discrepancies in parameters may lead to large functional differences. Our main contribution is to show that the prediction alignment objective naturally leads to a curvature-aware, second-order formulation. Specifically, we demonstrate that aligning LoRA with full fine-tuning requires accounting for the local geometry of the loss landscape, captured through a curvature approximation around the pretrained model. This yields a Newton-like initialization of the low-rank adapters, in which the optimal subspaces are obtained by whitening the gradient using curvature information. In contrast to prior work based solely on first-order gradients, our method leverages second-order structure to identify the directions with the largest impact on predictions. The approach and its benefits are illustrated in Figure~\ref{fig:cglora}. Our contributions are as follows:

\begin{figure}[t!]
     \centering     
     \includegraphics[width=0.55\textwidth]{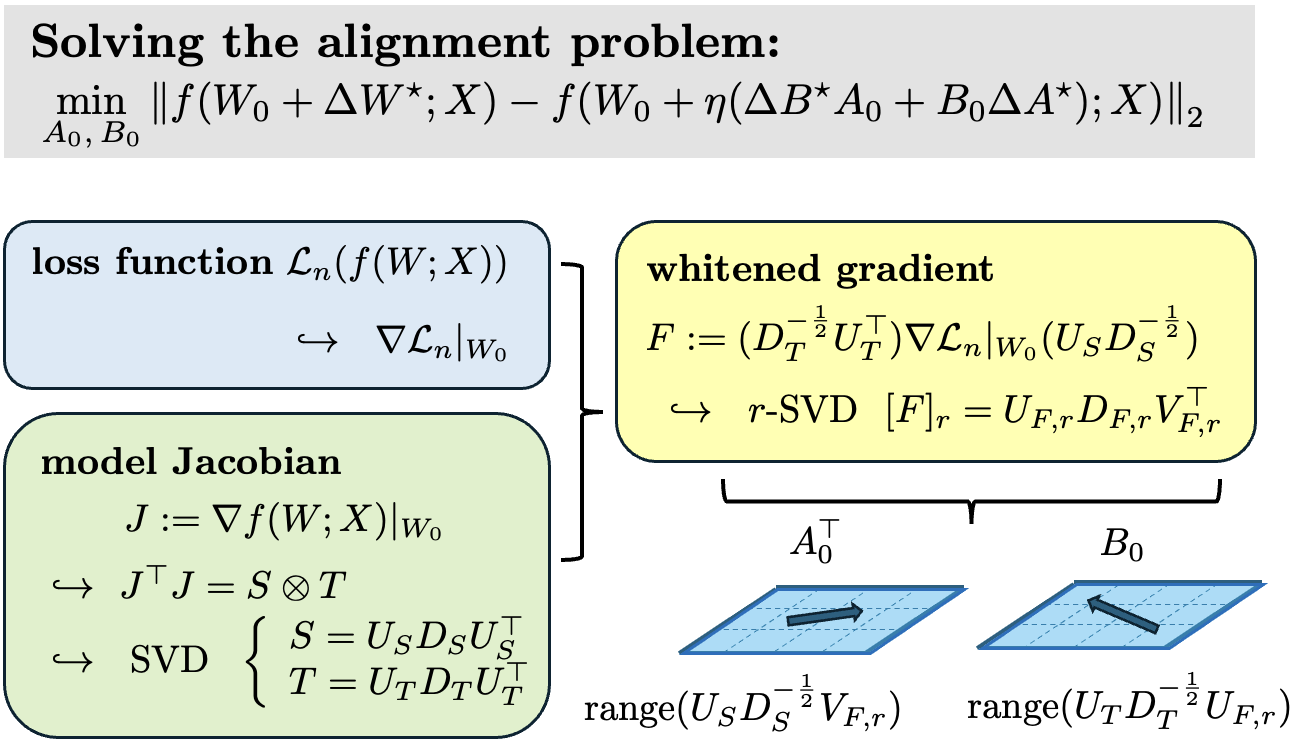}
     \hspace{-0.2cm}
     \raisebox{0.02cm}{\includegraphics[width=0.45\textwidth]{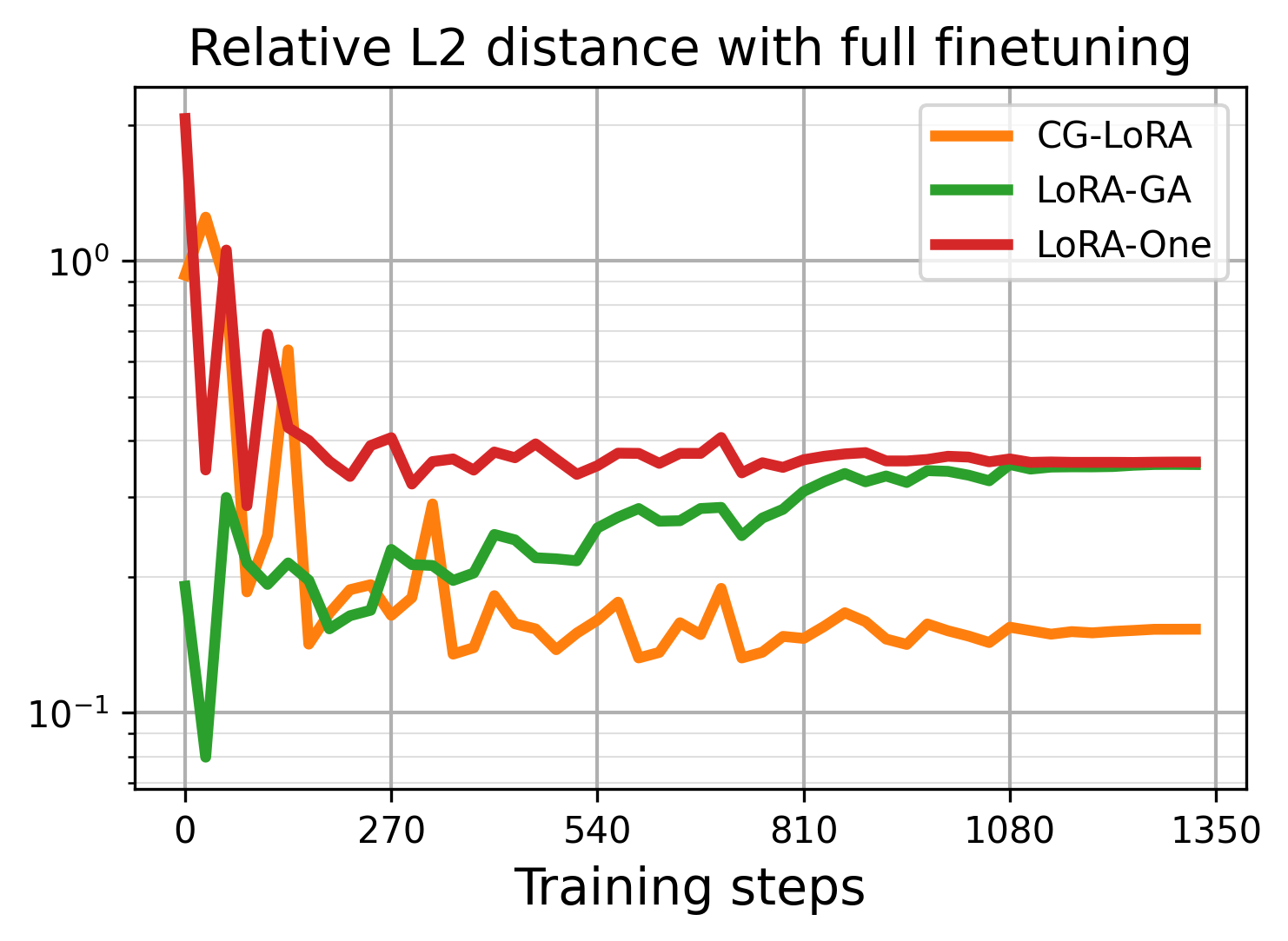}}
\caption{Left: the procedure to solve the alignment problem, as stated in Theorem \ref{thm:opt_align}. Right: Relative distance between LoRA and FFT training logits ${\Vert \hat f_{FT,t}(X)-\hat{f}_{LoRA,t}(X)\Vert_2}/{\Vert \hat f_{FT,t}(X)\Vert_2}$ using RoBERTa-base on CoLA averaged over three runs. LoRA rank $r=8$. }
{\vspace{-0.85cm}}
    \label{fig:cglora}
\end{figure}

(a) We formulate the alignment problem and analyze it through the Neural Tangent Kernel (NTK) approximation. We show that the resulting optimization problem induces a principled choice of the low-rank initialization matrix \(B_0A_0\): the column spaces of \(A_0^\top\) and \(B_0\) should align with the leading singular subspaces of a low-rank approximation of the \emph{whitened gradient}, obtained by combining the model Jacobian with the gradient of the loss function. This result, summarized in Theorem~\ref{thm:opt_align}, provides a function-space characterization of alignment. Our analysis recovers and extends the results of \cite{zhang2025lora}, which were established under highly specific data-generating assumptions, such as a linear ground-truth model or a single ReLU layer. In the main part of the paper, we present the analysis for the squared loss. The cross-entropy loss is treated in the appendix; 
to the best of our knowledge, this is the first theoretical analysis of non-zero LoRA initialization under cross-entropy loss.

(b) Building on this characterization, we propose Curvature-Guided LoRA (CG-LoRA), an initialization and adaptation scheme in which \(A_0\) and \(B_0\) are chosen according to the singular subspaces identified by the prediction-alignment analysis in (a), while also preserving alignment of the first-step model update, in the spirit of \cite{gawang2024lora,zhang2025lora}. We further show that the method can be implemented efficiently, without explicitly forming large curvature matrices, by relying on structured approximations.

(c) Empirically, we show that CG-LoRA converges faster and achieves better performance than existing LoRA variants on standard benchmarks, illustrating the value of function-space alignment and second-order information for parameter-efficient adaptation. We also demonstrate that CG-LoRA is computationally efficient and provides substantial memory savings.

{\bf Notation.} For all matrices $M = (M_1 \;\cdots\; M_{d_2}) \in \mathbb{R}^{d_1 \times d_2}$
with columns $M_j \in \mathbb{R}^{d_1}$ and $N$, we define the
\emph{column-stacked vectorization} and the \emph{Kronecker product} as
$    \operatorname{vec}(M)
    := \bigl(M_1^\top,\;\ldots,\;M_{d_2}^\top\bigr)^\top \in \mathbb{R}^{d_1 d_2},\quad
    M \otimes N
    :=
    \begin{pmatrix}
        M_{1,1}\,N & \cdots & M_{1,d_2}\,N \\
        \vdots     & \ddots & \vdots        \\
        M_{d_1,1}\,N & \cdots & M_{d_1,d_2}\,N
    \end{pmatrix},
    \label{eq:vec_kron}
$
where $M_{i,j}$ denotes the $(i,j)$-th entry of $M$. We denote by $\|\cdot\|_2$ the Euclidean norm for vectors and by
$\|\cdot\|_F$ the Frobenius norm for matrices. For a matrix $M  \in \mathbb{R}^{d_1 \times d_2}$, we denote by $M^\dagger$ its Moore-Penrose pseudo-inverse and $\Pi_M=MM^\dagger=M(M^\top M)^\dagger M^\top$ the orthogonal projection over the column space of $M$. We denote by $s_1(M)\geq...\geq s_{\min(d_1,d_2)}(M)\geq 0$ the ordered singular values of $M$ and by $s_{\min}^+(M)=s_{\operatorname{rank}(M)}(M)$ the smallest non-zero singular value of $M$. We call a thin singular value decomposition (SVD) of $M$ any decomposition $M=U\operatorname{Diag}(s_1(M),..., s^{+}_{\min} (M))V^\top$ with semi-orthogonal $U\in\mathbb{R}^{d_1\times \operatorname{rank}(M)}, V \in\mathbb{R}^{d_2\times \operatorname{rank}(M)}$. The set of invertible squared matrices of size $r$ is denoted by $\mathbb{GL}_r.$ The identity matrix of size $(d,d)$ is denoted by $I_d$, and the rectangular matrix of size $(d_1,d_2)$ filled with zeros is denoted by $0_{d_1,d_2}$. Finally, for $r\in \mathbb{N}$, $[r]:=\{1,\ldots,r\}$.

\section{Related work}

\paragraph{Low-rank adaptation.}
Following the introduction of LoRA \cite{hu2022lora}, a substantial body of work has studied how to improve its empirical performance and clarify its theoretical behavior.
On the optimization side, LoRA$+$ \cite{hayou2024lora+} argues that using different learning rates for the two low-rank factors is preferable to tying them.  From a computational perspective, several works aim at either reducing the number of total parameters \cite{kopiczko2023vera} or the complexity cost of each operation \cite{chen2025lora}.
In a related direction, rank-stabilized LoRA (rsLoRA) \cite{kalajdzievski2023rank} studies the rank-dependent scaling used in LoRA and proposes a modified scaling. A complementary line of work investigates how the chosen rank controls the expressive power of LoRA updates and when LoRA can match full fine-tuning.
\cite{zeng2023expressive} provides theoretical results on the expressive power of LoRA in neural networks. 
In the neural tangent kernel lazy regime, \cite{jang2024lora-ntk} derives sufficient rank conditions under which the LoRA objective can recover solutions comparable to full fine-tuning.

The main motivation of our work is the recent focus on \emph{initialization} of low-rank adapters.
The original LoRA initialization sets the initial update to zero (by initializing one factor at zero and the other randomly \cite{he2015delving}), which ensures the fine-tuning trajectory starts exactly at the pre-trained solution. 
Beyond this baseline, \cite{li2025beyond} argues that allowing non-zero initializations can improve robustness. \cite{li2024crucial} proposes a Nyström initialization based on findings on low-rank matrix factorization, and applies it to LoRA. 
Several works also propose to leverage information in the pre-trained weights and/or the downstream task to construct data- or geometry-informed initial adapters.
PiSSA \cite{meng2024pissa} initializes adapters using the SVD of the pre-trained weight matrix, effectively seeding the update in directions aligned with dominant singular components of $W_0$. 
Other approaches \cite{gawang2024lora,zhang2025lora} attempt at leveraging low-rank subspaces where gradient descent happens,  \cite{aghajanyan2021intrinsic, gur2018gradient}, and construct SVD-based initializations from the pretrained loss-gradient. 
The above works can be viewed as refinements of the original LoRA recipe (e.g., learning-rate choice, scaling, rank, initialization) while preserving the basic low-rank update parameterization.
For completeness, we also highlight methods that modify the allocation or the parameterization itself.
AdaLoRA \cite{zhang2023adalora} adaptively allocates rank (i.e., parameter budget) across layers during training. DoRA \cite{liu2024dora} proposes to decompose the pre-trained weight and identify trainable magnitude and directions leveraging more efficient allocation. \cite{zhang2024spectral} parameterizes LoRA  within a singular-subspace representation of $W_0$ rather than via an additive low-rank update. From the gradient descent perspective, \cite{zhang2024riemannian} applies a preconditioning step to the usual gradients of the adapters while
\cite{wang2024lora}  proposes a novel gradient update expression.

\paragraph{Dynamics of LoRA.}
The Neural Tangent Kernel (NTK) has become a central tool to understand gradient descent dynamics in wide neural networks through linearization around initialization, \cite{jacot2018neural,lee2019wide}. By analyzing the NTK, a rich literature has been developed on deriving generalization bounds for overparametrized neural networks, see \cite{allen2019learning, arora2019fine, oymak2019generalization} and references herein.
While classical results focus on random initialization for the layer weight matrices, \cite{malladi2023kernel} provides justification and numerical evidence that linearization and fixed features Jacobian behaviour (we refer the reader to their Definition 3.2 for details) can persist when fine-tuning from a {pre-trained} initialization.
Studying LoRA through this lens has yielded insight into rank requirements and optimization landscape properties; allowing \cite{jang2024lora-ntk} to apply Burer-Monteiro type arguments, while \cite{liu2025optimization} analyzes a $\operatorname{PL}^\star$ condition on the kernel. Beyond the  lazy regime studied in the NTK, dynamics of LoRA are harder to characterize and fewer works exist.  \cite{kim2025lora-ntk} extends the work  of \cite{jang2024lora-ntk} undertaken in the NTK regime. The works of \cite{dayi2024gradient} and \cite{zhang2025lora} analyze in-depth the gradient descent dynamics in very specialized instances, namely rank-1 LoRA or linear/single ReLU assumptions for the finetuning tasks. 




\section{Model and objective}

We consider a pretrained model $f$ that we wish to fine-tune using the dataset $\{(x_i,y_i)\}_{i=1}^n$. Specifically, we focus on fine-tuning the weight matrix $W\in\mathbb{R}^{d_\mathrm{out}\times d_\mathrm{in}}$ of a given layer, and we denote by $f(W;x)\in\mathbb{R}^C$ the model output as a function of the input $x$ and this matrix. We begin by introducing notation that will be used throughout the remainder of the paper. The output of the model can be written component-wise as
$f(W;\,x)
    := \bigl(f^{(1)}(W;\,x),\;\ldots,\;f^{(C)}(W;\,x)\bigr)^\top \in \mathbb{R}^C.
$   
%
%
We define the Jacobian of $f$ at $(W,\,x)$ as
$J_{f,W,x}
    :=
    (
        \operatorname{vec}\nabla_W f^{(1)}(W;\,x),\;
        \ldots,
        \operatorname{vec}\nabla_W f^{(C)}(W;\,x)
    )^\top
    \in \mathbb{R}^{C\times d_{\mathrm{out}} d_{\mathrm{in}}},
$    
as well as the Neural Tangent Kernel (NTK) linearization of $f$ at $W$ in direction $\Delta W \in
\mathbb{R}^{d_{\mathrm{out}} \times d_{\mathrm{in}}}$ as $f(W+\Delta W;\,x) \approx f(W;\,x) + J_{f,W,x}\operatorname{vec}(\Delta W)$. This approximation has been shown to hold throughout the fine-tuning procedure for the squared loss in infinite-width networks with random initialization \cite{jacot2018neural, lee2019wide}, uniformly over time (see Theorem~2.1 in~\cite{lee2019wide}). For pretrained LLMs used as initialization, this approximation has also been supported empirically \cite{malladi2023kernel, afzallinearization}. In the following, we refer to the {\it NTK regime} as the setting in which this linear approximation holds.

To simplify the notation, we introduce $X := (x_1, \ldots, x_n)^\top$ and
$Y := (y_1^\top, \ldots, y_n^\top)^\top \in \mathbb{R}^{nC}$, and define 
$
    f(W;\,X)
    := \bigl(f(W;\,x_1)^\top,\;\ldots,\;f(W;\,x_n)^\top\bigr)^\top
    \in \mathbb{R}^{nC}
$ and $
    J_{f,W,X}:= (J_{f,W,x_1}^\top \ \cdots J_{f,W,x_n}^\top)^\top\in\mathbb{R}^{(nC)\times (d_\mathrm{out}d_\mathrm{in})}.
$
For a given loss function $\ell$ and training points $X = (x_1, \ldots, x_n)^\top$ and
$Y = (y_1^\top, \ldots, y_n^\top)^\top \in \mathbb{R}^{nC}$, we define the empirical loss of predictors $ f(W;X)\in\mathbb{R}^{nC}$ as following:
$\mathcal{L}_n( f(W;X)):=\sum_{i=1}^n \ell(f(W;x_i),y_i).$

 \subsection{Parameter-efficient fine-tuning}

Assume that the model starts with pretrained matrix $W_0$. Fine tuning consists in optimizing $W=W_0+\Delta W$ around $W_0$ to minimize the empirical loss. When no constraints are imposed on $\Delta W$, we talk about Full Fine-Tuning (FFT):
\begin{equation}
  \textrm{min}_{\Delta W \in
\mathbb{R}^{d_{\mathrm{out}} \times d_{\mathrm{in}}}}  \;\mathcal{L}_n(f(W_0 + \Delta W;\, X)).
    \label{eq:ft}
\end{equation}
In the NTK regime, this optimization problem becomes: 
\begin{align}
    \textrm{min}_{\Delta W \in
\mathbb{R}^{d_{\mathrm{out}} \times d_{\mathrm{in}}}}  \;\mathcal{L}_n\Big(f(W_0;X)+J_{f,W_0,X}\operatorname{vec}(\Delta W)\Big). 
     \tag{Full Fine-Tuning}
     \label{eq:ft_ntk}
\end{align} 
We denote by $\Delta W^\star$ the minimum-norm solution, i.e., the parameter update found by gradient descent after convergence. The NTK fine-tuned model can be hence  written as $f(W_0;x)+ J_{f,W_0,x}\operatorname{vec}(\Delta W^\star)$ (see Theorem 3.2 in \cite{arora2019exact} for a precise formulation).

LoRA constrains the weight perturbation $\Delta W$ to take the form $\eta BA$, where $A \in \mathbb{R}^{r \times d_{\mathrm{in}}}$ and $B \in \mathbb{R}^{d_{\mathrm{out}} \times r}$. Here, $r$ denotes the rank of the perturbation and is assumed to be much smaller than the input and output dimensions, i.e., $r \ll \min(d_{\mathrm{in}}, d_{\mathrm{out}})$. The parameter $\eta > 0$ is a fixed scaling factor that controls the magnitude of the adaptation~\cite{hu2022lora}.

 
\paragraph{Reparameterization around a fixed LoRA initialization $(A_0, B_0)$.} In the default initialization of LoRA \cite{hu2022lora}, $A_0$ is sampled using Kaiming uniform initialization while $B_0=0$. This implies that $B_0A_0=0$, ensuring that the initial weight matrix remains equal to the pretrained matrix $W_0$. In this work, we investigate improved initialization strategies and allow for choices such that $B_0A_0 \neq 0$. To this end, we write $A = A_0 + \Delta A$ and $B = B_0 + \Delta B$. With this notation, the product $BA$ can be approximated as $BA \approx B_0A_0 + B_0\Delta A + \Delta B A_0$, where we neglect the second-order term $\Delta B \Delta A$. Fine-tuning then begins from $\Delta A = 0$ and $\Delta B = 0$. Some recent works explicitly control the initialization offset. For instance, in LoRA-GA \cite{gawang2024lora}, the authors ensure that fine-tuning starts exactly from the pretrained matrix $W_0$ by shifting the latter to $W_0 - \eta B_0A_0$. In contrast, other approaches such as LoRA-One \cite{zhang2025lora} propose to exploit the flexibility offered by the initialization and deliberately modify the starting point of the weights through an appropriate choice of $A_0$ and $B_0$, so that $W_0 + \eta B_0A_0$ provides a better starting point than $W_0$ itself.

We adopt the same framework as that used in LoRA-GA, and search for a fine-tuned weight matrix of the form $W_0 + \Delta W = W_0 + \eta(B_0\Delta A + \Delta B A_0)$. 
It allows for a simpler, more intuitive analysis. In Section \ref{sec:algo}, we show how we can additionally incorporate the shift influence in our algorithmic design. 

The resulting LoRA fine-tuning procedure can therefore be written as follows:
\begin{equation}
  \textrm{min}_{\Delta A \in \mathbb{R}^{r \times d_{\mathrm{in}}}, \Delta B \in \mathbb{R}^{d_{\mathrm{out}} \times r}}  \;\mathcal{L}_n(f(W_0 + \eta(B_0\Delta A + \Delta B A_0);X))
    \label{eq:lora}
\end{equation}
In the NTK regime, this optimization problem becomes: 
\begin{align}
\textrm{min}_{\Delta A \in \mathbb{R}^{r \times d_{\mathrm{in}}}, \Delta B \in \mathbb{R}^{d_{\mathrm{out}} \times r}} \; \mathcal{L}_n\Big(f(W_0;X)+\eta J_{f,W_0,X}\operatorname{vec}(
           \Delta B A_0 + B_0 \Delta A )\Big).
     \tag{LoRA}
     \label{eq:lora_ntk} 
\end{align}
We denote by $\Delta A^\star, \Delta B^\star$ the minimum-norm solutions, i.e., the parameter update found by gradient descent after convergence. 

\subsection{Objective: the prediction alignment problem}

We aim to design a LoRA fine-tuning procedure that yields a model as close as possible to that obtained via full fine-tuning. Prior work~\cite{gawang2024lora} focuses on selecting $A_0, B_0$ so as to align the first-step parameter updates between LoRA and full fine-tuning. For gradient descent, this is equivalent to matching the initial loss gradients. Under additional assumptions on the data distribution, namely that $y$ is generated by a linear or single-layer ReLU function of isotropic inputs $x$, \cite{zhang2025lora} extends this analysis beyond the first step and characterizes the alignment of iterative gradient updates within a subspace, using a similar SVD-based initialization of the loss gradient. However, for highly nonlinear architectures such as large language models, even small parameter perturbations can lead to significant discrepancies between the full fine-tuning and LoRA losses, making gradient alignment an unreliable global surrogate. To address this limitation, we instead seek to choose $A_0, B_0$ such that, after fine-tuning, the resulting predictor closely matches that obtained via full fine-tuning. Specifically, we consider the optimization problem:
 \begin{equation}
     \min_{A_0,\,B_0}
     \left\lVert
         f\!\left(W_0+{\Delta W}^\star;\,X\right)
         -
         f\!\left(W_0+\eta({\Delta B}^\star A_0 + B_0{\Delta A}^\star);\,X\right)
     \right\rVert_2.
     \label{eq:pred_obj}
 \end{equation}

\section{Solving the Prediction Alignment Problem}

Next, we derive key results for solving the alignment problem (\ref{eq:pred_obj}). For clarity of exposition, we focus on the squared loss; the extension to cross-entropy loss for classification is done in Appendix \ref{app:cross_ent}.

\subsection{NTK loss minimizers}

\begin{proposition}
Consider the squared loss $\ell(f(x), y)=\frac{1}{2}\Vert y-f(x)\Vert_2^2$. Let $Z=Y-f(W_0;X)$ denote the fine-tuning residuals. \\
(i) Full Fine-Tuning (FFT). The minimum-norm solution \\${\Delta W}^\star = \arg\min_{\Delta W}\,\mathcal{L}_n\left(f(W_0;X)+J_{f,W_0,X}\operatorname{vec}(\Delta W\right))$ is given by
\begin{equation}
    \operatorname{vec}(\Delta W^\star)=J_{f,W_0,X}^\dagger Z.
    \label{eq:ft_ntk_preds}
\end{equation}
(ii) LoRA. Define the LoRA Jacobian as 
$J_{f,W_0,A_0,B_0,X} = J_{f,W_0,X}(I_{d_\mathrm{in}}\otimes B_0 \quad A_0^\top \otimes I_{d_\mathrm{out}}).$
The minimum-norm solution $\bigl(\Delta A^\star,\,\Delta B^\star\bigr)
= \arg\min_{\Delta A,\,\Delta B}\,\mathcal{L}_n(f(W_0;X)+\eta J_{f,W_0,X}\operatorname{vec}(
           \Delta B A_0 + B_0 \Delta A ))$ is given  by
\begin{equation}       
        (\operatorname{vec}(
   \Delta A^\star)^\top
    \operatorname{vec}(\Delta B^\star)^\top
    )^\top
 = \frac{1}{\eta}J_{f,W_0,A_0,B_0,X}^\dagger Z.
\label{eq:lora_ntk_preds}
\end{equation}
\label{prop:ntk_preds}
\end{proposition}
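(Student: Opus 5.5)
Both parts reduce to the standard fact that, for a linear least-squares problem $\min_v \tfrac12\|Z - Mv\|_2^2$, the minimum-norm minimizer is $v^\star = M^\dagger Z$. I will first recall why this holds. The objective is convex, and its minimizers are exactly the solutions of the normal equations $M^\top M v = M^\top Z$. One solution is $M^\dagger Z$, since $M^\top M M^\dagger = M^\top$ by the Moore--Penrose identities. This solution lies in $\operatorname{range}(M^\top) = \ker(M)^\perp$. Every other minimizer differs from it by an element of $\ker(M)$, so by Pythagoras $M^\dagger Z$ has the smallest Euclidean norm among all minimizers.

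For (i), I substitute the squared loss into the NTK full fine-tuning objective. Because $\mathcal{L}_n$ sums $\tfrac12\|y_i - f(x_i)\|_2^2$ over $i$, and $Y$ and $f(W_0;X)$ are the stacked vectors, the objective equals $\tfrac12\|Z - J_{f,W_0,X}\operatorname{vec}(\Delta W)\|_2^2$. Minimum norm in $\Delta W$ is Frobenius norm, which coincides with the Euclidean norm of $\operatorname{vec}(\Delta W)$. Applying the lemma with $M = J_{f,W_0,X}$ and $v = \operatorname{vec}(\Delta W)$ gives \eqref{eq:ft_ntk_preds}.

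For (ii), the main step is to write $\operatorname{vec}(\Delta B A_0 + B_0\Delta A)$ linearly in the stacked vector $(\operatorname{vec}(\Delta A)^\top, \operatorname{vec}(\Delta B)^\top)^\top$. I use the identity $\operatorname{vec}(PQR) = (R^\top \otimes P)\operatorname{vec}(Q)$, which holds for the column-stacking convention fixed in the Notation paragraph:
\begin{itemize}[leftmargin=*]
\item $\operatorname{vec}(B_0 \Delta A\, I_{d_\mathrm{in}}) = (I_{d_\mathrm{in}} \otimes B_0)\operatorname{vec}(\Delta A)$;
\item $\operatorname{vec}(I_{d_\mathrm{out}} \Delta B A_0) = (A_0^\top \otimes I_{d_\mathrm{out}})\operatorname{vec}(\Delta B)$.
\end{itemize}
The objective therefore becomes $\tfrac12\|Z - \eta\, J_{f,W_0,A_0,B_0,X}\, u\|_2^2$, where $u$ is the stacked vector. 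The norm of $u$ is exactly the joint Frobenius norm of $(\Delta A, \Delta B)$, so "minimum-norm solution" means minimum Euclidean norm in $u$.

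I then apply the lemma with $M = \eta J_{f,W_0,A_0,B_0,X}$. Since $\eta > 0$, we have $(\eta M')^\dagger = \eta^{-1} M'^\dagger$, which yields \eqref{eq:lora_ntk_preds}.

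The only steps needing care are the vectorization identity, including the ordering of blocks in the LoRA Jacobian, and the scalar pseudo-inverse identity. Neither is a real obstacle.
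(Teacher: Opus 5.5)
Your proposal is correct and follows the paper's proof essentially step for step. Both reduce each part to a linear least-squares problem whose minimum-norm solution is given by the Moore--Penrose pseudo-inverse. Both then use $\operatorname{vec}(PQR)=(R^\top\otimes P)\operatorname{vec}(Q)$ to write $\operatorname{vec}(\Delta B A_0 + B_0\Delta A)$ as $(I_{d_\mathrm{in}}\otimes B_0 \quad A_0^\top\otimes I_{d_\mathrm{out}})$ applied to the stacked vector, and pull the factor $\eta$ out of the pseudo-inverse. Your version is slightly more complete in one respect: you justify why $M^\dagger Z$ is the minimum-norm minimizer (its range is $\ker(M)^\perp$), whereas the paper cites this as standard.
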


\begin{corollary}
The objective function in (\ref{eq:pred_obj}) is equal to $\Vert (\Pi_{J_{f,W_0,X}} - \Pi_{J_{f,W_0,A_0,B_0,X}})Z \Vert_2$ for any $A_0, B_0$. Hence (\ref{eq:pred_obj}) is equivalent to 
\begin{equation}
    \min_{A_0,\,B_0}
    \Vert (\Pi_{J_{f,W_0,X}} - \Pi_{J_{f,W_0,A_0,B_0,X}})Z \Vert_2.
    \label{eq:pred_obj2}
\end{equation}
\label{corr:diff_proj}
\end{corollary}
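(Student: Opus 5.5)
The plan is to work throughout in the NTK regime, so that both predictors in (\ref{eq:pred_obj}) are affine in the parameter updates. Then Proposition~\ref{prop:ntk_preds} reduces each prediction to a projection of the residual $Z$.

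\textbf{Full fine-tuning predictor.} Under the linearization, the FFT predictor is $f(W_0;X)+J_{f,W_0,X}\operatorname{vec}(\Delta W^\star)$. By (\ref{eq:ft_ntk_preds}) this equals $f(W_0;X)+J_{f,W_0,X}J_{f,W_0,X}^\dagger Z$, that is, $f(W_0;X)+\Pi_{J_{f,W_0,X}}Z$, using the definition $\Pi_M=MM^\dagger$ from the notation paragraph.

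\textbf{LoRA predictor.} First I would express $\operatorname{vec}(\Delta B A_0+B_0\Delta A)$ linearly in the stacked vector $(\operatorname{vec}(\Delta A)^\top,\operatorname{vec}(\Delta B)^\top)^\top$. With column-stacked vectorization, the identity $\operatorname{vec}(MXN)=(N^\top\otimes M)\operatorname{vec}(X)$ gives
\[
\operatorname{vec}(B_0\Delta A)=(I_{d_\mathrm{in}}\otimes B_0)\operatorname{vec}(\Delta A),
\qquad
\operatorname{vec}(\Delta B A_0)=(A_0^\top\otimes I_{d_\mathrm{out}})\operatorname{vec}(\Delta B).
\]
Therefore $J_{f,W_0,X}\operatorname{vec}(\Delta BA_0+B_0\Delta A)=J_{f,W_0,A_0,B_0,X}\,(\operatorname{vec}(\Delta A)^\top,\operatorname{vec}(\Delta B)^\top)^\top$. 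This is exactly the block structure of the LoRA Jacobian defined in Proposition~\ref{prop:ntk_preds}(ii), in the order $(\Delta A,\Delta B)$. Substituting (\ref{eq:lora_ntk_preds}), the factor $\eta$ cancels against $1/\eta$. The LoRA predictor is thus $f(W_0;X)+J_{f,W_0,A_0,B_0,X}J_{f,W_0,A_0,B_0,X}^\dagger Z=f(W_0;X)+\Pi_{J_{f,W_0,A_0,B_0,X}}Z$.

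\textbf{Conclusion.} Subtracting the two predictors, the common term $f(W_0;X)$ cancels. The objective in (\ref{eq:pred_obj}) becomes $\Vert(\Pi_{J_{f,W_0,X}}-\Pi_{J_{f,W_0,A_0,B_0,X}})Z\Vert_2$. This holds for every $(A_0,B_0)$, since Proposition~\ref{prop:ntk_preds} places no restriction on them. Two functions of $(A_0,B_0)$ that agree everywhere have the same minimizers, so the equivalence with (\ref{eq:pred_obj2}) follows.

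\textbf{Main obstacle.} I expect the main care point to be the LoRA step. The Kronecker factors must come out in the correct order for the paper's column-stacked convention, and they must match the block order $(\Delta A,\Delta B)$ of the stacked vector. A second point is to state explicitly that (\ref{eq:pred_obj}) is evaluated with the NTK-linearized $f$, since that is the regime in which Proposition~\ref{prop:ntk_preds} applies. Once these are in place, the remaining steps are direct substitution.
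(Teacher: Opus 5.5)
Your proposal is correct and follows the paper's own argument. Both predictors are written as $f(W_0;X)$ plus a projection of $Z$: $\Pi_{J_{f,W_0,X}}Z$ for full fine-tuning, and $\Pi_{J_{f,W_0,A_0,B_0,X}}Z$ for LoRA once $\eta$ cancels against $1/\eta$. Subtracting gives the claim for every $(A_0,B_0)$. Your explicit check of the Kronecker-factor ordering, and your remark that (\ref{eq:pred_obj}) must be read with the NTK-linearized $f$, are points the paper leaves implicit.
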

\vspace{-1em}
Interestingly, our formulation does not depend on the choice of $\eta$. Therefore, our results are compatible with commonly chosen values  $\eta=\frac{1}{r}$ or its rank stabilized version \cite{kalajdzievski2023rank} $\eta=\frac{1}{\sqrt r}$.
\subsection{The K-FAC design}
The expression of the LoRA Jacobian shows that the initialization matrices $(A_0,B_0)$ affect predictions through the right singular subspace of $J_{f,W_0,X}$. This subspace coincides with the eigen-subspace of $J_{f,W_0,X}^\top J_{f,W_0,X}$ which is the Generalized Gauss-Newton (GGN) approximation of the Hessian of the squared loss w.r.t. $W$. 
Let $h_i$ and $u_i$ the input/output of linear layer $W$ w.r.t. example $x_i$. Assume first no weight sharing, i.e., $h_i\in\mathbb{R}^{d_\mathrm{in}}$ and $u_i\in\mathbb{R}^{d_\mathrm{out}}$ are vectors, then by the chain rule, one has $\nabla_W f^{(c)}(W_0;x_i) = \nabla_{u_i} f^{(c)}(W_0;x_i) h_i^\top=\delta_i^{(c)}h_i^\top$, 
where $\delta_i^{(c)}:=\nabla_{u_i} f^{(c)}(W_0;x_i) $ is a pre-activation derivative. Let $\delta_i=(\delta_i^{(1)} \ \cdots \delta_i^{(C)})\in\mathbb{R}^{d_\mathrm{out}\times C}$, then (see Appendix \ref{app:kfac} for details)
$    J_{f,W_0,X}^\top J_{f,W_0,X}=\sum_{i=1}^n(h_i h_i^\top)\otimes (\delta_i \delta_i^\top)$.  
Since exact GGN computations are typically intractable at scale, K-FAC \cite{martens2015optimizing} introduces a structured Kronecker-factor approximation that enables efficient approximate natural-gradient updates \cite{amari1998natural}, and has proven competitive empirically. 
Subsequent work refines or analyzes these approximations, including EKFAC \cite{george2018fast} and mean-field analyses of Fisher statistics \cite{karakida2019universal}. Regarding LoRA finetuning, a few works have also recently  started incorporating K-FAC designs, which we discuss in Appendix \ref{app:compa_kfac}.  
We adopt this approximation, formalized in the following assumption.

\begin{assumption}
     The GGN approximation of the squared-loss Hessian w.r.t. $W$ satisfies the following \textit{K-FAC} approximation:
    \begin{equation}J_{f,W_0,X}^\top J_{f,W_0,X} = S\otimes T
    \end{equation}
where the definition of $S$ and $T$ depends on whether the layer is weight-shared. \\
(i) Without weight sharing: $S = \frac{1}{n}\sum_{i=1}^nh_i h_i^\top$ and   $T=\sum_{i=1}^n \delta_i\delta_i^\top$.\\
(ii) With weight sharing:  we consider the two variants introduced by \cite{eschenhagen2023kronecker}, 1) \textit{K-FAC-reduce} for non-tokenized outputs and, 2) \textit{K-FAC-expand} for tokenized outputs. We refer the reader to Appendix \ref{app:kfac} for their exact expressions.
\label{assump:kfac}
\end{assumption}

The weight-sharing case requires special care, since both approximations simplify the exact token interactions. More precisely, K-FAC-expand discards off-diagonal cross-token terms, whereas K-FAC-reduce pools across tokens. Eschenhagen et al.~\cite{eschenhagen2023kronecker} establish theoretical exactness of these approximations in simplified settings, and also demonstrate their empirical effectiveness in modern architectures. Modeling richer cross-token structure remains an interesting direction for future work.

\subsection{The whitened gradient and optimal curvature-guided initialization}

To build intuition for solving~\eqref{eq:pred_obj}, it is useful to view prediction alignment as a problem of allocating the limited LoRA rank to the directions that matter most in function space. An effective initialization should therefore preserve the dominant directions through which low-rank parameter updates induce significant changes in the model outputs. If this geometry is ignored, part of the available rank may be wasted on directions that have large parameter-space signal but remain nearly invisible at the level of predictions. This distinction is largely absent from existing gradient-alignment analyses of LoRA, which are typically developed under restrictive linear assumptions. In such settings, parameter perturbations translate directly into output perturbations, and the relevant geometry is therefore much simpler. In deep nonlinear models, however, raw parameter-space gradients can be misleading: directions with large gradient norm may induce only weak functional variation if they are poorly excited by the pretrained model across samples and tokens. This motivates the use of a \emph{whitened gradient}, which incorporates the local geometry of the pretrained model. By suppressing directions that are weakly represented in function space and emphasizing those along which low-rank updates most effectively change predictions, the whitened gradient identifies the subspaces where the LoRA rank is best spent.

\begin{definition}
    Let $r_S=\mathrm{rank}(S), r_T=\mathrm{rank}(T)$, and let thin SVDs $S=U_S D_S U_S^\top$ and $T = U_TD_TU_T^\top$  where $U_S\in\mathbb{R}^{d_\mathrm{in}\times r_S}, D_S\in\mathbb{R}^{r_S\times r_S}$ and $U_T\in\mathbb{R}^{d_\mathrm{out}\times r_T},D_T\in\mathbb{R}^{r_T\times r_T}$. Define the \textit{whitened} gradient as
    \begin{equation}F:=D_T^{-1/2} U_T^\top \nabla_W\mathcal{L}_n(f(W;X))|_{W_0}U_SD_S^{-1/2}\in\mathbb{R}^{r_T\times r_S}.
    \label{eq:F}\end{equation}
    For $r\leq\operatorname{rank}(F)$, let $F_r= U_{F,r}D_{F,r}V_{F,r}^\top,$ denote the best rank-$r$ approximation of $F$. The corresponding full column rank left and right modes are $L_r=U_TD_T^{-1/2}U_{F,r}$ and $R_r=U_SD_S^{-1/2}V_{F,r}$.
\end{definition}
Constructed from the thin SVDs of \(S\) and \(T\), the whitened gradient emerges as the key object in our solution to the prediction-alignment problem. For ease of exposition, we state only the low-rank case $r\leq \operatorname{rank}(F)$ and refer the reader to Theorem \ref{thm:full_opt_align} in Appendix for the full version.

\begin{theorem}
(i) If Assumption \ref{assump:kfac} holds, then for any $(A_0,B_0)$,
    \begin{align}
        \Vert (\Pi_{J_{f,W_0,X}} - \Pi_{J_{f,W_0,A_0,B_0,X}})Z \Vert_2 &= \Vert (I_{r_T}-\Pi_{D_T^{1/2}U_T^\top B_0})F(I_{r_S}-\Pi_{D_S^{1/2} U_S^\top A_0^\top})\Vert_F\\
        &\geq \big(\sum_{i\geq 2r+1}s_i^2(F)\big)^{1/2}.
        \label{eq:lb}
    \end{align}
(ii) Assume $r\leq \operatorname{rank}(F)$. Let $\mathcal I,\mathcal J$ be index sets of cardinality $r$ and define $U_{F,\mathcal{J}}$ and $V_{F,\mathcal{I}}$ the left and right subsets of singular vectors of $F$ indexed by $\mathcal{J}$ and $\mathcal{I}$, respectively. Any       $$\begin{cases}
        A_0^\top \in \mathcal{G}_{A,{\cal I}}=\operatorname{range}( U_SD_S^{-1/2}V_{F,\mathcal I} Q_A: Q_A\in\mathbb{GL}_r) \\
        B_0 \in \mathcal{G}_{B,{\cal J}}=\operatorname{range}(U_TD_T^{-1/2}U_{F,\mathcal J} Q_B:Q_B\in\mathbb{GL}_r) 
    \end{cases}$$
satisfies $ \displaystyle \Vert (\Pi_{J_{f,W_0,X}} - \Pi_{J_{f,W_0,A_0,B_0,X}})Z \Vert_2=\big(\sum_{i\in(\mathcal I\cup \mathcal J)^c} s_{i}^2(F)\big)^{1/2}.$ \\
Hence, if we select ${\cal I}\cup {\cal J}=[\min(\operatorname{rank}(F),2r)]$, in view of (\ref{eq:lb}), $(A_0, B_0)$ solves the alignment problem (\ref{eq:pred_obj}).
\medskip
 \label{thm:opt_align}
\end{theorem}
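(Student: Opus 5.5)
Starting from Corollary~\ref{corr:diff_proj}, I would write $J:=J_{f,W_0,X}$ and $J_{\mathrm{LoRA}}:=J(I_{d_\mathrm{in}}\otimes B_0\;\; A_0^\top\otimes I_{d_\mathrm{out}})$. Since $\operatorname{range}(J_{\mathrm{LoRA}})\subseteq\operatorname{range}(J)$, the difference $\Pi_J-\Pi_{J_{\mathrm{LoRA}}}$ is itself an orthogonal projection. The quantity to compute is therefore the distance from $\Pi_J Z$ to $\operatorname{range}(J_{\mathrm{LoRA}})$. I would move this problem to parameter space using the metric $J^\top J=S\otimes T$ from Assumption~\ref{assump:kfac}. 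For any $\operatorname{vec}(\Delta)$ we have $\|J\operatorname{vec}(\Delta)\|_2^2=\operatorname{vec}(\Delta)^\top(S\otimes T)\operatorname{vec}(\Delta)=\|D_T^{1/2}U_T^\top\Delta U_S D_S^{1/2}\|_F^2$. With the change of variables $M:=D_T^{1/2}U_T^\top\Delta U_SD_S^{1/2}\in\mathbb{R}^{r_T\times r_S}$, the map $J$ becomes an isometry from this $M$-space onto $\operatorname{range}(J)$. The next step is to express $\Pi_J Z$ in these coordinates. The gradient is $\nabla_W\mathcal{L}_n|_{W_0}=-\operatorname{mat}(J^\top Z)$, and $J^\top Z=J^\top\Pi_JZ$. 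I would take $J=Q\,(D_S^{1/2}U_S^\top\otimes D_T^{1/2}U_T^\top)$ with $Q$ having orthonormal columns; such a $Q$ exists because $J^\top J=S\otimes T$. It follows that $\Pi_J Z$ corresponds, up to sign, to $M=F$, and consequently $\|\Pi_JZ\|_2=\|F\|_F$. I expect this identification, including the convention on the weight-sharing variants, to be the main technical point. The key facts used are $J^\dagger=(J^\top J)^\dagger J^\top$ and the Kronecker pseudo-inverse identity $(S\otimes T)^\dagger=S^\dagger\otimes T^\dagger$.

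Under this change of variables, the LoRA range $\{\Delta B A_0+B_0\Delta A\}$ maps to $\{\tilde B\,\tilde A^\top{}'+\ldots\}$. Concretely, with $\tilde B_0:=D_T^{1/2}U_T^\top B_0$ and $\tilde A_0:=D_S^{1/2}U_S^\top A_0^\top$, the image is the set of matrices $\{X\tilde A_0^\top+\tilde B_0 Y\}$ with $X,Y$ arbitrary. Here I must check that $U_T^\top\Delta B$ ranges over all of $\mathbb{R}^{r_T\times r}$, and this holds. The orthogonal complement of this subspace in the Frobenius inner product is $\{N:\ N\tilde A_0=0,\ \tilde B_0^\top N=0\}$. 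The projection onto that complement is $N\mapsto(I-\Pi_{\tilde B_0})N(I-\Pi_{\tilde A_0})$. This gives the equality in (i). For the lower bound, I would note that $F-(I-P)F(I-P')=\Pi_{\tilde B_0}F+(I-\Pi_{\tilde B_0})F\Pi_{\tilde A_0}$, which has rank at most $2r$. Eckart--Young then yields $\ge(\sum_{i>2r}s_i^2(F))^{1/2}$.

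For (ii), I would substitute the proposed $A_0,B_0$. Then $\tilde A_0=D_S^{1/2}U_S^\top U_SD_S^{-1/2}V_{F,\mathcal I}Q_A=V_{F,\mathcal I}Q_A$, so $\Pi_{\tilde A_0}=V_{F,\mathcal I}V_{F,\mathcal I}^\top$, and similarly $\Pi_{\tilde B_0}=U_{F,\mathcal J}U_{F,\mathcal J}^\top$. Writing $F=\sum_k s_k u_kv_k^\top$, the expression $(I-\Pi_{\tilde B_0})F(I-\Pi_{\tilde A_0})$ keeps exactly the terms with $k\notin\mathcal J$ and $k\notin\mathcal I$. Its squared Frobenius norm is $\sum_{k\in(\mathcal I\cup\mathcal J)^c}s_k^2(F)$. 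If $\mathcal I\cup\mathcal J=[\min(\operatorname{rank}F,2r)]$, this matches the bound in (\ref{eq:lb}), so optimality follows.
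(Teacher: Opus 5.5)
Your proposal is correct and follows essentially the same route as the paper: the paper factors $J=U(D_S^{1/2}\otimes D_T^{1/2})(U_S\otimes U_T)^\top$ (your $Q$), which reduces $\Pi_J-\Pi_{J_r}$ to $U(I-PP^\dagger)U^\top$, and then identifies $\operatorname{col}(P)^\perp$ as a tensor product of two complements; this is exactly your Frobenius-space description of the complement $\{N: N\tilde A_0=0,\ \tilde B_0^\top N=0\}$ with projector $N\mapsto(I-\Pi_{\tilde B_0})N(I-\Pi_{\tilde A_0})$. The rank-$2r$ Eckart--Young step and the substitution argument for (ii) are the same as in the paper, apart from the harmless garbled expression in your first description of the LoRA image.
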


\begin{remark}\label{rem:F} 

Observe that: $\forall i=1,\ldots,\operatorname{rank}(F),$
\begin{equation}\frac{s_i(\nabla_W\mathcal{L}_n(f(W;X))|_{W_0})}{\sqrt{s_1(J_{f,W_0,X}^\top J_{f,W_0,X})}}
    \ \leq\ 
    s_i(F)
    \ \leq\ 
    \frac{s_i(\nabla_W\mathcal{L}_n(f(W;X))|_{W_0})}{\sqrt{s_{\min}^{+}(J^\top_{f,W_0,X} J_{f,W_0,X})}}.
    \label{eq:spectrum_f}
\end{equation} Together with Theorem \ref{thm:opt_align}, this suggests that LoRA fine-tuning should be applied to layers whose corresponding NTK has a large minimum eigenvalue. This connection opens up new directions for designing theoretically grounded rank-selection rules, by leveraging existing results on the NTK spectrum of deep neural networks \cite{murray2022characterizing,nguyen2021tight,bombari2022memorization}. We leave this investigation for future work.
\end{remark}

\section{The CG-LoRA algorithm}\label{sec:algo}

In the preceding analysis, we identified the subspaces in which \(A_0\) and \(B_0\) should lie in order to optimize the prediction alignment. Several degree of freedom remain: we can select (i) ${\cal I}$ and ${\cal J}$, (ii) the relative scaling of the columns of \(A_0\) and \(B_0\), governed by the choice of \((Q_A,Q_B)\), and (iii) the global scaling of \(\|A_0\|\) and \(\|B_0\|\) under an appropriate norm. We discuss these algorithmic choices in this section and then provide a complete description of CG-LoRA. The corresponding pseudo-code is given in Appendix~\ref{app:algo}, Algorithms~\ref{alg:cg-lora}, \ref{alg:subspace}, and~\ref{alg:bal_svd}.

\subsection{Design choices}

For algorithm design, one may complement the prediction-alignment objective with alignment of the initial gradient updates. Such first-step alignment can also control subsequent updates under additional assumptions, for instance when the true model is linear \cite{zhang2025lora}. This can be achieved by first choosing \({\cal I}={\cal J}=[r]\) (potentially incurring a mild prediction-alignment cost controlled by Theorem~\ref{thm:opt_align}), and then selecting \(Q_A\) and \(Q_B\) according to the following proposition.


\begin{proposition}
The solutions $(Q_A,Q_B)$ of $\min_{(Q_A,Q_B)\in GL_r}\Vert B_0A_0+\nabla_W\mathcal{L}_n(f(W;X))|_{W_0}\Vert_F$, where $A_0^\top=R_rQ_A$ and $B_0=L_rQ_B$ (as prescribed by Theorem \ref{thm:opt_align} with $\mathcal{I}=\mathcal{J}=[r]$), are defined through the following property: $Q:=Q_BQ_A^\top$ is invertible and satisfies 
        \begin{equation}
            L_rQR_r^\top=M:=-\Pi_{L_r} \nabla_W\mathcal{L}_n(f(W;X))|_{W_0}\Pi_{R_r}.
            \label{eq:q_val}
        \end{equation}
        In particular, since $L_r$ and $R_r$ are full column rank, there is a unique invertible minimizer \\  
        $Q^\star = -L_r^\dagger \nabla_W\mathcal{L}_n(f(W;X))|_{W_0}(R_r^\top)^\dagger=-(U_{F,r}^\top D_T^{-1}U_{F,r})^{-1}D_{F,r}(V_{F,r}^\top D_S^{-1}V_{F,r})^{-1}.$     
 \label{prop:q_star}
    \end{proposition}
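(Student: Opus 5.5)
Write $G:=\nabla_W\mathcal{L}_n(f(W;X))|_{W_0}$. With $A_0^\top=R_rQ_A$ and $B_0=L_rQ_B$ we have $B_0A_0=L_rQ_BQ_A^\top R_r^\top=L_rQR_r^\top$, where $Q:=Q_BQ_A^\top$. The objective therefore depends on $(Q_A,Q_B)$ only through $Q$. Moreover, $Q$ ranges over exactly $\mathbb{GL}_r$: any invertible $Q$ is attained, for example with $Q_A=I_r$ and $Q_B=Q$, and conversely a product of invertible matrices is invertible.

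The plan is to first solve the relaxed problem $\min_{Q\in\mathbb{R}^{r\times r}}\Vert L_rQR_r^\top+G\Vert_F$, and then check that its solution is invertible.

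\textbf{Step 1: the relaxed problem.} The set $\{L_rQR_r^\top : Q\in\mathbb{R}^{r\times r}\}$ is the linear subspace of matrices $N$ with $N=\Pi_{L_r}N\Pi_{R_r}$. The map $N\mapsto \Pi_{L_r}N\Pi_{R_r}$ is the orthogonal projection onto this subspace for the Frobenius inner product, since it is idempotent and self-adjoint. Hence the unique minimizing matrix is $L_rQR_r^\top=-\Pi_{L_r}G\Pi_{R_r}=M$, which is \eqref{eq:q_val}. Because $L_r$ and $R_r$ have full column rank, $L_r^\dagger L_r=I_r$ and $R_r^\top(R_r^\top)^\dagger=I_r$. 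So $Q\mapsto L_rQR_r^\top$ is injective, and the unique solution is
\[
Q^\star=L_r^\dagger M (R_r^\top)^\dagger=-L_r^\dagger G(R_r^\top)^\dagger,
\]
using $L_r^\dagger\Pi_{L_r}=L_r^\dagger$ and $\Pi_{R_r}(R_r^\top)^\dagger=(R_r^\top)^\dagger$.

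\textbf{Step 2: the explicit form of $Q^\star$.} I will write $L_r^\dagger=(L_r^\top L_r)^{-1}L_r^\top$ and note that $L_r^\top L_r=U_{F,r}^\top D_T^{-1}U_{F,r}$, using $U_T^\top U_T=I$. Similarly, $(R_r^\top)^\dagger=R_r(R_r^\top R_r)^{-1}$ with $R_r^\top R_r=V_{F,r}^\top D_S^{-1}V_{F,r}$. The middle factor is
\[
L_r^\top G R_r=U_{F,r}^\top D_T^{-1/2}U_T^\top G U_S D_S^{-1/2}V_{F,r}=U_{F,r}^\top F V_{F,r}=D_{F,r},
\]
by the definition \eqref{eq:F} of $F$ and the orthonormality of the singular vectors of $F$. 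This gives the stated formula for $Q^\star$.

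\textbf{Step 3: invertibility.} This is the step I expect to need the most care, though it is short. The Gram matrices $U_{F,r}^\top D_T^{-1}U_{F,r}$ and $V_{F,r}^\top D_S^{-1}V_{F,r}$ are positive definite, because $D_T^{-1}$ and $D_S^{-1}$ are positive definite and $U_{F,r}$, $V_{F,r}$ have orthonormal columns. Also $D_{F,r}$ is invertible since $r\le\operatorname{rank}(F)$. Hence $Q^\star$ is invertible and lies in $\mathbb{GL}_r$.

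Consequently the minimum over the restricted set $\mathbb{GL}_r$ equals the unconstrained minimum. The minimizers $(Q_A,Q_B)$ are therefore exactly the pairs with $Q_BQ_A^\top=Q^\star$, equivalently those satisfying \eqref{eq:q_val}.
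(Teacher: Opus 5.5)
Your proof is correct and follows the paper's argument. Both proofs identify $-\Pi_{L_r}\nabla\Pi_{R_r}$ as the Frobenius-orthogonal projection of $-\nabla$ onto $\{L_rQR_r^\top\}$, get uniqueness from injectivity of $Q\mapsto L_rQR_r^\top$, and read off invertibility from the explicit Gram-matrix formula for $Q^\star$; you are somewhat more careful than the paper in making explicit the relaxation from $\mathbb{GL}_r$ to $\mathbb{R}^{r\times r}$ and the positive definiteness of the Gram factors, while the paper checks the projection property by verifying orthogonality of the residual rather than via idempotence and self-adjointness.
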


There are still infinitely many pairs $(Q_A, Q_B)$ satisfying $Q_B Q_A^\top = Q^\star$. To remove this non-uniqueness, we restrict our attention to balanced realizations.

    \begin{definition}[Balanced realizations]
        Let $M=-\Pi_{L_r} \nabla_W\mathcal{L}_n(f(W;X))|_{W_0}\Pi_{R_r}$ with thin SVD $M=U_MD_MV_M^\top$. Then $B_0=U_MD_M^{1/2}, A_0^\top = V_MD_M^{1/2}$
        is called a \textit{balanced} realization. 
    \end{definition}
So far, we have provided a balanced choice of $A_0 \in \mathcal{G}_{A,[r]}$ and $B_0 \in \mathcal{G}_{B,[r]}$. However, prior work \cite{gawang2024lora, li2025beyond} has shown that an additional global scaling can improve forward and backward stability (see Theorem~3.2 in~\cite{gawang2024lora}). In conjunction with $\eta = \frac{1}{\sqrt{r}}$, we adopt the following scaling, inspired by these works:
$
\|A_0\|_2 = \|B_0\|_2 = d_{\mathrm{out}}^{1/4}/\gamma,
$
where $\gamma$ is a hyperparameter. 

\subsection{Algorithm}

Based on our analysis and design choices, we obtain the CG-LoRA algorithm, whose pseudo-code is given in Algorithm~\ref{alg:cg-lora}. CG-LoRA consists of three main steps: (i) computing the curvature subspaces, as described in Algorithm~\ref{alg:subspace}; (ii) computing the whitened gradient \(F\); and (iii) computing a balanced realization via Algorithm~\ref{alg:bal_svd}. Importantly, none of these steps requires forming or storing any \((d,d)\) matrix, for \(d \in \{d_{\mathrm{in}},d_{\mathrm{out}}\}\). This contrasts with LoRA-GA/One, which stores full loss gradients and, in practice, often must offload them to the CPU to keep GPU memory usage tractable. In step (iii), Algorithm~\ref{alg:bal_svd} returns $\operatorname{BalancedSVD}(L_r,R_r,D_{F,r})$, which provides a balanced realization of the low-rank initialization. This property is proved in Appendix \ref{app:algo}. CG-LoRA is summarized by:
    
\begin{equation}\begin{cases}
        (A_0,B_0) \gets \operatorname{Balanced SVD}(L_r,R_r, D_{F,r}), \\
        (A_0,B_0) \gets (\frac{d_\mathrm{out}^{1/4}}{\gamma}A_0,\frac{d_\mathrm{out}^{1/4}}{\gamma}B_0 ).
\end{cases}  \tag{CG-LoRA}\end{equation} 
Its alignment guarantees are provided by Theorem \ref{thm:opt_align} and Proposition \ref{prop:q_star}.


In CG-LoRA, only pre-activation derivatives are required. In Appendix~\ref{app:hutchinson}, we show how to compute them either exactly or approximately using a Hutchinson trace estimator. To reduce memory usage, we use a hooking mechanism to run Algorithm~\ref{alg:cg-lora} for each layer, discard all intermediate buffers, and then proceed to the next layer. This keeps the memory footprint minimal.

\section{Experiments on Natural Language Tasks}\label{sec:exp}
 \subsection{Implementation details}

We evaluate LoRA algorithms by fine-tuning LLaMa 2-7B \cite{touvron2023llama}
,T5-base \cite{raffel2020exploring} 
and RoBERTa-base \cite{liu2019roberta}. 
We compare 1) CG-LoRA, 2) LoRA-One, 3) LoRA-GA, 4) LoRA+, 5) rsLoRA,  
and consider a small-budget LoRA setting in which only the query--value projection matrices are adapted. This keeps the number of trainable parameters small: adapting all linear layers can require up to roughly five times more parameters and twice the training time for LLaMa 2-7B 
, while query--value adaptation still provides reasonably strong adaptation capacity; see Section~7.1 of \cite{hu2022lora}. 

Although our main analysis is developed for the squared loss, the definition of \(F\) in (\ref{eq:F}) can be heuristically applied to cross-entropy, as done in prior LoRA work \cite{zhang2025lora}. In this work, we go further and  instead treat the cross-entropy case explicitly in Appendix~\ref{app:cross_ent}. This leads to the corrected Algorithm~\ref{alg:cg-lora_ce}, with theoretical guarantees in Theorem~\ref{thm:cg_lora_multi}; this is the version used in all experiments.

For the K-FAC design, although only RoBERTa-base yields non-tokenized outputs (head classifier), we will use the \textit{reduce} settings for all three models as \cite{eschenhagen2023kronecker} showed that both settings \textit{reduce} and \textit{expand} have close performance while the former is computationally less intensive.
We also empirically compare two initialization choices: starting from the original pretrained weights \(W_0\), denoted by CG-LoRA (no shift), and starting from the shifted weights \(W_0+\eta B_0A_0\), denoted by CG-LoRA (shift). To ensure a fair comparison, all methods are run under identical experimental conditions, rather than reusing results reported in prior work. Implementation details and an ablation study are provided in Appendices~\ref{app:exp} and~\ref{app:abla}.

\subsection{Natural Language Understanding}

We fine-tune on several GLUE \cite{wang2018glue} classification tasks and evaluate accuracy on a separate test set. For T5-base, class scores (logits) are computed by teacher forcing each candidate class. To reduce randomness, we repeat each experiment with three different random seeds. Due to space constraints, the T5-base results are reported in Appendix~\ref{app:results}.


\begin{table}[t]
\small
\centering
\caption{Test accuracy of fine-tuned RoBERTa-base. LoRA rank $r=8$. Query/Value LoRA layers.}
\vspace{1em}
\label{tab:glue-lora}
\adjustbox{width=0.9\linewidth}{
\begin{tabular}{lccccc}
\toprule
& \textbf{MNLI} & \textbf{SST-2} & \textbf{CoLA} & \textbf{QNLI} & \textbf{MRPC}  \\
\midrule
Full
& 86.87$_{\pm0.24}$
& 93.99 $_{\pm 0.30}$
& 80.88 $_{\pm0.55}$ 
& 92.01$_{\pm 0.12}$
& $87.25_{\pm 0.80}$ 
 \\
rsLoRA
& 84.86$_{\pm0.02}$
& 92.88$_{\pm0.18}$
& $78.49_{\pm 0.58}$
& $91.21_{\pm 0.08}$
& $78.10_{\pm0.92}$
 \\

LoRA+
& $85.28_{\pm0.14}$
& $92.50_{\pm 0.38}$
& 79.32$_{\pm 0.27}$ 
& 89.69$_{\pm 1.04}$
& $82.84_{\pm 2.30}$
\\
\midrule

 
LoRA-GA
& 84.51$_{\pm 0.23}$
& $92.66_{\pm0.24}$
& 69.12$_{\pm0.00}$ 
& 89.75$_{\pm 0.33}$
& $72.30_{\pm 3.41}$
 \\
 LoRA-One
& 85.11$_{\pm0.21}$
& $78.89_{\pm 2.49}$
& $72.70_{\pm5.06}$ 
& 63.74$_{\pm19.44}$
& $74.67_{\pm5.67}$\\

\midrule

CG-LoRA (no shift) 
& 84.49$_{\pm 0.24}$
& \textbf{93.27}$_{\pm0.46}$
& \textbf{80.76}$_{\pm0.45}$ 
& \textbf{91.38}$_{\pm0.21}$
& \textbf{85.04}$_{\pm1.70}$ \\
CG-LoRA (shift)
& \textbf{85.57}$_{\pm0.11}$
& 92.73$_{\pm0.19}$
& $77.56_{\pm1.02}$ 
& \textbf{91.42}$_{\pm0.10}$
&  78.75$_{\pm 2.37}$\\
\bottomrule
\end{tabular}}
{\vspace{-0.5cm}}
\end{table}

For RoBERTa-base, the results in Table~\ref{tab:glue-lora} show that CG-LoRA outperforms the other methods. On RoBERTa-base, the early gradients are noisy because the classification head is randomly initialized and must also be fine-tuned. Since both LoRA-GA and LoRA-One initialize their adapters directly from the SVD of the loss gradient, their choices of \(A_0\) and \(B_0\) can be affected by this noise. In particular, on several datasets, these methods do not converge within a single training pass. Kaiming-based initialization methods, such as rsLoRA and LoRA+, are less sensitive to this issue, but they do not exploit information from the pretrained model. In contrast, CG-LoRA mitigates gradient noise by whitening the gradient with function-space curvature information, while still leveraging signal from the pretrained model. As a result, CG-LoRA (no shift) exhibits improved convergence, as shown in Figure~\ref{fig:roberta_cola_loss} and in the additional results in Appendix~\ref{app:results}, particularly on small datasets such as CoLA and MRPC. On larger datasets, CG-LoRA (no shift) and CG-LoRA (shift) perform comparably. 

\subsection{Natural Language Generation}

We consider three fine-tuning tasks: 
(i) fine-tuning on a 100k subset sampled from MetaMathQA \cite{yu2023metamath} and evaluating mathematical reasoning on GSM8K \cite{cobbe2021training}; 
(ii) fine-tuning on a 52k subset sampled from WizardLM \cite{xu2023wizardlm} and evaluating broad question understanding on MMLU \cite{hendrycks2020measuring}; and 
(iii) fine-tuning on a 100k subset sampled from Code-Feedback \cite{zheng2024opencodeinterpreter} and evaluating code generation on HumanEval \cite{chen2021evaluating}. We follow the evaluation protocol of LoRA-GA. The corresponding metrics are: 
(i) exact-match accuracy on GSM8K, comparing the generated numerical answer with the test answer ending in ``\#\#\#\# \{numerical value\}''; 
(ii) multiple-choice accuracy on MMLU; and 
(iii) pass@1 on HumanEval.

\begin{wrapfigure}{r}{0.65\textwidth}
\centering
\small
\centering
\vspace{-1em}
\captionof{table}{Test accuracy of finetuned LLaMa 2-7B. LoRA rank $r=8$. Query/Value LoRA layers.}
\label{tab:llama}
\begin{tabular}{lccc}
\toprule
 &\textbf{GSM8K} & \textbf{MMLU} & \textbf{HumanEval} 
\\ \midrule
rsLoRA
 & 
$50.64_{\pm 0.16}$& $45.01_{\pm 0.45}$ 
&  $21.34_{\pm 0.17}
$ \\ \midrule
 LoRA-GA
 &$43.66_{\pm 0.78}$
& $25.70_{\pm 1.34}$
&  $20.73_{\pm 0.21}$
\\
 LoRA-One
 &$52.16_{\pm 0.32}$
& 45.20$_{\pm 0.15}$
& $20.12_{\pm 0.38}$
\\ \midrule
CG-LoRA (shift)
 &\textbf{53.44}$_{\pm 0.24}$
& 43.65$_{\pm 0.36}$
&  20.73$_{\pm 0.29}$

\\
CG-LoRA (no shift)
 & \textbf{56.10}$_{\pm 0.21}$
& \textbf{45.50}$_{\pm 0.22}$
&  \textbf{23.17}$_{\pm 0.18}$
\\
\end{tabular}
{\vspace{-0.2cm}}
\end{wrapfigure}

Results are reported in Table~\ref{tab:llama}. Across all three tasks, CG-LoRA outperforms all baselines by a significant margin. Notably, it achieves performance close to previously reported results obtained by adapting {\it all} linear layers, while using only a fraction of the trainable parameters. We further analyze the behavior of the different LoRA methods through their training losses and loss-gradient norms in Figures~\ref{fig:llama_metamath}, \ref{fig:llama_code}, and~\ref{fig:llama_wizard} in Appendix~\ref{app:exp}. All methods appear to fit the training data, reaching similar final training losses. However, LoRA-GA and LoRA-One exhibit noisier gradient dynamics, whereas CG-LoRA shows a smoother decrease in gradient norm. This suggests that, on complex reasoning tasks, our initialization better exploits the adaptation capacity of the pretrained model, in line with our theoretical insights.

\subsection{Computational cost and memory footprint}

We empirically demonstrate the low overhead of CG-LoRA. We consider the most demanding setting, in which LoRA is applied to all linear layers. As shown in Table~\ref{tab:cost}, the peak GPU memory allocation of CG-LoRA is lower than that required by the fine-tuning process itself. Thus, CG-LoRA introduces no additional peak-memory overhead relative to fine-tuning. In terms of runtime, our method is also competitive, especially for larger models such as LLaMA~2-7B, where it is significantly faster than LoRA-GA/One. A complementary FLOP analysis is provided in Table~\ref{tab:flops} in Appendix~\ref{app:cost}.

\begin{table}[h!]
\small
\vspace{-0.3cm}
\centering
\caption{Cost on LLaMa 2-7B. All-linear LoRA layers. Measured on a single NVIDIA A100. }
    \begin{tabular}{ccccc }
    \toprule
          \makecell{LoRA training \\ (memory)}  & \makecell{CG-LoRA \\ (memory)} & \makecell{CG-LoRA \\ (init time)} & \makecell{LoRA-GA/One \\(memory)} & \makecell{LoRA-GA/One \\ (init time) }  
       \\
       
       \midrule
       23 GiB & 17.7GiB & 20s & 18.8 GiB & 79s
    \end{tabular}
{\vspace{-0.5cm}}
    \label{tab:cost}
    \end{table}


\section{Conclusion}

In this work, we introduced the prediction alignment problem for parameter-efficient fine-tuning, shifting the focus from aligning parameter updates to aligning model outputs. We showed that this objective naturally leads to a curvature-aware, second-order formulation, and proposed CG-LoRA, an efficient initialization method based on curvature-whitened gradients that avoids explicit construction of large curvature matrices. Although our theory is developed in the NTK setting, the resulting design principle remains effective in practical finite-model regimes, suggesting that function-space alignment provides a useful guide for LoRA initialization beyond first-order gradient matching.

Several directions remain open. On the practical side, it would be valuable to investigate richer curvature approximations that better capture cross-token interactions, and to evaluate CG-LoRA across a wider range of architectures, tasks, and modalities. On the theoretical side, an important next step is to understand finite-time fine-tuning dynamics beyond the NTK approximation, and to connect rank selection more directly to the spectrum of the underlying neural tangent kernel.

\newpage
\bibliographystyle{plain}
\bibliography{refs}

@InProceedings{jang2024lora-ntk,
  title = 	 {{L}o{RA} Training in the {NTK} Regime has No Spurious Local Minima},
  author =       {Jang, Uijeong and Lee, Jason D. and Ryu, Ernest K.},
  booktitle = 	 {Proceedings of the 41st International Conference on Machine Learning},
  pages = 	 {21306--21328},
  year = 	 {2024},
  editor = 	 {Salakhutdinov, Ruslan and Kolter, Zico and Heller, Katherine and Weller, Adrian and Oliver, Nuria and Scarlett, Jonathan and Berkenkamp, Felix},
  volume = 	 {235},
  series = 	 {Proceedings of Machine Learning Research},
  month = 	 {21--27 Jul},
  publisher =    {PMLR},
  url = 	 {https://proceedings.mlr.press/v235/jang24d.html},
}

@inproceedings{kim2025lora-ntk,
author = {Kim, Junsu and Kim, Jaeyeon and Ryu, Ernest K.},
title = {{LoRA training provably converges to a low-rank global minimum or it fails loudly (but it probably won't fail)}},
year = {2025},
publisher = {JMLR.org},
booktitle = {Proceedings of the 42nd International Conference on Machine Learning},
articleno = {1187},
numpages = {24},
location = {Vancouver, Canada},
series = {ICML'25}
}

@article{han2024parameter,
  title={Parameter-efficient fine-tuning for large models: A comprehensive survey},
  author={Han, Zeyu and Gao, Chao and Liu, Jinyang and Zhang, Jeff and Zhang, Sai Qian},
  journal={arXiv preprint arXiv:2403.14608},
  year={2024}
}

@inproceedings{houlsby2019parameter,
  title={Parameter-efficient transfer learning for NLP},
  author={Houlsby, Neil and Giurgiu, Andrei and Jastrzebski, Stanislaw and Morrone, Bruna and De Laroussilhe, Quentin and Gesmundo, Andrea and Attariyan, Mona and Gelly, Sylvain},
  booktitle={International conference on machine learning},
  pages={2790--2799},
  year={2019},
  organization={PMLR}
}

@article{dayi2024gradient,
  title={Gradient dynamics for low-rank fine-tuning beyond kernels},
  author={Dayi, Arif Kerem and Chen, Sitan},
  journal={arXiv preprint arXiv:2411.15385},
  year={2024}
}

@inproceedings{kopiczko2023vera,
  title={{VeRA: Vector-based Random Matrix Adaptation}},
  author={Kopiczko, Dawid Jan and Blankevoort, Tijmen and Asano, Yuki M},
  booktitle={The Twelfth International Conference on Learning Representations},
  year={2024}
}

@inproceedings{li2024crucial,
  title={On the Crucial Role of Initialization for Matrix Factorization},
  author={Li, Bingcong and Zhang, Liang and Mokhtari, Aryan and He, Niao},
  booktitle={The Thirteenth International Conference on Learning Representations},
  year={2025}
}

@inproceedings{liu2025optimization,
  title={On the optimization landscape of low rank adaptation methods for large language models},
  author={Liu, Xu-Hui and Du, Yali and Wang, Jun and Yu, Yang},
  booktitle={The Thirteenth International Conference on Learning Representations},
  year={2025}
}

@inproceedings{zeng2023expressive,
  title={The Expressive Power of Low-Rank Adaptation},
  author={Zeng, Yuchen and Lee, Kangwook},
  booktitle={The Twelfth International Conference on Learning Representations},
  year={2024}
}

@article{gawang2024lora,
  title={{LoRA-GA: Low-rank adaptation with gradient approximation}},
  author={Wang, Shaowen and Yu, Linxi and Li, Jian},
  journal={Advances in Neural Information Processing Systems},
  volume={37},
  pages={54905--54931},
  year={2024}
}

@article{hu2022lora,
  title={{LoRA: Low-rank adaptation of large language models.}},
  author={Hu, Edward J and Shen, Yelong and Wallis, Phillip and Allen-Zhu, Zeyuan and Li, Yuanzhi and Wang, Shean and Wang, Liang and Chen, Weizhu and others},
  journal={ICLR},
  volume={1},
  number={2},
  pages={3},
  year={2022}
}

@article{george2018fast,
  title={Fast approximate natural gradient descent in a kronecker factored eigenbasis},
  author={George, Thomas and Laurent, C{\'e}sar and Bouthillier, Xavier and Ballas, Nicolas and Vincent, Pascal},
  journal={Advances in neural information processing systems},
  volume={31},
  year={2018}
}

@article{saha2026grit,
  title={{GRIT--Geometry-Aware PEFT with K-FACPreconditioning, Fisher-Guided Reprojection, andDynamic Rank Adaptation}},
  author={Saha, Pritish and Rajbangshi, Chandrav and Goyal, Rudra and Goyal, Mohit and Deo, Anurag and Roy, Biswajit and Singh, Ningthoujam Dhanachandra and Goswami, Raxit and Das, Amitava},
  journal={arXiv preprint arXiv:2601.00231},
  year={2026}
}

@article{amari1998natural,
  title={Natural gradient works efficiently in learning},
  author={Amari, Shun-Ichi},
  journal={Neural computation},
  volume={10},
  number={2},
  pages={251--276},
  year={1998},
  publisher={MIT Press}
}

@article{yang2023bayesian,
  title={Bayesian low-rank adaptation for large language models},
  author={Yang, Adam X and Robeyns, Maxime and Wang, Xi and Aitchison, Laurence},
  journal={arXiv preprint arXiv:2308.13111},
  year={2023}
}

@article{zhang2025lorada,
  title={{LoRA-DA: Data-Aware Initialization for Low-Rank Adaptation via Asymptotic Analysis}},
  author={Zhang, Qingyue and Chu, Chang and Peng, Tianren and Li, Qi and Luo, Xiangyang and Jiang, Zhihao and Huang, Shao-Lun},
  journal={arXiv preprint arXiv:2510.24561},
  year={2025}
}

@inproceedings{zhang2025lora,
  title={{LoRA-One: one-step full gradient could suffice for fine-tuning large language models, provably and efficiently}},
  author={Zhang, Yuanhe and Liu, Fanghui and Chen, Yudong},
  booktitle={Proceedings of the 42nd International Conference on Machine Learning},
  volume={267},
  year={2025},
  organization={PMLR}
}

@inproceedings{afzallinearization,
  title={Linearization Explains Fine-Tuning in Large Language Models},
  author={Afzal, Zahra Rahimi and Esmaeilbeig, Tara and Soltanalian, Mojtaba and Ohannessian, Mesrob I},
  booktitle={The Thirty-ninth Annual Conference on Neural Information Processing Systems},
  year={2025}
}

@article{oymak2019generalization,
  title={Generalization guarantees for neural networks via harnessing the low-rank structure of the jacobian},
  author={Oymak, Samet and Fabian, Zalan and Li, Mingchen and Soltanolkotabi, Mahdi},
  journal={arXiv preprint arXiv:1906.05392},
  year={2019}
}

@inproceedings{hayou2024lora+,
  title={{LoRA+: Efficient Low Rank Adaptation of Large Models}},
  author={Hayou, Soufiane and Ghosh, Nikhil and Yu, Bin},
  booktitle={International Conference on Machine Learning},
  pages={17783--17806},
  year={2024},
  organization={PMLR}
}

@article{meng2024pissa,
  title={{PiSSA: Principal singular values and singular vectors adaptation of large language models}},
  author={Meng, Fanxu and Wang, Zhaohui and Zhang, Muhan},
  journal={Advances in Neural Information Processing Systems},
  volume={37},
  pages={121038--121072},
  year={2024}
}

@inproceedings{malladi2023kernel,
  title={A kernel-based view of language model fine-tuning},
  author={Malladi, Sadhika and Wettig, Alexander and Yu, Dingli and Chen, Danqi and Arora, Sanjeev},
  booktitle={International Conference on Machine Learning},
  pages={23610--23641},
  year={2023},
  organization={PMLR}
}

@inproceedings{murray2022characterizing,
  title={{Characterizing the Spectrum of the NTK via a Power Series Expansion}},
  author={Murray, Michael and Jin, Hui and Bowman, Benjamin and Montufar, Guido},
  booktitle={International Conference on Learning Representations},
  year={2023}
}

@article{chen2025lora,
  title={{CE-LoRA: Computation-Efficient LoRA Fine-Tuning for Language Models}},
  author={Chen, Guanduo and He, Yutong and Hu, Yipeng and Yuan, Kun and Yuan, Binhang},
  journal={arXiv preprint arXiv:2502.01378},
  year={2025}
}

@inproceedings{aghajanyan2021intrinsic,
  title={Intrinsic dimensionality explains the effectiveness of language model fine-tuning},
  author={Aghajanyan, Armen and Gupta, Sonal and Zettlemoyer, Luke},
  booktitle={Proceedings of the 59th annual meeting of the association for computational linguistics and the 11th international joint conference on natural language processing (volume 1: long papers)},
  pages={7319--7328},
  year={2021}
}

@article{gur2018gradient,
  title={Gradient descent happens in a tiny subspace},
  author={Gur-Ari, Guy and Roberts, Daniel A and Dyer, Ethan},
  journal={arXiv preprint arXiv:1812.04754},
  year={2018}
}

@article{zhang2024spectral,
  title={Spectral Adapter: Fine-Tuning in Spectral Space},
  author={Zhang, Fangzhao and Pilanci, Mert},
  journal={Advances in Neural Information Processing Systems},
  volume={37},
  pages={130819--130847},
  year={2024}
}

@inproceedings{karakida2019universal,
  title={{Universal statistics of Fisher information in deep neural networks: Mean field approach}},
  author={Karakida, Ryo and Akaho, Shotaro and Amari, Shun-ichi},
  booktitle={The 22nd International Conference on Artificial Intelligence and Statistics},
  pages={1032--1041},
  year={2019},
  organization={PMLR}
}

@article{jacot2018neural,
  title={{Neural tangent kernel: Convergence and generalization in neural networks}},
  author={Jacot, Arthur and Gabriel, Franck and Hongler, Cl{\'e}ment},
  journal={Advances in neural information processing systems},
  volume={31},
  year={2018}
}

@inproceedings{martens2015optimizing,
  title={Optimizing neural networks with kronecker-factored approximate curvature},
  author={Martens, James and Grosse, Roger},
  booktitle={International conference on machine learning},
  pages={2408--2417},
  year={2015},
  organization={PMLR}
}

@article{lee2019wide,
  title={Wide neural networks of any depth evolve as linear models under gradient descent},
  author={Lee, Jaehoon and Xiao, Lechao and Schoenholz, Samuel and Bahri, Yasaman and Novak, Roman and Sohl-Dickstein, Jascha and Pennington, Jeffrey},
  journal={Advances in neural information processing systems},
  volume={32},
  year={2019}
}

@inproceedings{wang2018glue,
  title={{GLUE: A multi-task benchmark and analysis platform for natural language understanding}},
  author={Wang, Alex and Singh, Amanpreet and Michael, Julian and Hill, Felix and Levy, Omer and Bowman, Samuel},
  booktitle={Proceedings of the 2018 EMNLP workshop BlackboxNLP: Analyzing and interpreting neural networks for NLP},
  pages={353--355},
  year={2018}
}

@article{raffel2020exploring,
  title={Exploring the limits of transfer learning with a unified text-to-text transformer},
  author={Raffel, Colin and Shazeer, Noam and Roberts, Adam and Lee, Katherine and Narang, Sharan and Matena, Michael and Zhou, Yanqi and Li, Wei and Liu, Peter J},
  journal={Journal of machine learning research},
  volume={21},
  number={140},
  pages={1--67},
  year={2020}
}

@article{touvron2023llama,
  title={{Llama 2: Open foundation and fine-tuned chat models}},
  author={Touvron, Hugo and Martin, Louis and Stone, Kevin and Albert, Peter and Almahairi, Amjad and Babaei, Yasmine and Bashlykov, Nikolay and Batra, Soumya and Bhargava, Prajjwal and Bhosale, Shruti and others},
  journal={arXiv preprint arXiv:2307.09288},
  year={2023}
}

@article{yu2023metamath,
  title={{Metamath: Bootstrap your own mathematical questions for large language models}},
  author={Yu, Longhui and Jiang, Weisen and Shi, Han and Yu, Jincheng and Liu, Zhengying and Zhang, Yu and Kwok, James T and Li, Zhenguo and Weller, Adrian and Liu, Weiyang},
  journal={arXiv preprint arXiv:2309.12284},
  year={2023}
}

@article{cobbe2021training,
  title={Training verifiers to solve math word problems},
  author={Cobbe, Karl and Kosaraju, Vineet and Bavarian, Mohammad and Chen, Mark and Jun, Heewoo and Kaiser, Lukasz and Plappert, Matthias and Tworek, Jerry and Hilton, Jacob and Nakano, Reiichiro and others},
  journal={arXiv preprint arXiv:2110.14168},
  year={2021}
}

@article{hendrycks2020measuring,
  title={Measuring massive multitask language understanding},
  author={Hendrycks, Dan and Burns, Collin and Basart, Steven and Zou, Andy and Mazeika, Mantas and Song, Dawn and Steinhardt, Jacob},
  journal={arXiv preprint arXiv:2009.03300},
  year={2020}
}

@inproceedings{zheng2024opencodeinterpreter,
  title={{Opencodeinterpreter: Integrating code generation with execution and refinement}},
  author={Zheng, Tianyu and Zhang, Ge and Shen, Tianhao and Liu, Xueling and Lin, Bill Yuchen and Fu, Jie and Chen, Wenhu and Yue, Xiang},
  booktitle={Findings of the Association for Computational Linguistics: ACL 2024},
  pages={12834--12859},
  year={2024}
}

@article{chen2021evaluating,
  title={Evaluating large language models trained on code},
  author={Chen, Mark and Tworek, Jerry and Jun, Heewoo and Yuan, Qiming and Pinto, Henrique Ponde De Oliveira and Kaplan, Jared and Edwards, Harri and Burda, Yuri and Joseph, Nicholas and Brockman, Greg and others},
  journal={arXiv preprint arXiv:2107.03374},
  year={2021}
}

@article{arora2019exact,
  title={On exact computation with an infinitely wide neural net},
  author={Arora, Sanjeev and Du, Simon S and Hu, Wei and Li, Zhiyuan and Salakhutdinov, Russ R and Wang, Ruosong},
  journal={Advances in neural information processing systems},
  volume={32},
  year={2019}
}

@article{kalajdzievski2023rank,
  title={A rank stabilization scaling factor for fine-tuning with {LoRA}},
  author={Kalajdzievski, Damjan},
  journal={arXiv preprint arXiv:2312.03732},
  year={2023}
}

@article{biderman2024lora,
  title={{LoRA Learns Less and Forgets Less}},
  author={Biderman, Dan and Portes, Jacob and Ortiz, Jose Javier Gonzalez and Paul, Mansheej and Greengard, Philip and Jennings, Connor and King, Daniel and Havens, Sam and Chiley, Vitaliy and Frankle, Jonathan and others},
  journal={Transactions on Machine Learning Research},
  year={2024}
}

@inproceedings{he2015delving,
  title={Delving deep into rectifiers: Surpassing human-level performance on imagenet classification},
  author={He, Kaiming and Zhang, Xiangyu and Ren, Shaoqing and Sun, Jian},
  booktitle={Proceedings of the IEEE international conference on computer vision},
  pages={1026--1034},
  year={2015}
}

@inproceedings{xu2023wizardlm,
  title={{WizardLM: Empowering Large Pre-Trained Language Models to Follow Complex Instructions}},
  author={Xu, Can and Sun, Qingfeng and Zheng, Kai and Geng, Xiubo and Zhao, Pu and Feng, Jiazhan and Tao, Chongyang and Lin, Qingwei and Jiang, Daxin},
  booktitle={The Twelfth International Conference on Learning Representations},
  year={2024}
}

@article{liu2019roberta,
  title={{RoBERTa: A Robustly Optimized Bert Pretraining Approach}},
  author={Liu, Yinhan and Ott, Myle and Goyal, Naman and Du, Jingfei and Joshi, Mandar and Chen, Danqi and Levy, Omer and Lewis, Mike and Zettlemoyer, Luke and Stoyanov, Veselin},
  journal={arXiv preprint arXiv:1907.11692},
  year={2019}
}

@inproceedings{zhang2024riemannian,
  title={{Riemannian preconditioned LoRA for fine-tuning foundation models}},
  author={Zhang, Fangzhao and Pilanci, Mert},
  booktitle={Proceedings of the 41st International Conference on Machine Learning},
  pages={59641--59669},
  year={2024}
}

@inproceedings{arora2019fine,
  title={Fine-grained analysis of optimization and generalization for overparameterized two-layer neural networks},
  author={Arora, Sanjeev and Du, Simon and Hu, Wei and Li, Zhiyuan and Wang, Ruosong},
  booktitle={International conference on machine learning},
  pages={322--332},
  year={2019},
  organization={PMLR}
}

@inproceedings{puiu2022randomized,
  title={Randomized k-facs: Speeding up k-fac with randomized numerical linear algebra},
  author={Puiu, Constantin Octavian},
  booktitle={International Conference on Intelligent Data Engineering and Automated Learning},
  pages={411--422},
  year={2022},
  organization={Springer}
}

@inproceedings{wang2024lora,
  title={{LoRA-Pro: Are Low-Rank Adapters Properly Optimized?}},
  author={Wang, Zhengbo and Liang, Jian and He, Ran and Wang, Zilei and Tan, Tieniu},
  booktitle={The Thirteenth International Conference on Learning Representations},
  year={2025}
}

@inproceedings{li2025beyond,
  title={{Beyond Zero Initialization: Investigating the Impact of Non-Zero Initialization on LoRA Fine-Tuning Dynamics}},
  author={Li, Shiwei and Luo, Xiandi and Tang, Xing and Wang, Haozhao and Chen, Hao and Li, Yuhua and Li, Ruixuan and others},
  booktitle={Forty-second International Conference on Machine Learning},
  year={2025}
}

@inproceedings{zhang2023adalora,
  title={Adaptive Budget Allocation for Parameter-Efficient Fine-Tuning},
  author={Zhang, Qingru and Chen, Minshuo and Bukharin, Alexander and He, Pengcheng and Cheng, Yu and Chen, Weizhu and Zhao, Tuo},
  booktitle={The Eleventh International Conference on Learning Representations},
  year={2023}
}

@article{eschenhagen2023kronecker,
  title={Kronecker-factored approximate curvature for modern neural network architectures},
  author={Eschenhagen, Runa and Immer, Alexander and Turner, Richard and Schneider, Frank and Hennig, Philipp},
  journal={Advances in Neural Information Processing Systems},
  volume={36},
  pages={33624--33655},
  year={2023}
}

@article{eckart1936approximation,
  title={The approximation of one matrix by another of lower rank},
  author={Eckart, Carl and Young, Gale},
  journal={Psychometrika},
  volume={1},
  number={3},
  pages={211--218},
  year={1936},
  publisher={Springer-Verlag}
}

@article{mirsky1960symmetric,
  title={Symmetric gauge functions and unitarily invariant norms},
  author={Mirsky, Leon},
  journal={The quarterly journal of mathematics},
  volume={11},
  number={1},
  pages={50--59},
  year={1960},
  publisher={Oxford University Press}
}

@inproceedings{liu2024dora,
  title={Dora: Weight-decomposed low-rank adaptation},
  author={Liu, Shih-Yang and Wang, Chien-Yi and Yin, Hongxu and Molchanov, Pavlo and Wang, Yu-Chiang Frank and Cheng, Kwang-Ting and Chen, Min-Hung},
  booktitle={Forty-first International Conference on Machine Learning},
  year={2024}
}

@inproceedings{nguyen2021tight,
  title={{Tight bounds on the smallest eigenvalue of the neural tangent kernel for deep ReLU networks}},
  author={Nguyen, Quynh and Mondelli, Marco and Montufar, Guido F},
  booktitle={International Conference on Machine Learning},
  pages={8119--8129},
  year={2021},
  organization={PMLR}
}

@article{allen2019learning,
  title={Learning and generalization in overparameterized neural networks, going beyond two layers},
  author={Allen-Zhu, Zeyuan and Li, Yuanzhi and Liang, Yingyu},
  journal={Advances in neural information processing systems},
  volume={32},
  year={2019}
}

@article{bombari2022memorization,
  title={Memorization and optimization in deep neural networks with minimum over-parameterization},
  author={Bombari, Simone and Amani, Mohammad Hossein and Mondelli, Marco},
  journal={Advances in Neural Information Processing Systems},
  volume={35},
  pages={7628--7640},
  year={2022}
}


\appendix


\newpage 
\section{Additional notations}

For the sake of clarity, we will adopt the following simpler notations in the appendix:
$$J:=J_{f,W_0,X}, \ J_r := J_{f,W_0,A_0,B_0,X}, \ H:=(I_{d_\mathrm{in}}\otimes B_0 \ A_0^\top\otimes I_{d_\mathrm{out}}), \ \nabla = \nabla_W\mathcal{L}_n( f(W;X)|_{W_0}).$$
\section{Proofs for the squared loss}
\label{app:proofs}
\subsection{Proofs of Proposition \ref{prop:ntk_preds} and Corollary \ref{corr:diff_proj}}

\begin{proof}
    (i) consider the full fine-tuning case. Let $\theta = \mathrm{vec}(\Delta W)$. The minimization problem can be rewritten as
    $$\min_{\theta}L_n(\theta)=\min_{\theta}\frac{1}{2}\Vert f(W_0;X)+J\theta-Y\Vert_2^2.$$ 
    This is a standard least square problem whose solutions verify the normal equation:
    $$J^\top (f(W_0;X)-Y+ J\theta^\star)=0 .$$
    The minimum norm solution is given by the Moore-Penrose inverse    
    $$\theta^\star = (J^\top J)^\dagger J^\top Z=J^\dagger Z .$$
    The NTK prediction is then given by 
    $$\hat f(X) = f(W_0;X)+ J\theta^\star = f(W_0;X)+ JJ^\dagger Z = f(W_0;X)+\Pi_J Z .$$
    (ii) For LoRA, we simply remark that for  $\theta = \begin{pmatrix}
        \mathrm{vec}(\Delta A)\\ \mathrm{vec}(\Delta B)
    \end{pmatrix}$, one has 
    $$\mathrm{vec}(\Delta BA_0+B_0\Delta A)= H\theta$$
    by usual kronecker properties. Hence the minimization problem can be rewritten as
    $$\min_{\theta}L_n(\theta)=\frac{1}{2}\Vert f(W_0;X) +\eta JH\theta-Y\Vert_2^2.$$ 
    The minimum norm solution is given by 
    $$\theta^\star=\frac{1}{\eta}(JH)^\dagger Z.$$
    The NTK prediction is then given by 
    $$\hat f_{A_0,B_0}(X)= f(W_0;X)+ \eta JH\theta^\star = f(W_0;X)+ \Pi_{JH} Z   .$$   
    This also straightforwardly implies the result of Corollary \ref{corr:diff_proj}.
    
\end{proof}

\subsection{Proof of Theorem \ref{thm:opt_align}}

We first establish two lemmas that highlight the action of $H$ over the right singular subspace of $J$.

\begin{lemma}
Consider the compact SVD $J= UD V^\top$. Then 
\begin{equation}
    JJ^\dagger  - J_rJ_r^\dagger = U(I-PP^\dagger)U^\top     
    \end{equation}   
where $P=D V^\top H$.
\label{lem: diff_kernels}
\end{lemma}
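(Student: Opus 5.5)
The plan is to compute both orthogonal projections directly from the compact SVD $J=UDV^\top$, with $U$ semi-orthogonal, $D$ invertible diagonal and $V$ semi-orthogonal. For the first term, the standard formula $J^\dagger = VD^{-1}U^\top$ gives
\[
JJ^\dagger = UDV^\top VD^{-1}U^\top = UU^\top .
\]

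For the second term, write $J_r = JH = U(DV^\top H) = UP$. The key step is the identity $(UP)^\dagger = P^\dagger U^\top$, valid because $U$ has orthonormal columns. I would verify this either through the four Penrose conditions or, more simply, through the projection characterization. Since $U^\top U = I$, the product $UP(UP)^\dagger$ should be the orthogonal projection onto $\operatorname{range}(UP)$. As $U$ is an isometry, $\operatorname{range}(UP) = U\operatorname{range}(P)$. The orthogonal projection onto $U\operatorname{range}(P)$ is $U\Pi_P U^\top$: this operator is symmetric, idempotent (using $U^\top U=I$), and has exactly that range. Hence $J_rJ_r^\dagger = UPP^\dagger U^\top$.

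Subtracting the two expressions gives $JJ^\dagger - J_rJ_r^\dagger = UU^\top - UPP^\dagger U^\top = U(I-PP^\dagger)U^\top$, which is the claim.

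The only step that needs care is justifying $\Pi_{UP} = U\Pi_P U^\top$. That is, one must check that left-multiplication by a matrix with orthonormal columns commutes with taking projections onto column spaces. I would do this by the symmetric/idempotent/range argument above, rather than by expanding pseudo-inverses, since $P$ need not have full rank.
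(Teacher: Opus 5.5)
Your proof is correct and matches the paper's argument. Both write $J_r = UP$, use that $U$ has orthonormal columns to get $J_rJ_r^\dagger = UPP^\dagger U^\top$, and subtract this from $JJ^\dagger = UU^\top$. The only difference is that the paper cites $(UP)^\dagger = P^\dagger U^\top$ directly, whereas you verify the equivalent identity $\Pi_{UP} = U\Pi_P U^\top$ via symmetry, idempotence and range.
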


\begin{proof}
By definition $J_r=JH=UD V^\top H = UP$ and by property of the Moore Penrose inverse, we have $J_r^\dagger = P^\dagger U^\top$ since $U$ is semi-orthogonal. Therefore
$$JJ^\dagger -J_rJ_r^\dagger = UU^\top - UPP^\dagger U^\top = U(I-PP^\dagger)U^\top.$$    
\end{proof}

\begin{lemma}
Assume that Assumption \ref{assump:kfac} holds. Let $r_S=\mathrm{rank}(S), r_T=\mathrm{rank}(T)$ and $q=r_Sr_T$. Consider the thin SVDs 
    $$S=U_S D_S U_S^\top, \quad T = U_TD_TU_T^\top$$ where $U_S\in\mathbb{R}^{d_\mathrm{in},r_S}$ and $U_T\in\mathbb{R}^{d_\mathrm{out},r_T}$. Then there exists semi-orthogonal $U\in\mathbb{R}^{n\times q}$ such that $J$ admits the following thin SVD
     $$J=U(D_S^{1/2}\otimes D_T^{1/2}) (U_S\otimes U_T)^\top$$
     and    \begin{equation}
JJ^\dagger  - J_rJ_r^\dagger = U\left[(I-\Pi_{D_S^{1/2}U_S^\top A_0^\top})\otimes (I-\Pi_{D_T^{1/2}U_T^\top B_0})\right]U^\top.
\label{eq:k_vs_kr}
\end{equation}
\label{lemma:k_vs_kr}
\end{lemma}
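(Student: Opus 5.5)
The plan has two steps: first obtain the thin SVD of $J$ directly from the K-FAC factorization, then compute $P=DV^\top H$ explicitly and apply Lemma~\ref{lem: diff_kernels}.

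\textbf{Step 1: thin SVD of $J$.} By Assumption~\ref{assump:kfac},
$$J^\top J=S\otimes T=(U_S\otimes U_T)(D_S\otimes D_T)(U_S\otimes U_T)^\top .$$
The matrix $V:=U_S\otimes U_T$ is semi-orthogonal with $q=r_Sr_T$ columns, and $D_S\otimes D_T$ is diagonal and positive. So this is a thin eigendecomposition of $J^\top J$. The right singular vectors of $J$ can therefore be taken to be $V$, and the singular values to be $D:=D_S^{1/2}\otimes D_T^{1/2}=(D_S\otimes D_T)^{1/2}$. We then set
$$U:=JVD^{-1}.$$
A direct check gives $U^\top U=D^{-1}V^\top J^\top J V D^{-1}=I_q$. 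Also, $JVV^\top=J$, because $V$ spans the row space of $J$. Together these give $J=UDV^\top$. If the diagonal entries of $D$ are not sorted, a simultaneous permutation of the columns of $U$ and $V$ sorts them and changes nothing below.

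\textbf{Step 2: compute $P=DV^\top H$.} I use the mixed-product rule blockwise on $H=(I\otimes B_0\;\; A_0^\top\otimes I)$:
$$V^\top(I\otimes B_0)=U_S^\top\otimes U_T^\top B_0,\qquad V^\top(A_0^\top\otimes I)=U_S^\top A_0^\top\otimes U_T^\top .$$
Multiplying by $D$ gives
$$P=\bigl(D_S^{1/2}U_S^\top\otimes D_T^{1/2}U_T^\top B_0\;\;\; D_S^{1/2}U_S^\top A_0^\top\otimes D_T^{1/2}U_T^\top\bigr).$$
Write $a:=D_S^{1/2}U_S^\top A_0^\top$ and $b:=D_T^{1/2}U_T^\top B_0$. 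The matrix $D_S^{1/2}U_S^\top$ has full row rank $r_S$, so its range is all of $\mathbb{R}^{r_S}$; similarly for $D_T^{1/2}U_T^\top$ and $\mathbb{R}^{r_T}$. Hence
$$\operatorname{range}(P)=\operatorname{range}(I_{r_S}\otimes b)+\operatorname{range}(a\otimes I_{r_T})=\bigl(\mathbb{R}^{r_S}\otimes\operatorname{range}(b)\bigr)+\bigl(\operatorname{range}(a)\otimes\mathbb{R}^{r_T}\bigr).$$

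\textbf{Main step: the projector onto a sum of tensor subspaces.} The orthogonal complement of this sum is the intersection of the two complements, which is $\operatorname{range}(a)^\perp\otimes\operatorname{range}(b)^\perp$. To see this cleanly, pick orthonormal bases adapted to $\operatorname{range}(a)$ in $\mathbb{R}^{r_S}$ and to $\operatorname{range}(b)$ in $\mathbb{R}^{r_T}$. The product basis of $\mathbb{R}^{r_S}\otimes\mathbb{R}^{r_T}$ then splits the sum and its complement coordinatewise. It follows that
$$I-PP^\dagger=(I_{r_S}-\Pi_a)\otimes(I_{r_T}-\Pi_b).$$
Substituting this into Lemma~\ref{lem: diff_kernels} gives \eqref{eq:k_vs_kr}. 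This identification of the projector is the crux of the argument. The rest is bookkeeping: keeping the order of factors consistent with the $\operatorname{vec}$/Kronecker convention, so that the $S$-factor is on the left as in \eqref{eq:k_vs_kr}.
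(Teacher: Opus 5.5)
Your proposal is correct and follows essentially the same route as the paper. You compute $P = DV^\top H$ by the mixed-product rule, write its column space as $\mathbb{R}^{r_S}\otimes\operatorname{col}(b) + \operatorname{col}(a)\otimes\mathbb{R}^{r_T}$, identify the orthogonal complement as $\operatorname{col}(a)^\perp\otimes\operatorname{col}(b)^\perp$, and substitute the resulting Kronecker product of projectors into $JJ^\dagger - J_rJ_r^\dagger = U(I-PP^\dagger)U^\top$; the only difference is that you make rigorous two steps the paper calls immediate or well known, namely the explicit construction $U = JVD^{-1}$ and the adapted-orthonormal-basis argument for the complement of the sum of tensor subspaces.
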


\begin{proof}
    The first point is immediate from the \textit{K-FAC} assumption. We focus on the second point.  
    $I-PP^\dagger$ is equal to the orthogonal projection over the null space of $P$ which we will characterize.

    First observe that by mixed product property, 
    $$P = (D_S^{1/2} U_S^\top \otimes D_T^{1/2} U_T^\top B_0 \quad  D_S^{1/2}U_S^\top A_0^\top \otimes D_T^{1/2}U_T^\top).$$
    Therefore its column space is the sum of the two columns spaces.
  \begin{align*}
        \operatorname{col}(P) &= \operatorname{col}(D_S^{1/2}U_S^\top \otimes D_T^{1/2}U_T^\top B_0) + \operatorname{col}(D_S^{1/2} U_S^\top A_0^\top \otimes D_T^{1/2}U_T^\top) \\
        &= \operatorname{col}(D_S^{1/2} U_S^\top) \otimes \operatorname{col}(D_T^{1/2}U_T^\top B_0) + \operatorname{col}(D_S^{1/2} U_S^\top A_0^\top) \otimes \operatorname{col}(D_T^{1/2}U_T^\top)
        \\&= \mathbb{R}^{r_S}\otimes \operatorname{col}(D_T^{1/2} U_T^\top B_0) + \operatorname{col}(D_S^{1/2} U_S^\top A_0^\top) \otimes \mathbb{R}^{r_T}
    \end{align*}
    where $\mathcal{F}\otimes \mathcal{G}$ denotes the tensor product between vector space $\mathcal{F}$ and $\mathcal{G}$. Using the following  well-known linear algebra facts
    $$(\mathcal{F}+\mathcal{G})^\perp = \mathcal{F}^\perp \cap \mathcal{G}^\perp, \qquad (\mathbb{R}^k\otimes \mathcal{G})^\perp = \mathbb{R}^k \otimes \mathcal{G}^\perp, \qquad ( \mathcal{F}\otimes \mathbb{R}^k)^\perp =  \mathcal{F}^\perp \otimes \mathbb{R}^k$$
    then we immediately obtain that 
\begin{align*}
    \operatorname{col}(P)^\perp &= (\mathbb{R}^{r_S}\otimes \operatorname{col}(D_T^{1/2}U_T^\top B_0)^\perp)\cap (\operatorname{col}(D_S^{1/2}U_S^\top A_0^\top)^\perp \otimes \mathbb{R}^{r_T}) \\
    &= \operatorname{col}(D_S^{1/2} U_S^\top A_0^\top)^\perp\otimes \operatorname{col}(D_T^{1/2} U_T^\top B_0)^\perp. 
\end{align*}
    This implies the result namely 
    \begin{align*}
        I-PP^\dagger = \Pi_{\operatorname{col}(P)^\perp} &= \Pi_{\operatorname{col}(D_S^{1/2} U_S^\top A_0^\top)^\perp\otimes \operatorname{col}(D_T^{1/2} U_T^\top B_0)^\perp} \\
        &= \Pi_{\operatorname{col}(D_S^{1/2} U_S^\top A_0^\top)^\perp}\otimes \Pi_{\operatorname{col}(D_T^{1/2} U_T^\top B_0)^\perp}.
    \end{align*}
\end{proof}

\begin{proof}{(of Theorem \ref{thm:opt_align})}
    Our proof uses the following standard result   
    $$\operatorname{vec}(ABC)=(C^\top \otimes A)\operatorname{vec}(B), \quad \Vert \operatorname{vec}(A)\Vert_2=\Vert A\Vert_F.$$   
    This immediately implies from Lemma \ref{lemma:k_vs_kr} the first stated point, namely
\begin{align*}
    &\Vert (\Pi_{J} - \Pi_{J_r})Z \Vert_2 \\&= \Vert\left[(I_{r_S}-\Pi_{D_S^{1/2}U_S^\top A_0^\top})\otimes (I_{r_T}-\Pi_{D_T^{1/2}U_T^\top B_0})\right]U^\top Z\Vert_2 \\
    &=\Vert\left[(I_{r_S}-\Pi_{D_S^{1/2}U_S^\top A_0^\top})\otimes (I_{r_T}-\Pi_{D_T^{1/2}U_T^\top B_0})\right]\left[D_S^{-1/2}U_S^\top\otimes D_T^{-1/2}U_T^\top\right]\operatorname{vec}(\nabla)\Vert_2\\
    &= \Vert\left[(I_{r_S}-\Pi_{D_S^{1/2}U_S^\top A_0^\top})D_S^{-1/2}U_S^\top\otimes (I_{r_T}-\Pi_{D_T^{1/2}U_T^\top B_0})D_T^{-1/2}U_T^\top\right]\operatorname{vec}(\nabla)\Vert_2 \\
    &= \Vert (I_{r_T}-\Pi_{D_T^{1/2}U_T^\top B_0})F(I_{r_S}-\Pi_{D_S^{1/2}U_S^\top A_0^\top})\Vert_F \\
    &= \Vert F - \underbrace{F\Pi_{D_S^{1/2}U_S^\top A_0^\top}- \Pi_{D_T^{1/2}U_T^\top B_0}F(I_{r_S} - \Pi_{D_S^{1/2}U_S^\top A_0^\top})}_{\operatorname{rank}\leq 2r}\Vert_F \\
    &\geq \min_{\operatorname{rank}(E)\leq 2r} \Vert F-E\Vert_F \\&\geq (\sum_{i\geq 2r+1} s_i^2(F))^{1/2} \quad \text{by Eckart-Young-Mirsky theorem \cite{eckart1936approximation, mirsky1960symmetric}.}
    \end{align*}
    
For the second point, we prove a more general result.

\begin{theorem}\label{thm:full_opt_align}
(i) Consider $r\leq \operatorname{rank}(F)$. Let index sets $\mathcal I,\mathcal J$ be such that $|\mathcal I|=|\mathcal J|=r$ and define $U_{F,\mathcal{J}}$ and $V_{F,\mathcal{I}}$ the left and right subsets of singular vectors of $F$ indexed by $\mathcal{J}$ and $\mathcal{I}$, respectively. Let invertible matrices $Q_A,Q_B\in\mathbb{GL}_r$.  
Let
     $$\begin{cases}
        A_0^\top = U_SD_S^{-1/2}V_{F,\mathcal I} Q_A,\\
        B_0 = U_TD_T^{-1/2}U_{F,\mathcal J} Q_B. 
    \end{cases}$$
Then: 
$$\Vert (\Pi_{J} - \Pi_{J_r})Z \Vert_2=(\sum_{i\in(\mathcal I\cup \mathcal J)^c} s_{i}^2(F))^{1/2}.$$ 
Furthermore, choosing $\mathcal{I}\cup\mathcal{J}=[\min(\operatorname{rank}(F),2r)]$ solves the alignment problem (\ref{eq:pred_obj}).

(ii) Consider $r>\operatorname{rank}(F)$.  Let invertible matrices $Q_A\in\mathbb{GL}_{\operatorname{rank}(F)},Q_B\in\mathbb{GL}_{\operatorname{rank}(F)}$. Let 
     $$\begin{cases}
    A_0^\top =(U_SD_S^{-1/2}V_{F} Q_A, \ W_1),\\ 
    B_0 = (U_TD_T^{-1/2}U_{F} Q_B, \ W_2).
    \end{cases}
    $$ 
    for any matrices $W_1\in\mathbb{R}^{d_\mathrm{in}\times (r-\operatorname{rank}(F)) },W_2\in\mathbb{R}^{d_\mathrm{out}\times (r-\operatorname{rank}(F)) }$.
    Then:   
     $$\Vert (\Pi_{J} - \Pi_{J_r})Z \Vert_2=0.$$
     which solves the alignment problem (\ref{eq:pred_obj}).
\end{theorem}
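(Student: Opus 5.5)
The proof starts from the identity already established in part (i) of Theorem~\ref{thm:opt_align}, via Lemma~\ref{lemma:k_vs_kr}:
\[
\Vert (\Pi_{J} - \Pi_{J_r})Z \Vert_2=\Vert (I_{r_T}-\Pi_{D_T^{1/2}U_T^\top B_0})F(I_{r_S}-\Pi_{D_S^{1/2}U_S^\top A_0^\top})\Vert_F .
\]
So everything reduces to computing the two projectors for the prescribed $A_0,B_0$.

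For case (i), substitute $A_0^\top = U_SD_S^{-1/2}V_{F,\mathcal I}Q_A$. Since $U_S^\top U_S=I_{r_S}$, we get
\[
D_S^{1/2}U_S^\top A_0^\top = V_{F,\mathcal I}Q_A .
\]
Because $Q_A$ is invertible, its column space is $\operatorname{col}(V_{F,\mathcal I})$. As $V_{F,\mathcal I}$ has orthonormal columns, $\Pi = V_{F,\mathcal I}V_{F,\mathcal I}^\top$. Similarly $D_T^{1/2}U_T^\top B_0=U_{F,\mathcal J}Q_B$, so its projector is $U_{F,\mathcal J}U_{F,\mathcal J}^\top$.

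Next write the thin SVD $F=\sum_k s_k(F)\,u_kv_k^\top$ over $k\le\operatorname{rank}(F)$. Then
\[
F(I-V_{F,\mathcal I}V_{F,\mathcal I}^\top)=\sum_{k\notin\mathcal I}s_k u_kv_k^\top ,
\]
and left-multiplying by $I-U_{F,\mathcal J}U_{F,\mathcal J}^\top$ further removes the indices in $\mathcal J$. What remains is $\sum_{k\in(\mathcal I\cup\mathcal J)^c}s_ku_kv_k^\top$. By orthonormality, its Frobenius norm is $(\sum_{k\in(\mathcal I\cup\mathcal J)^c}s_k^2(F))^{1/2}$.

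For optimality, choose $\mathcal I\cup\mathcal J=[\min(\operatorname{rank}(F),2r)]$. This is achievable, for instance by taking $\mathcal I=[r]$ and $\mathcal J$ a set of $r$ indices containing $\{r+1,\dots,\min(\operatorname{rank} F,2r)\}$, padded with indices from $[r]$ if needed. The resulting value equals $(\sum_{i\ge 2r+1}s_i^2(F))^{1/2}$, which meets the lower bound (\ref{eq:lb}). Hence these $(A_0,B_0)$ are minimizers of (\ref{eq:pred_obj2}), and by Corollary~\ref{corr:diff_proj} they solve (\ref{eq:pred_obj}).

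For case (ii), $r>\operatorname{rank}(F)$, the same computation applies. Now $D_S^{1/2}U_S^\top A_0^\top = (V_FQ_A,\; D_S^{1/2}U_S^\top W_1)$, whose column space contains $\operatorname{col}(V_F)$. Hence $(I-\Pi)$ annihilates the row space of $F$: writing $\Pi=\Pi_{V_F}+\Pi'$ with $\Pi'$ the projector onto the orthogonal complement of $\operatorname{col}(V_F)$ within the column space, we get $F\Pi'=0$ and so $F(I-\Pi)=F-F=0$. The norm is therefore $0$, which is trivially optimal.

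The only delicate point is the projector monotonicity in case (ii): for subspaces $\mathcal V\subseteq\mathcal W$, the identity $\Pi_{\mathcal W}\Pi_{\mathcal V}=\Pi_{\mathcal V}$ gives $F\Pi_{\mathcal W}=F$ whenever $\operatorname{row}(F)\subseteq\mathcal V$. The index-set bookkeeping for the cardinality constraint in case (i) also needs a little care. Otherwise the argument is direct.
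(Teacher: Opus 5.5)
Your proposal is correct and follows the paper's proof essentially step for step. You reduce via the identity from Theorem~\ref{thm:opt_align}(i), identify the two projectors as $\Pi_{V_{F,\mathcal I}}$ and $\Pi_{U_{F,\mathcal J}}$ in case (i) (and use the column-space containment in case (ii)), and then match the Eckart--Young lower bound. Two details the paper leaves implicit are handled well: your explicit construction of index sets with $|\mathcal I|=|\mathcal J|=r$ realizing $\mathcal I\cup\mathcal J=[\min(\operatorname{rank}F,2r)]$, and your uniform treatment of $\operatorname{rank}F\ge 2r$ versus $\operatorname{rank}F<2r$ via $s_i(F)=0$ beyond the rank.
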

\end{proof}
\begin{proof}{(of Theorem \ref{thm:full_opt_align})}
    We have shown earlier that
$$
    \Vert (\Pi_{J} - \Pi_{J_r})Z \Vert_2=\Vert (I_{r_T}-\Pi_{D_T^{1/2}U_T^\top B_0})F(I_{r_S}-\Pi_{D_S^{1/2}U_S^\top A_0^\top})\Vert_F $$
For (i), plugging in the values of $A_0,B_0$ directly gives  $\Pi_{D_T^{1/2}U_T^\top B_0}=\Pi_{U_{F,\mathcal J}}$ and $\Pi_{D_S^{1/2}U_S^\top A_0^\top}=\Pi_{V_{F,\mathcal I}}$ which implies the expression  
$$\Vert (\Pi_{J} - \Pi_{J_r})Z \Vert_2=(\sum_{i\in(\mathcal I\cup \mathcal J)^c} s_{i}^2(F))^{1/2}.$$ 
Let $I\cup J=[\min(\operatorname{rank}(F),2r)]$. If $\operatorname{rank}(F)\geq 2r$ then $(\sum_{i\in(\mathcal I\cup \mathcal J)^c} s_{i}^2(F))^{1/2}=(\sum_{i\geq 2r+1} s_{i}^2(F))^{1/2}$ which achieves the lower bound. In the other case $\operatorname{rank}(F)< 2r$, we have $(\sum_{i\in(\mathcal I\cup \mathcal J)^c} s_{i}^2(F))^{1/2} =(\sum_{i\geq\operatorname{rank}(F)+1}s_{i}^2(F))^{1/2}=0$. In both cases, we solve the alignment problem (\ref{eq:pred_obj}).

For (ii), observe that the column space of $D_T^{1/2}U_T^\top B_0$ contains at least the column space of $U_F$, and similarly the column space of $D_S^{1/2}U_S^\top A_0^\top$ contains at least the column space of $V_F$. Therefore this directly implies 
$$\Vert (\Pi_{J} - \Pi_{J_r})Z \Vert_2=0.$$
This solves the alignment problem (\ref{eq:pred_obj}).
\end{proof}

\subsection{Proof of Proposition \ref{prop:q_star}}

The optimization problem can be rewritten as a standard orthogonal projection since $$\min_{(Q_A,Q_B)\in GL_r}\Vert B_0A_0+\nabla\Vert_F =\min_{M\in \mathcal{Q}} \Vert M-(-\nabla)\Vert_F,$$
where $\mathcal{Q}:=\operatorname{range}\{L_rQR_r, Q\in\mathbb{R}^{r\times r}\}$.

We will show that $M^\star = -\Pi_{L_r}\nabla \Pi_{R_r}$ is the orthogonal projection of $-\nabla$ onto $\mathcal{Q}$ and therefore $Q^\star= -L_r^\dagger \nabla R_r^\dagger$ is indeed a minimizer of our objective. Then,  we will show that $Q^\star$ is unique and invertible.

For the first point, observe that
$$-\nabla - M^\star = -(I-\Pi_{L_r})\nabla - \Pi_{L_r}\nabla (I-\Pi_{R_r}),$$
therefore for any $Q$, we have 
$$\langle -\nabla - M^\star, L_rQR_r \rangle =-\langle (I-\Pi_{L_r})\nabla , L_rQR_r\rangle - \langle \Pi_{L_r}\nabla (I-\Pi_{R_r}), L_rQR_r\rangle=0.$$
Hence $M^\star$ is  the orthogonal projection of $-\nabla$, which implies that $Q^\star$ is a solution of the original problem.

Since $L_r,R_r$ are full column rank, we can also write 
\begin{align*}
Q^\star &= -(L_r^\top L_r)^{-1}L_r^\top \nabla R_r (R_r^\top R_r)^{-1}\\
&= -(U_{F,r}^\top D_T^{-1}U_{F,r})^{-1} U_{F,r}^\top \underbrace{D_T^{-1/2}U_T^\top \nabla U_S D_S^{-1/2}}_{=F} V_{F,r}(V_{F,r}^\top D_S^{-1}V_{F,r})^{-1}\\
&=-(U_{F,r}^\top D_T^{-1}U_{F,r})^{-1} D_{F,r} (V_{F,r}^\top D_S^{-1}V_{F,r})^{-1}.
\end{align*}
The last expression of $Q^\star$ proves its invertibility.

Uniqueness follows from the injectivity of $Q\rightarrow L_rQR_r$ since 
$$L_rQR_r^\top=0\implies L_r^\dagger L_r Q R_r^\top(R_r^\top)^\dagger = 0\implies Q=0.$$


\newpage
\section{Additional theoretical details}

\subsection{The GGN derivation and K-FAC designs}\label{app:kfac}

We already showed for a linear layer $W$ that $\nabla_Wf^{(c)}(W_0;x_i)=\delta_i^{(c)}h_i^\top$ hence \\ $J_{f,W_0,x_i}=(\operatorname{vec}(\delta_i^{(1)}h_i^\top) \ ... \ \operatorname{vec}(\delta_i^{(c)}h_i^\top))^\top = (h_i\otimes \delta_i^{(1)} \ ... \ h_i\otimes \delta_i^{(C)})^\top = (h_i\otimes \delta_i)^\top$. Therefore 
$$J_{f, W_0,X}^\top J_{f, W_0,X}= \sum_{i=1}^n J_{f, W_0,x_i}^\top J_{f, W_0,x_i} = \sum_{i=1}^n  (h_i\otimes \delta_i)(h_i\otimes \delta_i)^\top = \sum_{i=1}^n (h_ih_i^\top)\otimes (\delta_i\delta_i^\top).$$
The K-FAC approximation replaces this sum of Kronecker product into a Kronecker product of two sums by adding missing cross-sample terms $(h_ih_i^\top) \otimes (\delta_j\delta_j^\top)$ for $i\neq j$.

Assume now weight sharing, then
we have $\nabla_W f^{(c)}(W_0;x_i)=\sum_{w=1}^\Omega\delta_{i,w}^{(c)}h_{i,w}^\top$ where now we consider input token $h_{i,w}$  and derivative $\delta_{i,w}$ w.r.t output token $u_{i,w}$. Then  
$$J_{f,W_0,x_i}=(\operatorname{vec}(\sum_{w=1}^\Omega\delta_{i,w}^{(1)}h_{i,w}^\top) \ ... \ \operatorname{vec}(\sum_{w=1}^\Omega\delta_{i,w}^{(C)}h_{i,w}^\top))^\top=\sum_{w=1}^\Omega(h_{i,w}\otimes \delta_{i,w})^\top.$$
Therefore, we now have token-level cross interactions as following
$$J_{f, W_0,X}^\top J_{f, W_0,X}= \sum_{i=1}^n  J_{f, W_0,x_i}^\top J_{f, W_0,x_i} = \sum_{i=1}^n \sum_{w=1}^\Omega \sum_{w'=1}^\Omega  (h_{i,w}h_{i,w'}^\top)\otimes (\delta_{i,w}\delta_{i,w'}^\top).$$
This case is more delicate and recent works \cite{eschenhagen2023kronecker} considered two possibilities: 
\begin{enumerate}
    \item \textit{reduce} settings which corresponds to non-tokenized final output (tokenization may happen before) with corresponding \textit{K-FAC-reduce} factors
$$S = \frac{1}{n\Omega^2}\sum_{i=1}^n (\sum_{\omega=1}^\Omega h_{i,\omega})(\sum_{\omega=1}^\Omega h_{i,\omega})^\top ,\quad  T=\sum_{i=1}^n (\sum_{\omega=1}^\Omega \delta_{i,\omega})(\sum_{\omega=1}^\Omega \delta_{i,\omega})^\top.$$
\item \textit{expand} settings which corresponds to tokenized final output with corresponding \textit{K-FAC-reduce} factors
$$S = \frac{1}{n\Omega}\sum_{i=1}^n \sum_{\omega=1}^\Omega h_{i,\omega} h_{i,\omega}^\top ,\quad  T=\sum_{i=1}^n \sum_{\omega=1}^\Omega \delta_{i,\omega}\delta_{i,\omega}^\top.$$
\end{enumerate}

K-FAC reduce is computationally less intensive, and \cite{eschenhagen2023kronecker} show that it can reach similar performance to K-FAC expand even in the expand settings.

\subsection{Comparison with previous K-FAC/LoRA methods}\label{app:compa_kfac}

It is worth mentioning several related works that also incorporate K-FAC into LoRA, although in ways that differ substantially from our approach.

Yang et al.~\cite{yang2023bayesian} propose a Bayesian analysis of LoRA in a post-finetuning setting. More precisely, they apply a Laplace approximation on top of trained LoRA adapters and use a K-FAC approximation of the Fisher to obtain a tractable posterior over the LoRA parameters.

Saha et al.~\cite{saha2026grit} instead use K-FAC as an online preconditioner for LoRA optimization. Their Kronecker factors are built in the current rank-r LoRA coordinates  $Ah_i$ and $B^\top \delta_i$, so the resulting approximation acts within the existing LoRA subspace rather than being derived from the full GGN geometry of the underlying layer. Accordingly, their method is primarily optimization-driven and empirical, whereas our analysis is centered on a full-layer curvature model together with a function-level prediction-alignment problem that we formulate and solve.

In a more related approach, Zhang et al.~\cite{zhang2025lorada} proposes to extend the work of LoRA-GA/One by considering aligning the natural gradient instead of the standard gradient to obtain efficient initialization $A_0,B_0$. While their method accounts for the pretrained Jacobian as we do, their analysis remains at the parameter level, whereas our guarantees are stated at the function level. 


 \subsection{Comparison with previous initialization methods}\label{app:loraone}

Recently, a growing number of works have started designing ``alignment-aware'' versions of LoRA, where the goal is to minimize a criterion measuring the discrepancy between LoRA and full fine-tuning with respect to some chosen quantity. For example, LoRA-GA~\cite{gawang2024lora} selects \(A_0,B_0\) so as to align the first-step LoRA update with that of full fine-tuning, namely by minimizing \(\|\Delta_1(BA)-\Delta_1(W)\|_F\), where \(\Delta_1(BA):=\eta(B_0\Delta A+\Delta B A_0)\). In a similar spirit, \cite{wang2024lora} modifies the structure of the first-step LoRA gradients in order to better match the first-step full fine-tuning gradient. 

These works provide meaningful notions of alignment, but they remain tied to a specific proxy, namely first-step parameter-update alignment. As a result, they do not directly address what alignment should mean at the level of the final trained model, nor how such alignment affects downstream performance.
These limitations are partially addressed in \cite{zhang2025lora}, which considers a restricted setting where \(y=g(W_0+\Delta^\star;x)\), with \(g\) either a single linear map or a single ReLU function of isotropic inputs \(x\), and with an unknown true low-rank matrix \(\Delta^\star\). 
$$\begin{cases}
    f(W;x)=Wx \qquad \text{with true data} \qquad y=(W_0+\Delta^\star)x \\
    f(W;x)=\mathrm{Relu}(Wx) \qquad \text{with true data} \qquad y=\mathrm{Relu}((W_0+\Delta^\star) x)
\end{cases}$$
for some true unknown low-rank update $\Delta^\star$ and isotropic Gaussian vectors $(x_i)_{i=1}^n\sim \mathcal{N}(0,I_{d_\mathrm{in}})$. For such settings, they show that $A_0,B_0$ based on an SVD of loss gradient  $\nabla_W\mathcal{L}_n(f(W_0;X))$ alone yield small identification error dynamics  \(\|B_tA_t-\Delta^\star\|_F\) (see their Theorem C.17 and Theorem 4.2). This constitutes an important step toward a performance-based notion of alignment. However, this analysis still relies on strong assumptions on the data-generating process.
 
 In the following, we show that the whitened gradient essentially (rigorously computing their difference would require careful bounding in high probability which is outside the scope of this paper) becomes the standard loss gradient, and therefore our initialization procedure  recovers theirs. \begin{lemma}\label{lem:loraone}
 Assume $x_1,...,x_n\sim  \mathcal{N}(0,I_{d_\mathrm{in}})$.
 
 (i) Consider the linear case. Then both the NTK and \textit{K-FAC} approximations are exact and we have 
     $$J_{f,W_0,X}^\top J_{f,W_0,X}=(\sum_{i=1}^n x_ix_i^\top)\otimes I_C.$$
     Hence the \textit{K-FAC} factors are $S= \sum_{i=1}^n x_ix_i^\top =X^\top X$ and $T= I_C$. Furthermore
     $$\mathbb{E}(S) = nI_n$$
     (ii) Consider the single ReLU case. Then 
     $$J_{f,W_0,X}^\top J_{f,W_0,X}=\sum_{i=1}^n (x_ix_i^\top)\otimes \operatorname{diag}(1(W_0x_i>0)).$$
     Hence the \textit{K-FAC} factors are $S= \sum_{i=1}^n x_ix_i^\top=X^\top X$ and $T= \sum_{i=1}^n D_i$ where $D_i = \operatorname{diag}(1(W_0x_i>0))$.
     Furthermore
     $$\mathbb{E}(S) = nI_{d_\mathrm{in}}, \quad \mathbb{E}(T) =\frac{n}{2} I_C.$$
 \end{lemma}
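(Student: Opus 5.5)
The plan is to compute the per-sample Jacobian in each model directly, then read off the Gauss--Newton matrix and its Kronecker factors, using the chain-rule identity $J_{f,W_0,x_i}=(h_i\otimes\delta_i)^\top$ from Appendix~\ref{app:kfac}. In both cases the layer input is $h_i=x_i$, and the layer pre-activation is $u_i=W_0x_i\in\mathbb{R}^{C}$, so $d_\mathrm{out}=C$. What changes between the two cases is only the matrix of pre-activation derivatives $\delta_i\in\mathbb{R}^{C\times C}$.

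\emph{Linear case (i).} Here $f(W;x)=Wx$ is linear in $W$, so the NTK linearization $f(W_0+\Delta W;x)=f(W_0;x)+J_{f,W_0,x}\operatorname{vec}(\Delta W)$ holds with equality. Since $f=u$, we have $\delta_i=I_C$. Then $J_{f,W_0,x_i}^\top J_{f,W_0,x_i}=(x_i\otimes I_C)(x_i\otimes I_C)^\top=(x_ix_i^\top)\otimes I_C$ by the mixed-product property. Summing over $i$ gives $(\sum_i x_ix_i^\top)\otimes I_C$. This is already a single Kronecker product, so the K-FAC approximation of Assumption~\ref{assump:kfac} is exact: the cross terms it would add are $x_ix_i^\top\otimes\delta_j\delta_j^\top$ with $\delta_j\delta_j^\top=I_C$ independent of $j$, so they merely regroup. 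The normalization of $S$ versus $T$ (the $1/n$ in the assumption) is a scalar that can be moved between the factors. I will use the convention $S=X^\top X$, $T=I_C$ stated in the lemma and note this rescaling. The expectation follows from $\mathbb{E}[x_ix_i^\top]=I_{d_\mathrm{in}}$, giving $\mathbb{E}S=nI_{d_\mathrm{in}}$ (the ``$I_n$'' in the statement should read $I_{d_\mathrm{in}}$).

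\emph{ReLU case (ii).} Here $f^{(c)}(W;x)=\mathrm{Relu}(w_c^\top x)$. Almost surely $w_c^\top x_i\neq 0$, since $x_i$ is Gaussian (assuming each row $w_c\neq0$), so $f$ is differentiable at $W_0$ with $\delta_i=D_i=\operatorname{diag}(1(W_0x_i>0))$. Because $D_i$ is a diagonal 0/1 matrix, $D_iD_i^\top=D_i$, so the GGN equals $\sum_i (x_ix_i^\top)\otimes D_i$. Applying the K-FAC recipe then gives $S=\sum_i x_ix_i^\top$ and $T=\sum_i D_i$, again up to the scalar normalization. The expectation of $S$ is computed as before. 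For $T$, each diagonal entry satisfies $\mathbb{E}\,1(w_c^\top x_i>0)=1/2$, because $w_c^\top x_i\sim\mathcal N(0,\|w_c\|^2)$ is symmetric and nondegenerate. Hence $\mathbb{E}T=\tfrac n2 I_C$.

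The only delicate point is the claim of exactness in the ReLU case. The NTK linearization is not exact there, and K-FAC replaces $\sum_i x_ix_i^\top\otimes D_i$ by a product of sums. I will therefore state only the GGN identity and the factor definitions, not exactness, which is what (ii) actually asserts. Beyond that, I expect the main care to go into tracking the $1/n$ scaling convention and the almost-sure differentiability caveat.
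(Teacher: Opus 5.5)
Your proposal is correct and follows the paper's proof essentially step for step. Both arguments use the GGN identity $\sum_i (h_ih_i^\top)\otimes(\delta_i\delta_i^\top)$ with $h_i=x_i$ and $d_\mathrm{out}=C$, take $\delta_i=I_C$ in the linear case and $\delta_i=D_i$ with $D_iD_i^\top=D_i$ in the ReLU case, and obtain $\mathbb{E}(T)=\frac{n}{2}I_C$ from the symmetry of $w_c^\top x_i$. Your extra remarks are valid refinements that the paper's proof leaves implicit: the $I_n\to I_{d_\mathrm{in}}$ typo, the dropped $1/n$ normalization relative to Assumption~\ref{assump:kfac}, and the requirement that each row $w_c\neq 0$ so that $\mathbb{P}(w_c^\top x_i>0)=1/2$.
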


\begin{proof} 
Let us first observe that in the single layer case, we have $d_\mathrm{out}=C, \ h_i=x_i$ for all $i=1,...,n$. By definition of the Gauss-Newton approximation, we have
$$J^\top J=\sum_{i=1}^n (h_ih_i^\top)\otimes (\delta_i\delta_i^\top)=\sum_{i=1}^n (x_ix_i^\top) \otimes (\delta_i\delta_i^\top)$$
(i) In the linear case, by definition $u_i=W_0x_i=f(W_0;x_i)$ hence $\delta_i=\nabla_{u_i}f(W_0;x_i)=I_C$ which implies 
$$J^\top J=\sum_{i=1}^n (x_ix_i^\top)\otimes I_C= (\sum_{i=1}^n x_ix_i^\top)\otimes I_C$$
by distributivity of the Kronecker product. Hence $S=\sum_{i=1}^n x_ix_i^\top=X^\top X$ and $T=I_C$, and since $x_i\sim \mathcal{N}(0, I_{d_\mathrm{in}})$ then $\mathbb{E}(S)=nI_{d_\mathrm{in}}$.

(ii) In the ReLu case, we still have $S = X^TX$ with $\mathbb{E}(S)=nI_{d_\mathrm{in}}$. Also, it is well-known that the derivative is the Heaviside function indicating which neuron (=row) of $W_0$ is activated, i.e $\delta_i=\operatorname{Diag}(1\{W_0x_i>0\})$. Hence , this implies that  $T=\sum_{i=1}^n\delta_i\delta_i^\top = \sum_{i=1}^n D_i$.

Let us rewrite $T=\operatorname{Diag}(m_1,...,m_C)$ where $m_j=\sum_{i=1}^n 1\{(W_0)_j^\top x_i>0\}$ counts the number of samples activating neuron $j$. Since each $x_i$ are iid following a symmetric probability distribution w.r.t the origin, then we have 
$$\forall j=1,...,C, \quad \mathbb{P}((W_0)_j^\top x_i>0)=\frac{1}{2}.$$
This implies 
$$\mathbb{E}(T)=\frac{n}{2}I_C.$$
\end{proof}

\newpage
\section{Algorithmic details}\label{app:algo}

\begin{algorithm}[h!]
\caption{CG-LoRA}
\label{alg:cg-lora}
\DontPrintSemicolon
\SetKwInOut{Input}{Input}
\SetKwInOut{Output}{Output}

\Input{Batch $\{(x_i,y_i)\}_{i=1}^{B_\mathrm{init}}$; rank $r$; number of LoRA layers $p$; pretrained weights $\{W_0^{(k)}\}_{k=1}^p$. Hyperparameters: oversampling factor $m$; number of power iterations $q$; scaling factor $\gamma$.}
\Output{$\{(A_0^{(k)},B_0^{(k)})\}_{k=1}^p$.}

Compute layer inputs $\{h^{(k)}\}_{k=1}^p$ by a forward pass\;
Compute layer outputs $\{\delta^{(k)}\}_{k=1}^p$ by a backward pass\;

\For{$k \gets 1$ \KwTo $p$}{\BlankLine
$(d_\mathrm{in},d_\mathrm{out})\gets\operatorname{shape}(W_0^{(k)})$ \;
\tcp{Pooling if tokenized inputs/outputs}
  \If{$\operatorname{shape}(h^{(k)})=(B_\mathrm{init},\Omega,d_\mathrm{in})$}{
  $h^{(k)}\gets \sum_{\omega=1}^\Omega h^{(k)}[:,\omega,:]$ \quad 
  $\delta^{(k)}\gets \sum_{\omega=1}^\Omega \delta^{(k)}[:,\omega,:]$} 
  $(U_S,D_S,U_T,D_T) \gets
  \operatorname{Subspace}\!\bigl(h^{(k)},\delta^{(k)},m,q\bigr)$ \tcp*[r]{$O(\max(d_\mathrm{in},d_\mathrm{out})m(qn+m))$, see Alg \ref{alg:subspace}}
 $F  \gets   D_T^{-1/2}U_{T}^\top \nabla_W\mathcal{L}_n(f(W;X))|_{W_0}U_SD_S^{-1/2}$\tcp*[r]{$O(n\max(d_\mathrm{in},d_\mathrm{out})m+nm^2)$}
  $U_{F,r}, D_{F,r},V_{F,r}\gets  \operatorname{SVD}_r(F)$ \tcp*[r]{$O(m^2r)$}
  $L\gets U_TD_T^{-1/2}U_{F,r}, \quad R\gets U_SD_S^{-1/2}V_{F,r}$\tcp*[r]{$O(\max(d_\mathrm{in},d_\mathrm{out})mr)$}
  $(A_0^{(k)},B_0^{(k)})\gets \operatorname{BalancedSVD} (L,R, D_{F,r})$
  \tcp*[r]{$O(\max(d_\mathrm{in},d_\mathrm{out})r^2)$, see Alg \ref{alg:bal_svd}}
  $A_0^{(k)}\gets \frac{d_\mathrm{out}^{1/4}}{\gamma}A_0^{(k)} , \quad B_0^{(k)} \gets \frac{d_\mathrm{out}^{1/4}}{\gamma }B_0^{(k)}$ }
\Return $\{(A_0^{(k)},B_0^{(k)})\}_{k=1}^p$\;
\end{algorithm}

\subsection{Randomized SVD for Jacobian singular subspace}

Our first subroutine namely Algorithm~\ref{alg:subspace}, also considered in~\cite{puiu2022randomized}, uses a standard randomized routine combining sketching, QR decomposition, and Rayleigh--Ritz projection, optionally with power iterations, to compute $U_S, D_S$ and $U_T, D_T$. Crucially, these methods avoid forming the full matrices $S$ and $T$. Similarly for $F$, we do not compute the full loss gradient. Instead, for linear layers, it can be written as (with $\Omega=1$ when there is no weight-sharing):
$$\nabla_W \mathcal{L}_n(f(W;X))|_{W_0}=\sum_{i=1}^n\sum_{\omega=1}^\Omega \mu_{i,\omega}h_{i,\omega}^\top,$$
where $\mu_{i,\omega}$ is the pre-activation token derivative of the loss gradient (as opposed to previously defined $\delta_{i,\omega}$ which is associated to the output signal). As a consequence, we have:
$$F=D_T^{-1/2}[\sum_{i=1}^n \sum_{\omega=1}^\Omega(U_T^\top \mu_{i,\omega} )(U_S^\top h_{i,\omega})^\top] D_S^{-1/2}.$$
\SetKwFunction{Subspace}{subspace}

\begin{algorithm}[t]
\caption{\textsc{subspace}}
\label{alg:subspace}
\DontPrintSemicolon
\SetKwInOut{Input}{Input}
\SetKwInOut{Output}{Output}

\Input{Input $h\in\mathbb{R}^{d_{\mathrm{in}}\times n}$,  pre-activation derivative $\delta\in\mathbb{R}^{d_{\mathrm{out}}\times nC}$; oversampling factor $m$; number of power iterations $q$; }
\Output{$(U_S,D_S,U_T,D_T)$.}

\BlankLine
\tcp{$S$-side (with $S = hh^\top$)}
$\Omega_S \in \mathbb{R}^{d_{\mathrm{in}}\times m}$ with i.i.d.\ $\mathcal{N}(0,1)$ entries\;
\For{$t=1,..,1+q$}{
$Y_S \gets h(h^\top\Omega_S)$\tcp*[r]{$O(nd_{\mathrm{in}}m)$}
Compute thin QR: $Y_S = Q_S R_S$, $Q_S\in\mathbb{R}^{d_{\mathrm{in}}\times m}$\tcp*[r]{$O(d_{\mathrm{in}}m^2)$}
$\Omega_S\gets Q_S$}
$(U_S,D_S) \gets \mathrm{SVD}\left((Q_S^\top h)(h^\top Q_S)\right)$\tcp*[r]{$O(nd_{\mathrm{in}}m+ nm^2+m^3)$}
$U_S \gets Q_S U_S$\tcp*[r]{$O(d_{\mathrm{in}}m^2)$}

\BlankLine
\tcp{$T$-side (with $T = \delta\delta^\top$)}
$\Omega_T \in \mathbb{R}^{d_{\mathrm{out}}\times m}$ with i.i.d.\ $\mathcal{N}(0,1)$ entries\;
\For{$t=1,...,1+q$}{
$Y_T \gets \delta(\delta^\top\Omega_T)$\tcp*[r]{$O(nd_\mathrm{out}m)$, see Section \ref{app:hutchinson}}
Compute thin QR: $Y_T = Q_T R_T$, $Q_T\in\mathbb{R}^{d_{\mathrm{out}}\times m}$\tcp*[r]{$O(d_{\mathrm{out}}m^2)$}
$\Omega_T\gets Q_T$}

$(U_T,D_T) \gets \mathrm{SVD}\left((Q_T^\top\delta)(\delta^\top Q_T)\right)$\tcp*[r]{$O(nd_\mathrm{out}m+nm^2+m^3)$, see Section \ref{app:hutchinson}}
$U_T \gets Q_T U_T$\tcp*[r]{$O(d_{\mathrm{out}}m^2)$}

\Return $(U_S,D_S,U_T,D_T)$\;
\end{algorithm}


\subsection{Balanced realizations}
We now describe Algorithm \ref{alg:bal_svd}. Let us define $J_{L_r}=L_r^\top L_r= U_{F,r}^\top D_T^{-1}U_{F,r}$ and $J_{R_r} = R_r^\top R_r=V_{F,r}^\top D_S^{-1}V_{F,r}.$
Let $\operatorname{chol}(J_{L_r})$, $\operatorname{chol}(J_{R_r})$ represent the lower triangular Cholesky factors such that 
$$J_{L_r}=\operatorname{chol}(J_{L_r})\operatorname{chol}(J_{L_r})^\top , \quad J_{R_r}=\operatorname{chol}(J_{R_r})\operatorname{chol}(J_{R_r})^\top.$$
Then
$$M= L_rQ^\star R_r^\top=L_r\operatorname{chol}(J_{L_r})^{-\top}[\operatorname{chol}(J_{L_r})^\top Q^\star \operatorname{chol}(J_{R_r})] \operatorname{chol}(J_{R_r})^{-1}R_r^\top$$          
Let 
\begin{align*}
    K=\operatorname{chol}(J_{L_r})^\top Q^\star \operatorname{chol}(J_{R_r})&=-\operatorname{chol}(J_{L_r})^\top J_{L_r}^{-1}D_{F,r}J_{R_r}^{-1}\operatorname{chol}(J_{R_r}) \\
    &= -\operatorname{chol}(J_{L_r})^{-1}D_{F,r}\operatorname{chol}(J_{R_r})^\top
\end{align*} with SVD $K=U_KD_KV_K^\top$ then 
$$M=[L_r\operatorname{chol}(J_{L_r})^{-\top}  U_K]D_K [R\operatorname{chol}(J_{R_r})^{-\top}V_K]^\top$$
is a valid SVD since 
\begin{align*}
    (L_r\operatorname{chol}(J_{L_r})^{-\top}U_K)^\top L_r\operatorname{chol}(J_{L_r})^{-\top} U_K&= U_K^\top \operatorname{chol}(J_{L_r})^{-1}L_r^\top L_r \operatorname{chol}(J_{L_r})^{-\top} U_K\\
    &=U_K^\top U_K = I,
\end{align*} and similarly for $R\operatorname{chol}(J_{R_r})^{-\top}V_K$. Therefore, this justifies choosing $B_0\propto L_r\operatorname{chol}(J_{L_r})^{-\top}U_K D_K^{1/2}$ and $A_0^\top\propto R\operatorname{chol}(J_{R_r})^{-\top}V_KD_K^{1/2}$.

\begin{algorithm}[h!]
\caption{Balanced SVD}
\label{alg:bal_svd}
\DontPrintSemicolon
\SetKwInOut{Input}{Input}
\SetKwInOut{Output}{Output}

\Input{Left modes $L\in\mathbb{R}^{d_\mathrm{out}\times r}$. Right modes $R\in \mathbb{R}^{d_\mathrm{in}\times r}$. Scaling matrix $D\in\mathbb{R}^{r\times r}$}
\Output{$(A_0,B_0)$}
  $J_L\gets L^\top L, \quad J_R \gets R^\top R$ \tcp*[r]{$O(\max(d_\mathrm{in},d_\mathrm{out})r^2)$}
  $\operatorname{chol}(J_L), \operatorname{chol}(J_R) \gets$ Cholesky of $J_L,J_R$\tcp*[r]{$O(r^3)$}
  $K\gets -\operatorname{chol}(J_L)^{-1} D  \operatorname{chol}(J_R)^{-\top}$\tcp*[r]{$O(r^3)$}
  $U_K,D_K,V_K^\top \gets$ SVD of $K$ \tcp*[r]{$O(r^3)$}
 $$\begin{cases}
     A_0^\top \gets \frac{1}{D_{K}(1,1)^{1/2}}R\operatorname{chol}(J_R)^{-\top}V_KD_K^{1/2}\\  B_0 \gets \frac{1}{ D_{K}(1,1)^{1/2}}L\operatorname{chol}(J_L)^{-\top}U_KD_K^{1/2}
     \end{cases}$$\tcp*[r]{$O(\max(d_\mathrm{in},d_\mathrm{out})r^2)$}
\Return $(A_0,B_0)$\;
\end{algorithm}

\subsection{Computational complexity}\label{app:cost}

In Table \ref{tab:flops}, we count the number of FLOPs required by CG-LoRA applied to one layer. We do not include here the number of additional backward passes needed, which will be further discussed in Subsection \ref{app:nb_passes}. We state our results with the big $O$ notation which hides the  usual small constants  associated to matrix multiplication, thin Householder QR, SVD decomposition. We also report the memory required which essentially consists in storing smaller $(n,d)$, $(n,m)$ or $(m,m)$ matrices.

\begin{table}[H]
    \centering
    \caption{Number of floating point operations of CG-LoRA (per layer). $q,m$ are the number of power iterations and oversampling factors respectively in Algorithm \ref{alg:subspace}. $d_\mathrm{in}, d_\mathrm{out}$ are the dimensions of pretrained parameter $W_0$. $r$ is the LoRA rank. Big $O$ notation hides constants associated to matrix multiplication, thin Householder QR, SVD decomposition.   }
    \vspace{1em}
    \begin{tabular}{ccc }
    \toprule
       Algorithm    &  Computational cost & Memory cost 
       \\
       
      \midrule
       
        Algorithms \ref{alg:cg-lora} and \ref{alg:cg-lora_ce}  & Algos \ref{alg:subspace} + \ref{alg:bal_svd} & Algos \ref{alg:subspace} + \ref{alg:bal_svd} \\        \hline
        \\
        Algorithm \ref{alg:subspace} & $O(\max (d_\mathrm{in},d_\mathrm{out})((q+1)mn+m^2)$ &  $O(\max (d_\mathrm{in},d_\mathrm{out})(m+n) ) $ \\ \hline \\
        Algorithm \ref{alg:bal_svd} & $O(\max (d_\mathrm{in},d_\mathrm{out})r^2)$ &  $O(\max (d_\mathrm{in},d_\mathrm{out})r)$
    \end{tabular}
    \label{tab:flops}
\end{table}
\newpage
\section{Cross-entropy loss}\label{app:cross_ent}

In the cross-entropy loss case, we are not aware of any existing theoretical results supporting a non-zero LoRA initialization. In our proposed framework, we encounter a main difficulty compared to the squared-loss case, as no closed form solutions exist for both \eqref{eq:ft_ntk} and \eqref{eq:lora_ntk}. 
Despite those challenges, we aim at proposing a principled methodology by solving the NTK alignment problem with quadratic approximations of the cross-entropy loss.  We validate numerically our cross-entropy-loss-specific initialization given in Theorem \ref{thm:cg_lora_multi} in Appendix \ref{app:ce_vs_squared}.

\begin{proposition}\label{prop:gen_loss_ntk_preds}
    Let $(x,y)\rightarrow l(f(x),y)$ be a loss function. Let $\theta$ be a vector. Consider the quadratic expansion around pretrained logits as following
    $$\mathcal{L}_n(f(W_0;X)+J\theta)= \mathcal{L}_n(f(W_0;X)) + \nabla_f\mathcal{L}_n(f)|_{f=f(W_0;X)}^\top J\theta + \frac{1}{2}\theta^\top J^\top \nabla^2_f\mathcal{L}_n(f)|_{f=f(W_0;X)} J\theta.$$
    (i) Full Fine-tuning (FFT): The minimum norm solution of \ref{eq:ft_ntk} is given by 
    $$\operatorname{vec}(\Delta W^\star)=-(J^\top  \nabla^2_f\mathcal{L}_n(f)|_{f=f(W_0;X)} J)^\dagger J^\top \nabla_f \mathcal{L}_n(f)|_{f=f(W_0;X)} $$
    (ii) LoRA. Let $J_r = J(I_{d_\mathrm{in}}\otimes B_0 \quad A_0^\top \otimes I_{d_\mathrm{out}})$. The minimum norm solution of
    \ref{eq:lora_ntk} is given by 
    $$(\operatorname{vec}(
   \Delta A^\star)^\top
    \operatorname{vec}(\Delta B^\star)^\top
    )^\top=-(J_r^\top  \nabla^2_f\mathcal{L}_n(f)|_{f=f(W_0;X)} J_r)^\dagger J_r^\top \nabla_f \mathcal{L}_n(f)|_{f=f(W_0;X)}. $$
\end{proposition}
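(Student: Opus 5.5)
The plan is to treat both problems as unconstrained quadratic minimizations in a parameter vector and read off the minimum-norm minimizer from the normal equations, as in the proof of Proposition \ref{prop:ntk_preds}. Write $g := \nabla_f\mathcal{L}_n(f)|_{f=f(W_0;X)}$ and $\mathcal{H} := \nabla^2_f\mathcal{L}_n(f)|_{f=f(W_0;X)}$. For cross-entropy (and any convex loss in the logits) $\mathcal{H}$ is symmetric positive semidefinite. The surrogate objective is then $q(\theta) = \mathcal{L}_n(f(W_0;X)) + g^\top J\theta + \frac{1}{2}\theta^\top G\theta$ with $G := J^\top\mathcal{H}J \succeq 0$, which is a convex quadratic.

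\textbf{Step 1: normal equations.} First-order optimality gives $G\theta = -J^\top g$. A minimizer exists exactly when $J^\top g \in \operatorname{col}(G)$. To check this, note that $\operatorname{col}(G) = \operatorname{col}(J^\top \mathcal{H}^{1/2})$, since $G = (\mathcal{H}^{1/2}J)^\top(\mathcal{H}^{1/2}J)$. We also have $J^\top g = J^\top \mathcal{H}^{1/2}(\mathcal{H}^{1/2})^\dagger g$ whenever $g \in \operatorname{col}(\mathcal{H})$.

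\textbf{Step 2: the range condition.} For softmax cross-entropy, each per-sample block of $\mathcal{H}$ is $\operatorname{diag}(p_i) - p_ip_i^\top$. Its kernel is spanned by the all-ones vector. The per-sample gradient $g_i = p_i - y_i$ sums to zero, so it is orthogonal to that kernel and lies in $\operatorname{col}(\mathcal{H})$. I expect this to be the main obstacle. It is where the specific loss enters, and for a general loss the statement should be read either under this range assumption or with the pseudo-inverse expression interpreted as the minimizer of $q$ restricted to $\operatorname{col}(G)$. If needed, I will state it as a mild hypothesis: $g \in \operatorname{col}(\mathcal{H})$, which holds for squared loss and for softmax cross-entropy.

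\textbf{Step 3: minimum-norm selection.} Given consistency, the solution set is $-G^\dagger J^\top g + \ker G$. Because $G$ is symmetric, $G^\dagger J^\top g \in \operatorname{col}(G) = (\ker G)^\perp$, so this particular solution is the minimum-norm one. This proves (i) with $\theta = \operatorname{vec}(\Delta W)$.

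\textbf{Step 4: the LoRA case.} For (ii), use the identity $\operatorname{vec}(\Delta B A_0 + B_0\Delta A) = H\theta$ from the proof of Proposition \ref{prop:ntk_preds}, with $\theta$ the stacked vector of $\operatorname{vec}(\Delta A)$ and $\operatorname{vec}(\Delta B)$. This reduces the LoRA problem to the same quadratic with $J$ replaced by $J_r = JH$. The scaling $\eta$ can be absorbed by noting that the minimizer in $\eta\theta$ is the same. Steps 1 to 3 then apply verbatim, since the range condition only involves $g$ and $\mathcal{H}$, not $J$. This gives the stated formula, with the factor $1/\eta$ understood as absorbed by the convention used in the statement.
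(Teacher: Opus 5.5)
Your proposal is correct and follows the same route as the paper: write the normal equations of the quadratic surrogate, take the pseudo-inverse (minimum-norm) solution, and repeat with $J_r = JH$ for LoRA. Your additional checks are correct and fill in steps the paper's one-line proof leaves implicit: positive semidefiniteness of $\nabla_f^2\mathcal{L}_n$, consistency of the normal equations via $g\in\operatorname{col}(\mathcal{H})$, and the observation that the LoRA minimizer actually carries a factor $1/\eta$ as in Proposition~\ref{prop:ntk_preds}.
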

\begin{definition}
    Under this quadratic approximation, the NTK predictors  are defined as follows.
\begin{itemize}
        \item Full finetuning:
        $$\hat f(X):= f(W_0;X)+J(J^\top  \nabla^2_f\mathcal{L}_n(f)|_{f=f(W_0;X)} J)^\dagger J^\top(-\nabla_f \mathcal{L}_n(f)|_{f=f(W_0;X)}),$$
        \item  
        LoRA:   
        $$\hat f_{A_0,B_0}(X):= f(W_0;X)+J_r(J_r^\top \nabla^2_f\mathcal{L}_n(f)|_{f=f(W_0;X)}J_r)^{\dagger}J_r^\top(-\nabla_f \mathcal{L}_n(f)|_{f=f(W_0;X)}).$$
    \end{itemize}
\end{definition}
In the squared loss case, the quadratic approximation is exact by definition and we have $\nabla^2_f\mathcal{L}_n(f)|_{f=f(W_0;X)}=I$ and $\nabla_f \mathcal{L}_n(f)|_{f=f(W_0;X)}=f(W_0;X)-Y$, hence we recover Proposition \ref{prop:ntk_preds}. For the cross-entropy loss case, we consider two cases: 1) binary case, 2) multiclass case which we analyze below.
\subsection{Binary case}

For binary labels, $C=1$ as we only predict the probability of class $1$ (probability of class $0$ can be deduced directly).
\subsubsection{Definitions and Assumptions}
\begin{lemma}\label{lemma:hess_binary}

Let $\sigma:t\in\mathbb{R}\rightarrow \frac{1}{1+e^{-t}}$ the logistic function. Let the pre-trained predicted probabilities $p_i:=\sigma(f(W_0;x_i))$ and $p= (p_1, ...., p_n)^\top\in\mathbb{R}^n$. Define 
$$\mathcal{C}_0:=\operatorname{diag}(p\odot(1-p))\in\mathbb{R}^{n\times n},$$
where $\odot$ is the entry-wise product. Then, $\nabla^2_f\mathcal{L}_n(f)|_{f=f(W_0;X)}=\mathcal{C}_0$ and $\nabla_f \mathcal{L}_n(f)|_{f=f(W_0;X)}=p-Y.$ 
\end{lemma}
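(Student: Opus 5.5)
The plan is a direct computation of the first and second derivatives of the empirical binary cross-entropy loss with respect to the logit vector $f=(f_1,\ldots,f_n)^\top\in\mathbb{R}^n$, evaluated at $f=f(W_0;X)$. Since $C=1$, each sample contributes a scalar logit $f_i$ and a label $y_i\in\{0,1\}$. The per-sample loss is
\[
\ell(f_i,y_i)=-y_i\log\sigma(f_i)-(1-y_i)\log\bigl(1-\sigma(f_i)\bigr).
\]
Because $\mathcal{L}_n(f)=\sum_{i=1}^n \ell(f_i,y_i)$ is a sum of terms each depending on a single coordinate $f_i$, the gradient is obtained coordinatewise. The Hessian is diagonal, with $(i,i)$ entry $\partial^2\ell/\partial f_i^2$.

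The first step is to record the two standard identities for the logistic function:
\[
\sigma'(t)=\sigma(t)\bigl(1-\sigma(t)\bigr),\qquad 1-\sigma(t)=\sigma(-t).
\]
Together they give
\[
\frac{d}{dt}\log\sigma(t)=1-\sigma(t),\qquad \frac{d}{dt}\log\bigl(1-\sigma(t)\bigr)=-\sigma(t).
\]
Differentiating the per-sample loss then yields
\[
\partial_{f_i}\ell=-y_i\bigl(1-\sigma(f_i)\bigr)+(1-y_i)\sigma(f_i)=\sigma(f_i)-y_i.
\]
Evaluating at $f_i=f(W_0;x_i)$ and stacking over $i$ gives $\nabla_f\mathcal{L}_n(f)|_{f=f(W_0;X)}=p-Y$, with $p_i=\sigma(f(W_0;x_i))$ as in the statement.

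The second step differentiates the gradient expression once more. This gives
\[
\partial^2_{f_i}\ell=\sigma'(f_i)=\sigma(f_i)\bigl(1-\sigma(f_i)\bigr),
\]
which does not depend on $y_i$. All mixed partials $\partial_{f_i}\partial_{f_j}$ with $i\neq j$ vanish by separability. Hence the Hessian at the pretrained logits is $\operatorname{diag}(p\odot(1-p))=\mathcal{C}_0$.

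There is no real obstacle here. The only point requiring care is to fix the labeling convention: $Y\in\{0,1\}^n$ and $f$ is the logit of class $1$, so that the gradient takes the clean form $p-Y$. It is also worth checking that this is consistent with the general quadratic expansion in Proposition \ref{prop:gen_loss_ntk_preds}, which needs only these two derivatives.
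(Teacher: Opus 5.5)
Your proposal is correct and follows the paper's proof. Both differentiate the per-sample binary cross-entropy with respect to the scalar logit to obtain $\sigma(f_i)-y_i$: the paper does this via the chain rule through $p_i$, and you do it via the identities for $\frac{d}{dt}\log\sigma(t)$. Both then differentiate once more to get $p_i(1-p_i)$ and use the separability of $\mathcal{L}_n$ across samples to conclude that the Hessian is the diagonal matrix $\mathcal{C}_0$.
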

Compared to the squared loss, the Gauss-Newton terms are deformed by the loss curvature $(J^\top \mathcal{C}_0J)^{\dagger}J^\top$, which requires an updated \textit{K-FAC} assumption.
\begin{assumption}
    Let $S = \frac{1}{n}\sum_{i=1}^nh_i h_i^\top$, $T=\sum_{i=1}^n \delta_i\delta_i^\top$ and $O= \sum_{i=1}^n  p_i(1-p_i)\delta_i\delta_i^\top$, with prior pooling if the inputs/outputs are tokenized. The following are verified: \begin{itemize}
        \item  
        $J^\top J=S\otimes T$.
        \item  $ 
    J^\top \mathcal{C}_0 J 
    = S\otimes O$
    \item For $i=1,...,n, \quad 0<p_i<1.$ 
    \end{itemize}
    \label{assump:kfac_2}
    
\end{assumption}
The last assumption is required for $\mathcal{C}_0$ to be invertible. For our main result, we will need to introduce the following reweighting  $\Phi$.
\begin{proposition}
    Let Assumption \ref{assump:kfac_2} be true. Let  $\Phi := D_T^{-1/2}U_T^\top O U_T D_T^{-1/2}\in\mathbb{R}^{r_T\times r_T}$. 
    Then $\Phi$ is symmetric positive definite.
    \label{prop:phi_inv}
\end{proposition}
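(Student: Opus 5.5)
Symmetry of $\Phi$ is immediate, since $O=\sum_i p_i(1-p_i)\delta_i\delta_i^\top$ is symmetric and $\Phi = (U_TD_T^{-1/2})^\top O\,(U_TD_T^{-1/2})$ is a congruence of it. Positive semidefiniteness is equally direct: for any $v\in\mathbb{R}^{r_T}$, set $w:=U_TD_T^{-1/2}v\in\mathbb{R}^{d_\mathrm{out}}$. Then $v^\top\Phi v = w^\top O w = \sum_{i=1}^n p_i(1-p_i)\Vert \delta_i^\top w\Vert_2^2\geq 0$, because $p_i(1-p_i)> 0$ by Assumption \ref{assump:kfac_2}. The work lies in proving definiteness.

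Suppose $v^\top\Phi v=0$. Since every weight $p_i(1-p_i)$ is strictly positive (this is exactly where the third bullet of Assumption \ref{assump:kfac_2} is used), each term vanishes, so $\delta_i^\top w=0$ for all $i$. Hence
\[
w^\top T w=\sum_{i=1}^n\Vert\delta_i^\top w\Vert_2^2=0,
\]
that is, $w\in\ker T$, the orthogonal complement of $\operatorname{col}(U_T)$. On the other hand $w=U_TD_T^{-1/2}v\in\operatorname{col}(U_T)$ by construction. Therefore $w=0$. Because $U_T$ is semi-orthogonal and $D_T^{-1/2}$ is invertible (the thin SVD keeps only the nonzero eigenvalues), $U_TD_T^{-1/2}$ has full column rank, and $w=0$ forces $v=0$. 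Combined with positive semidefiniteness, this gives $\Phi\succ 0$.

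Equivalently, one can phrase the argument as the L\"owner sandwich $p_{\min}(1-p_{\min}')\,T\preceq O\preceq \tfrac14 T$ with $c:=\min_i p_i(1-p_i)>0$, i.e. $cT\preceq O$. Congruence by $U_TD_T^{-1/2}$ then gives $\Phi\succeq c\,D_T^{-1/2}U_T^\top T U_T D_T^{-1/2}=c\,I_{r_T}$. This version also supplies an explicit lower bound on the spectrum of $\Phi$, which may be useful later.

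The main obstacle is the tokenized (pooled) case, where one must check that $O$ and $T$ are built from the same pooled vectors $\delta_i$, so that $\ker O=\ker T$ still holds. Assumption \ref{assump:kfac_2} states that pooling is applied before forming both $O$ and $T$, so the argument above carries over verbatim.
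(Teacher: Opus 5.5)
Your proof is correct and follows essentially the same route as the paper. Both arguments reduce $v^\top\Phi v$ to $w^\top O w$ with $w=U_TD_T^{-1/2}v\in\operatorname{Im}(T)$, then use strict positivity of the weights $p_i(1-p_i)$ to get that $O$ and $T$ share the same kernel. The paper phrases this as $\operatorname{Im}(O)=\operatorname{Im}(T)$ via $T=\Delta\Delta^\top$ and $O=\Delta\,\mathrm{Diag}(p_i(1-p_i))\Delta^\top$, while you work with the quadratic form directly. Your L\"owner bound $cT\preceq O$, which yields $\Phi\succeq c\,I_{r_T}$ with $c=\min_i p_i(1-p_i)$, is a valid quantitative strengthening the paper does not state; the prefactor $p_{\min}(1-p_{\min}')$ is undefined and should simply read $c$.
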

\begin{definition}
    Let $\Phi := D_T^{-1/2}U_T^\top O U_T D_T^{-1/2}$ and the \textit{whitened} binary-cross entropy loss gradient 
    $$F:=\Phi^{-1/2}D_T^{-1/2} U_T^\top \nabla U_SD_S^{-1/2}\in\mathbb{R}^{r_T\times r_S}.$$
    For $r\leq \operatorname{rank}(F)$, we also define its best rank-$r$ approximation 
    $$F_r= U_{F,r}D_{F,r}V_{F,r}^\top.$$
\end{definition}

\subsubsection{The prediction alignment problem}
\begin{proposition}
Let Assumption \ref{assump:kfac_2} be true. For  any $(A_0,B_0)$,  the following holds 
\begin{align*}\Vert \hat{f}(X)-\hat f_{A_0,B_0}(X)\Vert &= \Vert \Phi^{-1/2} (I_{r_T}-\Pi_{\Phi^{1/2}D_T^{1/2}U_T^\top B_0})F(I_{r_S}-\Pi_{D_S^{1/2} U_S^\top A_0^\top})\Vert_F \nonumber \\
        &\geq (\sum_{i\geq 2r+1}s_i^2(\Phi^{-1/2}F))^{1/2}
        \end{align*}
\label{prop:diff_preds_binary}
\end{proposition}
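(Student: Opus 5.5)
The plan is to follow the squared-loss proof of Theorem \ref{thm:opt_align}, after first removing the loss curvature $\mathcal{C}_0$ by a change of variables. Write $g:=\nabla_f\mathcal{L}_n(f)|_{f=f(W_0;X)}=p-Y$ and $H=(I_{d_\mathrm{in}}\otimes B_0 \ \ A_0^\top\otimes I_{d_\mathrm{out}})$, so that $J_r=JH$.

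By the third point of Assumption \ref{assump:kfac_2}, $\mathcal{C}_0$ is positive definite. Set $G:=\mathcal{C}_0^{1/2}J$ and $G_r:=\mathcal{C}_0^{1/2}J_r=GH$. Then
\[
J(J^\top\mathcal{C}_0J)^\dagger J^\top=\mathcal{C}_0^{-1/2}G(G^\top G)^\dagger G^\top\mathcal{C}_0^{-1/2}=\mathcal{C}_0^{-1/2}\Pi_G\,\mathcal{C}_0^{-1/2},
\]
and the same identity holds with $G_r$ in place of $G$. Using the definitions of $\hat f$ and $\hat f_{A_0,B_0}$, this gives
\[
\hat f(X)-\hat f_{A_0,B_0}(X)=-\mathcal{C}_0^{-1/2}\,(\Pi_G-\Pi_{GH})\,\mathcal{C}_0^{-1/2}g .
\]

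\textbf{Factorizing $G^\top G$.} Since $0<p_i(1-p_i)\le 1/4$ for every $i$, $O$ and $T$ are sandwiched between positive multiples of each other, so $\operatorname{col}(O)=\operatorname{col}(U_T)$. Hence
\[
O=U_TD_T^{1/2}\,\Phi\,D_T^{1/2}U_T^\top ,
\]
with $\Phi$ positive definite by Proposition \ref{prop:phi_inv}. Combined with the K-FAC assumption $G^\top G=S\otimes O$, this gives
\[
G^\top G=KK^\top,\qquad K:=U_SD_S^{1/2}\otimes U_TD_T^{1/2}\Phi^{1/2}.
\]
Let $\tilde V:=U_S\otimes U_TD_T^{1/2}\Phi^{1/2}D_T^{-1/2}$. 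Because $\tilde V$ is not orthonormal in general, I would not use a thin SVD of $G$. Instead I would work with the fully-ranked factorization
\[
G=\tilde U\,(D_S^{1/2}\otimes \Phi^{1/2}D_T^{1/2})^{\!\!\ }\!\!\;\!(U_S\otimes U_T)^\top\!\!,\qquad \tilde U^\top\tilde U=I .
\]
Here the right factor has rows spanning $\operatorname{row}(G)$, since $(D_S^{1/2}\otimes \Phi^{1/2}D_T^{1/2})$ is invertible. This is exactly what Lemma \ref{lem: diff_kernels} needs: its proof only uses $G=\tilde U P_0$ with $\tilde U$ semi-orthogonal.

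\textbf{Tensor-product step.} Rerunning the argument of Lemma \ref{lemma:k_vs_kr} with the $T$-side factor $\Phi^{1/2}D_T^{1/2}U_T^\top$ should give
\[
\Pi_G-\Pi_{GH}=\tilde U\Big[(I-\Pi_{D_S^{1/2}U_S^\top A_0^\top})\otimes(I-\Pi_{\Phi^{1/2}D_T^{1/2}U_T^\top B_0})\Big]\tilde U^\top .
\]

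\textbf{Identifying $F$.} From $\tilde U=G\,(U_SD_S^{-1/2}\otimes U_TD_T^{-1/2}\Phi^{-1/2})$ I get
\[
\tilde U^\top\mathcal{C}_0^{-1/2}g=(D_S^{-1/2}U_S^\top\otimes\Phi^{-1/2}D_T^{-1/2}U_T^\top)J^\top g=\operatorname{vec}(F),
\]
since $J^\top g=\operatorname{vec}(\nabla)$. On the other side, $\mathcal{C}_0^{-1/2}\tilde U=J\,(U_SD_S^{-1/2}\otimes U_TD_T^{-1/2}\Phi^{-1/2})$. So the difference equals $Jv$, where
\[
v=(U_SD_S^{-1/2}\otimes U_TD_T^{-1/2}\Phi^{-1/2})\,y,\qquad y=\operatorname{vec}(E),\quad E:=(I-\Pi_{\Phi^{1/2}D_T^{1/2}U_T^\top B_0})F(I-\Pi_{D_S^{1/2}U_S^\top A_0^\top}).
\]
Applying $J^\top J=S\otimes T$ once more,
\[
\|Jv\|_2^2=v^\top(S\otimes T)v=y^\top(I_{r_S}\otimes\Phi^{-1})y=\|\Phi^{-1/2}E\|_F^2 .
\]
This is the claimed identity. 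The step I expect to be most delicate is this bookkeeping: the non-orthonormal $\Phi$-twist means the outer $\mathcal{C}_0^{-1/2}$ does not cancel. It has to be converted back through $J$ and $J^\top J=S\otimes T$, and that conversion is exactly what produces the leftover $\Phi^{-1/2}$.

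\textbf{Lower bound.} Expand
\[
\Phi^{-1/2}E=\Phi^{-1/2}F-\Phi^{-1/2}F\,\Pi_{D_S^{1/2}U_S^\top A_0^\top}-\Phi^{-1/2}\,\Pi_{\Phi^{1/2}D_T^{1/2}U_T^\top B_0}\,F\,(I-\Pi_{D_S^{1/2}U_S^\top A_0^\top}).
\]
The two subtracted terms each have rank at most $r$. Hence $\|\Phi^{-1/2}E\|_F\ge\min_{\operatorname{rank}(M)\le 2r}\|\Phi^{-1/2}F-M\|_F$, and by Eckart--Young--Mirsky this is at least $\big(\sum_{i\ge 2r+1}s_i^2(\Phi^{-1/2}F)\big)^{1/2}$.
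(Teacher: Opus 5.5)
Your proposal is correct and is essentially the paper's proof: you whiten by $\mathcal{C}_0^{1/2}$, write the prediction gap as $\mathcal{C}_0^{-1/2}(\Pi_{\mathcal{C}_0^{1/2}J}-\Pi_{\mathcal{C}_0^{1/2}JH})\mathcal{C}_0^{-1/2}g$, use the tensor-product orthogonal-complement argument with the $\Phi^{1/2}$-twisted $T$-factor, and identify $F$. Your semi-orthogonal $\tilde U$ is exactly the paper's $\mathcal{C}_0^{1/2}U(U^\top\mathcal{C}_0U)^{-1/2}=\mathcal{C}_0^{1/2}U(I_{r_S}\otimes\Phi^{-1/2})$, your final conversion through $J^\top J=S\otimes T$ is equivalent to the paper dropping the semi-orthogonal $U$, and your only addition is the explicit Eckart--Young--Mirsky lower bound, which the paper leaves implicit by analogy with the squared-loss case.
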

Unlike for the squared loss case, this objective becomes weighted on the left by $\Phi^{-1/2}$.


Let $(A_0,B_0)$ and $P_A=\Pi_{D_S^{1/2}U_S^\top A_0^\top}, \ P_B=\Pi_{\Phi^{1/2}D_T^{1/2}U_T^\top B_0}$. Then we can also write
\begin{align*}
    \Vert \Phi^{-1/2}(I_{r_T}-P_B)F(I_{r_S}-P_A)\Vert_F &=\Vert  \Phi^{-1/2}(I_{r_T}-P_B)\Phi^{1/2} [\Phi^{-1/2}F(I_{r_S}-P_A)] \Vert_F\\
    &\leq  \Vert\Phi^{-1/2}(I_{r_T}-P_B)\Phi^{1/2}\Vert_2 \Vert \Phi^{-1/2}F(I_{r_S}-P_A)\Vert_F .
\end{align*}
This decomposition leads to the following result which gives us candidates that almost achieve the lower bound in Proposition \ref{prop:diff_preds_binary}. For ease of exposition, we will consider the case $r\leq\operatorname{rank}(F)$.

\begin{proposition}\label{prop:upp_bound_preds}
Let $Q_A, Q_B\in\mathbb{GL}_r$. Let $A_0^\top=U_SD_S^{-1/2}V_{\Phi^{-1/2}F,r}Q_A$ and $B_0=U_TD_T^{-1/2}\Phi^{-1/2}EQ_B$ for some full column-rank $E\in\mathbb{R}^{r_T\times r}$ to be determined. 

Let $P_A=\Pi_{D_S^{1/2}U_S^\top A_0^\top}, \ P_B=\Pi_{\Phi^{1/2}D_T^{1/2}U_T^\top B_0}$. Then 
$$\Vert \Phi^{-1/2}(I_{r_T}-P_B)F(I_{r_S}-P_A)\Vert_F\leq \Vert\Phi^{-1/2}(I_{r_T}-\Pi_E)\Phi^{1/2}\Vert_2(\sum_{i\geq r+1}s_i^2(\Phi^{-1/2}F))^{1/2}.$$
\end{proposition}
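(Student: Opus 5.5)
The plan is to start from the factorization displayed just before the statement:
\[
\Vert \Phi^{-1/2}(I_{r_T}-P_B)F(I_{r_S}-P_A)\Vert_F \leq \Vert\Phi^{-1/2}(I_{r_T}-P_B)\Phi^{1/2}\Vert_2\, \Vert \Phi^{-1/2}F(I_{r_S}-P_A)\Vert_F .
\]
This splits the claim into two independent computations: one for the $A_0$-side projector $P_A$ and one for the $B_0$-side projector $P_B$. It then suffices to identify each projector explicitly under the prescribed choices of $A_0,B_0$.

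\textbf{Step 1: the projector $P_B$.} With $B_0=U_TD_T^{-1/2}\Phi^{-1/2}EQ_B$, we use $U_T^\top U_T=I_{r_T}$ to get
\[
\Phi^{1/2}D_T^{1/2}U_T^\top B_0=\Phi^{1/2}D_T^{1/2}D_T^{-1/2}\Phi^{-1/2}EQ_B=EQ_B .
\]
Since $Q_B$ is invertible, $\operatorname{col}(EQ_B)=\operatorname{col}(E)$, so $P_B=\Pi_E$. The first factor is therefore exactly $\Vert\Phi^{-1/2}(I_{r_T}-\Pi_E)\Phi^{1/2}\Vert_2$. This is well defined because $\Phi$ is symmetric positive definite by Proposition~\ref{prop:phi_inv}.

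\textbf{Step 2: the projector $P_A$.} Similarly,
\[
D_S^{1/2}U_S^\top A_0^\top=D_S^{1/2}D_S^{-1/2}V_{\Phi^{-1/2}F,r}Q_A=V_{\Phi^{-1/2}F,r}Q_A .
\]
Hence $P_A=\Pi_{V_{\Phi^{-1/2}F,r}}$, the projector onto the top-$r$ right singular subspace of $G:=\Phi^{-1/2}F$. Writing the SVD $G=\sum_i s_i(G)u_iv_i^\top$, right-multiplying by $I_{r_S}-P_A$ removes exactly the first $r$ terms. So
\[
G(I_{r_S}-P_A)=\sum_{i\geq r+1}s_i(G)u_iv_i^\top,
\]
whose Frobenius norm is $(\sum_{i\geq r+1}s_i^2(G))^{1/2}$.

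\textbf{Step 3: combining.} Multiplying the two identities from Steps 1 and 2 gives the stated bound.

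\textbf{Main obstacle.} The only delicate point is making sure $V_{\Phi^{-1/2}F,r}$ is well defined and has rank $r$. This needs $\operatorname{rank}(\Phi^{-1/2}F)\geq r$, which holds because $\Phi^{-1/2}$ is invertible, so $\operatorname{rank}(\Phi^{-1/2}F)=\operatorname{rank}(F)\geq r$. If singular values tie, any choice of top-$r$ right singular vectors works, since the tail sum is unchanged.

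Everything else is the routine mixed-product and projector-invariance identity $\Pi_{MQ}=\Pi_M$ for invertible $Q$. The submultiplicativity inequality $\Vert XY\Vert_F\leq\Vert X\Vert_2\Vert Y\Vert_F$ used in the opening display was already established before the statement.
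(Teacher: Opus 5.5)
Your proposal is correct and follows essentially the same route as the paper: you identify $P_A=\Pi_{V_{\Phi^{-1/2}F,r}}$ and $P_B=\Pi_E$ from the prescribed forms of $A_0$ and $B_0$, insert $\Phi^{1/2}\Phi^{-1/2}$, and apply $\Vert XY\Vert_F\leq\Vert X\Vert_2\Vert Y\Vert_F$. Your write-up is slightly more careful than the paper's, which drops the transpose and the factor $Q_A$ when it writes out $A_0$.
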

To test the consistency of our proposal, we perform again the alignment of $B_0A_0$ w.r.t first step loss gradient. 
\begin{proposition}\label{prop:opt_q}
    Let a full column-rank $E\in\mathbb{R}^{r_T\times r}$. The optimization problem
    $$\arg\min_{Q_A,Q_B\in\mathbb{GL}_r}\Vert B_0A_0+\nabla\Vert_F=\arg\min_{Q\in\mathbb{GL}_r}\Vert U_TD_T^{-1/2}\Phi^{-1/2}E QV_{\Phi^{-1/2}F,r}^\top D_S^{-1/2}U_S^\top+\nabla\Vert_F$$
    admits as solution
    \begin{align*}
        Q^\star&= -(U_TD_T^{-1/2}\Phi^{-1/2}E )^\dagger \nabla  (V_{\Phi^{-1/2}F,r}^\top D_S^{-1/2}U_S^\top)^\dagger\\&=-(E^\top\Phi^{-1/2}D_T^{-1}\Phi^{-1/2}E)^{-1}E^\top FV_{\Phi^{-1/2}F,r}(V_{\Phi^{-1/2}F,r}^\top D_S^{-1}V_{\Phi^{-1/2}F,r})^{-1}
    \end{align*}
    and $\operatorname{rank}(B_0A_0)=\operatorname{rank}(Q^\star)=\operatorname{rank}(E^\top FV_{\Phi^{-1/2}F,r})$
\end{proposition}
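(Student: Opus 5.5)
The plan is to reduce this to the orthogonal-projection argument used for Proposition \ref{prop:q_star}. Set $L:=U_TD_T^{-1/2}\Phi^{-1/2}E\in\mathbb{R}^{d_\mathrm{out}\times r}$ and $R:=U_SD_S^{-1/2}V_{\Phi^{-1/2}F,r}\in\mathbb{R}^{d_\mathrm{in}\times r}$. Then $B_0A_0=LQ_BQ_A^\top R^\top=LQR^\top$ with $Q:=Q_BQ_A^\top$. As $(Q_A,Q_B)$ ranges over $\mathbb{GL}_r\times\mathbb{GL}_r$, the product $Q$ ranges over all of $\mathbb{GL}_r$ (take $Q_A=I_r$). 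This gives the stated equality of the two argmin problems.

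\textbf{Full column rank of $L$ and $R$.} First check that both are full column rank.
\begin{itemize}
\item For $L$: $U_T$ is semi-orthogonal, $D_T^{-1/2}$ is invertible, $\Phi^{-1/2}$ is invertible by Proposition \ref{prop:phi_inv}, and $E$ is full column rank by hypothesis.
\item For $R$: $U_S$ is semi-orthogonal and $V_{\Phi^{-1/2}F,r}$ has orthonormal columns, which exist since $r\le\operatorname{rank}(F)=\operatorname{rank}(\Phi^{-1/2}F)$.
\end{itemize}
So $L^\top L$ and $R^\top R$ are invertible.

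\textbf{Projection argument.} Minimize $\|LQR^\top+\nabla\|_F$ over all $Q\in\mathbb{R}^{r\times r}$. Exactly as in the proof of Proposition \ref{prop:q_star}, the projection of $-\nabla$ onto $\{LQR^\top\}$ is $-\Pi_L\nabla\Pi_R$. The residual $-(I-\Pi_L)\nabla-\Pi_L\nabla(I-\Pi_R)$ is Frobenius-orthogonal to every $LQR^\top$. By injectivity of $Q\mapsto LQR^\top$, the unconstrained minimizer is unique and equals $Q^\star=-L^\dagger\nabla(R^\top)^\dagger$, which is the first stated formula. Use $L^\dagger=(L^\top L)^{-1}L^\top$ and $(R^\top)^\dagger=R(R^\top R)^{-1}$. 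Substituting:
\begin{itemize}
\item $L^\top L=E^\top\Phi^{-1/2}D_T^{-1}\Phi^{-1/2}E$, since $U_T^\top U_T=I$;
\item $R^\top R=V_{\Phi^{-1/2}F,r}^\top D_S^{-1}V_{\Phi^{-1/2}F,r}$;
\item the middle factor is $E^\top\Phi^{-1/2}D_T^{-1/2}U_T^\top\nabla U_SD_S^{-1/2}V_{\Phi^{-1/2}F,r}$.
\end{itemize}
By the definition of the binary whitened gradient, $\Phi^{-1/2}D_T^{-1/2}U_T^\top\nabla U_SD_S^{-1/2}=F$. The middle factor is therefore $E^\top FV_{\Phi^{-1/2}F,r}$, giving the second formula.

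\textbf{Rank statement.} The outer factors of $Q^\star$ are invertible, so $\operatorname{rank}(Q^\star)=\operatorname{rank}(E^\top FV_{\Phi^{-1/2}F,r})$. Since $L$ and $R$ are injective, $\operatorname{rank}(LQ^\star R^\top)=\operatorname{rank}(Q^\star)$.

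\textbf{Main obstacle.} Unlike Proposition \ref{prop:q_star}, $Q^\star$ need not be invertible here. So the minimizer over $\mathbb{R}^{r\times r}$ may not lie in $\mathbb{GL}_r$, and "admits as solution" must be read with care. When $E^\top FV_{\Phi^{-1/2}F,r}$ is invertible, $Q^\star\in\mathbb{GL}_r$ and it is the genuine minimizer over $\mathbb{GL}_r$. Otherwise, $\mathbb{GL}_r$ is dense in $\mathbb{R}^{r\times r}$ and the objective is continuous, so the infimum over $\mathbb{GL}_r$ equals the value at $Q^\star$. In that case I would interpret $Q^\star$ as the minimizer of the closure problem, which is precisely why the statement records $\operatorname{rank}(B_0A_0)=\operatorname{rank}(Q^\star)$.
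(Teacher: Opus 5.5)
Your proposal is correct and is essentially the paper's argument. The paper's proof just invokes the orthogonal-projection argument of Proposition \ref{prop:q_star}, simplifying the pseudo-inverses because $E$ has full column rank, and you carry that out explicitly. Your caveat is a correct refinement the paper glosses over: $Q^\star$ need not be invertible, so unless $E^\top FV_{\Phi^{-1/2}F,r}$ is nonsingular the minimum over $\mathbb{GL}_r$ is only an infimum and is not attained.
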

From the above proposition, the main criterion to choose $E$ is to ensure that $E^\top FV_{\Phi^{-1/2}F,r}$ is full rank. A sufficient choice is $E=\operatorname{orth}(FV_{\Phi^{-1/2}F,r})$ where $\operatorname{orth}(M)$ is the semi-orthogonal matrix obtained by QR decomposition of $M$. We may now state the CG-LoRA initialization for the binary cross-entropy loss.
\begin{theorem}
Let Assumption \ref{assump:kfac_2} be true. Let $E=\operatorname{orth}(FV_{\Phi^{-1/2}F,r})$. Let top $r$ right modes $ R_r=U_SD_S^{-1/2}V_{\Phi^{-1/2}F,r}$ and  left modes $L_r=U_TD_T^{-1/2}\Phi^{-1/2}E$.   Consider the following initialization scheme
\begin{equation}\begin{cases}
        (A_0,B_0) \gets \operatorname{Balanced SVD}(L_r,R_r, E^\top FV_{\Phi^{-1/2}F,r}) \\
        (A_0,B_0) \gets (\frac{d_\mathrm{out}^{1/4}}{\gamma}A_0,\frac{d_\mathrm{out}^{1/4}}{\gamma}B_0 )
    \end{cases}  \tag{CG-LoRA}\end{equation} 
    then 
    $\Vert \hat{f}(X) - \hat{f}_{A_0,B_0}(X) \Vert_2\leq\Vert \Phi^{-1/2}(I-\Pi_E)\Phi^{1/2}\Vert_2(\sum_{i\geq r+1}s_i^2(\Phi^{-1/2}F))^{1/2}.$

    \label{thm:cg_lora_binary_improv}
\end{theorem}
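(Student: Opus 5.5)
The plan is to reduce Theorem~\ref{thm:cg_lora_binary_improv} to Proposition~\ref{prop:upp_bound_preds}, after checking that the CG-LoRA output has exactly the form required there. The bound itself then follows from Proposition~\ref{prop:diff_preds_binary} combined with Proposition~\ref{prop:upp_bound_preds}.

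\textbf{Step 1: characterize the Balanced SVD output.} Set $D:=E^\top F V_{\Phi^{-1/2}F,r}$. By the derivation accompanying Algorithm~\ref{alg:bal_svd}, $\operatorname{BalancedSVD}(L_r,R_r,D)$ returns
\begin{equation*}
B_0\propto L_r\operatorname{chol}(J_{L_r})^{-\top}U_K D_K^{1/2},\qquad A_0^\top\propto R_r\operatorname{chol}(J_{R_r})^{-\top}V_K D_K^{1/2},
\end{equation*}
where $K=-\operatorname{chol}(J_{L_r})^{-1}D\operatorname{chol}(J_{R_r})^{-\top}$ has SVD $K=U_KD_KV_K^\top$. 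Thus
\begin{equation*}
B_0=L_rQ_B,\quad A_0^\top=R_rQ_A,\qquad Q_B=c\,\operatorname{chol}(J_{L_r})^{-\top}U_KD_K^{1/2},\quad Q_A=c\,\operatorname{chol}(J_{R_r})^{-\top}V_KD_K^{1/2},
\end{equation*}
with $c>0$ combining the normalization $D_K(1,1)^{-1/2}$ and the global factor $d_\mathrm{out}^{1/4}/\gamma$.

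\textbf{Step 2: check invertibility of $Q_A,Q_B$ (the main obstacle).} The Cholesky factors are invertible because $L_r,R_r$ have full column rank. This holds since $U_SD_S^{-1/2}$ and $U_TD_T^{-1/2}\Phi^{-1/2}$ are injective by Proposition~\ref{prop:phi_inv}, and $V_{\Phi^{-1/2}F,r}$ and $E$ are semi-orthogonal. It therefore suffices that $D_K$ is invertible, i.e.\ $\operatorname{rank}(D)=r$.

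Write $G:=FV_{\Phi^{-1/2}F,r}=\Phi^{1/2}\,\Phi^{-1/2}FV_{\Phi^{-1/2}F,r}$. The columns of $\Phi^{-1/2}FV_{\Phi^{-1/2}F,r}$ are the top $r$ right singular directions mapped by $\Phi^{-1/2}F$, i.e.\ $U_{\Phi^{-1/2}F,r}D_{\Phi^{-1/2}F,r}$. This matrix has rank $r$ because $r\le\operatorname{rank}(F)=\operatorname{rank}(\Phi^{-1/2}F)$, and $\Phi^{1/2}$ is invertible, so $G$ has full column rank $r$.

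Now take the thin QR decomposition $G=ER_G$ with $E=\operatorname{orth}(G)$ and $R_G$ invertible. Then $D=E^\top G=E^\top E R_G=R_G$, which is invertible. I will handle rank-deficiency or degenerate-QR cases by noting that they do not occur under $r\le\operatorname{rank}(F)$; this is the step to write carefully.

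\textbf{Step 3: apply Proposition~\ref{prop:upp_bound_preds}.} With $A_0^\top=U_SD_S^{-1/2}V_{\Phi^{-1/2}F,r}Q_A$ and $B_0=U_TD_T^{-1/2}\Phi^{-1/2}EQ_B$ where $Q_A,Q_B\in\mathbb{GL}_r$, the hypotheses of Proposition~\ref{prop:upp_bound_preds} are met. The scaling $c$ does not change the projectors $P_A,P_B$.

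Proposition~\ref{prop:diff_preds_binary} gives the identity
\begin{equation*}
\Vert\hat f(X)-\hat f_{A_0,B_0}(X)\Vert_2=\Vert\Phi^{-1/2}(I-P_B)F(I-P_A)\Vert_F,
\end{equation*}
and Proposition~\ref{prop:upp_bound_preds} bounds the right-hand side by $\Vert\Phi^{-1/2}(I-\Pi_E)\Phi^{1/2}\Vert_2\big(\sum_{i\ge r+1}s_i^2(\Phi^{-1/2}F)\big)^{1/2}$. This is the claimed bound.

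As a consistency remark, Proposition~\ref{prop:opt_q} shows that this $Q=Q_BQ_A^\top$ is, up to the positive scaling, the gradient-aligned choice. The Balanced SVD is exactly a balanced realization of $L_rQ^\star R_r^\top$ with $Q^\star=-J_{L_r}^{-1}DJ_{R_r}^{-1}$, and this is where the formula for $K$ comes from.
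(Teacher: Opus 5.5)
Your proposal is correct and follows the paper's route: write the Balanced SVD output as $A_0^\top=R_rQ_A$, $B_0=L_rQ_B$ and invoke Proposition~\ref{prop:upp_bound_preds}, combined with the identity of Proposition~\ref{prop:diff_preds_binary}. Your Step~2 also supplies a justification the paper only asserts: $E^\top FV_{\Phi^{-1/2}F,r}$ equals the invertible triangular factor of the thin QR of the full-column-rank matrix $FV_{\Phi^{-1/2}F,r}$, so $D_K$ is invertible and $Q_A,Q_B\in\mathbb{GL}_r$.
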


\subsection{Multi-class case}
We consider now the usual multiclass framework where each $y_i\in\mathbb{R}^C$ is a one hot label i.e., all its coordinates are zero except for the true label coordinate which is equal to 1.
\subsubsection{Definitions and Assumptions}

\begin{lemma}\label{lemma:hess_multi}
    Let $p_i=\operatorname{softmax}(f(W_0;x_i))\in\mathbb{R}^C$ and $p= (p_1^\top,..., p_n^\top )^\top \in\mathbb{R}^{nC}$. Define 
    $$\Lambda_i:=\operatorname{diag}(p_i)-p_ip_i^\top\in\mathbb{R}^{C\times C},$$
    and $\Lambda:=\operatorname{blockdiag}(\Lambda_1,...,\Lambda_n)\in\mathbb{R}^{nC\times nC}.$ Then $\nabla_f^2\mathcal{L}_n(f)|_{f=f(W_0;X)}=\Lambda$ and $\nabla_f\mathcal{L}_n(f)|_{f=f(W_0;X)}=p-Y.$
\end{lemma}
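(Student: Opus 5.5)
This is a direct differentiation of the multiclass cross-entropy loss written in logit form. I will treat the objective sample by sample: $\mathcal{L}_n(f)=\sum_{i=1}^n \ell(f_i,y_i)$, where $f_i\in\mathbb{R}^C$ is the $i$-th block of $f\in\mathbb{R}^{nC}$ and
\[
\ell(f_i,y_i) = -y_i^\top f_i + \log\Big(\sum_{c=1}^C e^{f_i^{(c)}}\Big).
\]
This form holds because $y_i$ is one-hot, so $\sum_c y_i^{(c)}=1$ and $-\sum_c y_i^{(c)}\log \operatorname{softmax}(f_i)^{(c)}$ reduces to the expression above.

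Since block $i$ of the loss depends only on $f_i$, the gradient with respect to $f$ is the stacked vector of the blockwise gradients, and the Hessian is block diagonal with the per-sample Hessians on the diagonal. Cross terms between $f_i$ and $f_j$ for $i\neq j$ vanish identically. This already gives the $\operatorname{blockdiag}$ structure claimed for $\Lambda$.

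For one block, I differentiate the log-sum-exp. Its gradient is $\operatorname{softmax}(f_i)$, so
\[
\nabla_{f_i}\ell = \operatorname{softmax}(f_i)-y_i.
\]
Evaluating at $f_i=f(W_0;x_i)$ gives $p_i-y_i$, and stacking over $i$ gives $p-Y$.

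For the Hessian, I differentiate $\operatorname{softmax}(f_i)^{(c)}$ with respect to $f_i^{(c')}$, which gives $s^{(c)}(\mathbb{1}\{c=c'\}-s^{(c')})$ with $s=\operatorname{softmax}(f_i)$. In matrix form this is $\operatorname{diag}(s)-ss^\top$, and at the pretrained logits it equals $\Lambda_i$. The linear term $-y_i^\top f_i$ contributes nothing to the Hessian.

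None of these steps is a real obstacle. The only points needing care are the one-hot simplification of the cross-entropy, which makes the gradient exactly $p_i-y_i$, and the separability across samples, which justifies the block-diagonal form.
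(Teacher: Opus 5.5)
Your proposal is correct and follows the paper's proof essentially verbatim. The paper also rewrites the per-sample loss as $-\sum_c y_{i,c} f_{i,c} + \log(\sum_c e^{f_{i,c}})$, reads off the gradient $p_i - y_i$ and the Hessian $\operatorname{Diag}(p_i) - p_i p_i^\top$ from the softmax derivative, and stacks over samples; your explicit separability argument for the block-diagonal form is what the paper leaves implicit in ``stacking inputs.''
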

 The main challenge arises from the following observation: for any $i=1,...,n$, we have $\Lambda_i 1_C=0$. Therefore, each $\Lambda_i$ is singular regardless of $p_i$, which  reflects the invariance of softmax by constant shifting (adding the same constant to logits does not change probabilities). In particular, this prevents our previous analysis in the binary case to extend cleanly since $\nabla_f^2\mathcal{L}_n(f)|_{f=f(W_0;X)}$ is required positive definite. To remove that singularity, we must project the predictions onto the identifiable $C-1$ dimensional space, consisting of vectors orthogonal to the constant vector $1_C$. More precisely, each prediction $ \hat f(x_i)$ is replaced by $\Psi\Psi^\top \hat f(x_i)$ where $\Psi$ is a semi orthogonal matrix whose columns form a basis of $1_C^\perp$. After concatenation, the identifiable prediction alignment problem becomes
     \begin{equation}
         \Vert (I_n\otimes \Psi\Psi^\top)\left(\hat f(X)-\hat f_{A_0,B_0}(X)\right)\Vert_2= \Vert (I_n\otimes \Psi^\top)\left(\hat f(X)-\hat f_{A_0,B_0}(X)\right)\Vert_2. 
     \end{equation}
 \begin{lemma}\label{lemma:psi_inv}
     Let $\Psi\in\mathbb{R}^{C\times (C-1)}$ a semi-orthogonal matrix such that $\Psi\Psi^\top = I_C-\frac{1}{C}1_C1_C^\top$.
     For $i=1,...,n$, let the reduced
     $\tilde \Lambda_i:=\Psi^\top \Lambda_i \Psi$. If $p_{i,c}>0$ for all  $c=1,...,C$, then $\tilde \Lambda_i$ is invertible.

    \end{lemma}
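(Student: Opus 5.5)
We prove that $\tilde\Lambda_i$ is positive definite, which gives invertibility. The approach is to identify the kernel of $\Lambda_i$ exactly and then observe that $\Psi$ has no component along it.

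\textbf{Step 1: $\Lambda_i$ is positive semidefinite with kernel $\operatorname{span}(1_C)$.} For any $v\in\mathbb{R}^C$,
$$v^\top\Lambda_i v=\sum_c p_{i,c}v_c^2-\Big(\sum_c p_{i,c}v_c\Big)^2 .$$
Because the $p_{i,c}$ are nonnegative and sum to one, this is the variance of $v$ under the distribution $p_i$. By Jensen's (or Cauchy–Schwarz's) inequality it is nonnegative, so $\Lambda_i\succeq 0$.

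Equality holds exactly when $v_c$ is constant on the support of $p_i$. The hypothesis $p_{i,c}>0$ for all $c$ makes the support all of $[C]$. Hence $v^\top\Lambda_i v=0$ if and only if $v\in\operatorname{span}(1_C)$. Since $\Lambda_i$ is positive semidefinite, $v^\top\Lambda_i v=0$ is equivalent to $\Lambda_i v=0$, so $\ker\Lambda_i=\operatorname{span}(1_C)$.

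\textbf{Step 2: transfer to $\tilde\Lambda_i$.} Take $w\in\mathbb{R}^{C-1}$ and set $v=\Psi w$. Then $w^\top\tilde\Lambda_i w=v^\top\Lambda_i v\ge 0$, so $\tilde\Lambda_i$ is positive semidefinite.

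Now suppose $w^\top\tilde\Lambda_i w=0$. By Step 1, $v=\Psi w=\alpha 1_C$ for some scalar $\alpha$. Two facts about $\Psi$ then force $w=0$:
\begin{itemize}
\item The columns of $\Psi$ lie in $1_C^\perp$, since $\Psi\Psi^\top 1_C=0$ gives $\Psi^\top 1_C=0$. Therefore $\alpha C=1_C^\top\Psi w=0$, so $\alpha=0$ and $\Psi w=0$.
\item $\Psi$ is semi-orthogonal, so $\Psi^\top\Psi=I_{C-1}$ and $w=\Psi^\top\Psi w=0$.
\end{itemize}
Thus $\tilde\Lambda_i$ is symmetric positive definite, and in particular invertible.

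\textbf{Main obstacle.} The only delicate point is the equality case in Step 1, where the strict positivity of every $p_{i,c}$ is essential. If some $p_{i,c}$ were zero, vectors that differ from a constant only on the zero-probability coordinates would also lie in the kernel of $\Lambda_i$. Such vectors can have a nonzero component in $1_C^\perp$, and then $\tilde\Lambda_i$ would be singular. I would make this dependence on the hypothesis explicit when writing out the variance identity.
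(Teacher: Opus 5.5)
Your proof is correct and follows the same route as the paper. Both use the variance identity $v^\top\Lambda_i v=\sum_c p_{i,c}v_c^2-(\sum_c p_{i,c}v_c)^2$ to get $\Lambda_i\succeq 0$ with kernel $\operatorname{span}(1_C)$ when all $p_{i,c}>0$, and then use $\Psi w\perp 1_C$ to conclude that $\tilde\Lambda_i$ is positive definite; your explicit appeal to $\Psi^\top\Psi=I_{C-1}$ to rule out $\Psi w=0$ for $w\neq 0$ fills in a step the paper leaves implicit when it asserts $v^\top\Psi^\top\Lambda_i\Psi v>0$.
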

    Let $\tilde \Lambda=(\tilde \Lambda_1,..., \tilde \Lambda_n)$. The following proposition allows us to rewrite the identifiable prediction alignment problem in terms of $\tilde \Lambda$ instead of $\Lambda$.
    \begin{proposition}\label{prop:proj_ntk_pred}
        Let $\tilde J:=(I_n\otimes \Psi^\top)J$ and $\tilde J_r:=(I_n\otimes \Psi^\top)J_r$, and the reduced gradient $\tilde g := (I_n\otimes \Psi^\top )(p-Y)$. Then, we have $\tilde J^\top \tilde \Lambda\tilde J = J^\top \Lambda J $ and $\tilde J^\top \tilde g = J^\top(p-Y).$ In particular, this implies for full finetuning
        $$(I_n\otimes \Psi^\top)\hat f(X)=(I_n\otimes \Psi^\top) f(W_0;X)+\tilde J(\tilde J^\top \tilde \Lambda\tilde J)^{\dagger}\tilde J^\top(-\tilde g),$$
        and for LoRA, 
        $$(I_n\otimes \Psi^\top)\hat f_{A_0,B_0}(X)=(I_n\otimes \Psi^\top) f(W_0;X)+\tilde J_r(\tilde J_r^\top \tilde \Lambda\tilde J_r)^{\dagger}\tilde J_r^\top(-\tilde g).$$
    \end{proposition}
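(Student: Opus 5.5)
Two identities and one substitution give the result. First, since $\Lambda=\operatorname{blockdiag}(\Lambda_i)$ with $\Lambda_i 1_C=0$ and $\Lambda_i$ symmetric, we have $\Lambda_i = \Pi\Lambda_i\Pi$ with $\Pi:=I_C-\frac1C 1_C1_C^\top=\Psi\Psi^\top$. This holds because $\Lambda_i\Pi=\Lambda_i-\frac1C\Lambda_i1_C1_C^\top=\Lambda_i$, and then symmetry gives $\Pi\Lambda_i=\Lambda_i$. Hence $\Lambda_i=\Psi(\Psi^\top\Lambda_i\Psi)\Psi^\top=\Psi\tilde\Lambda_i\Psi^\top$, and stacking gives $\Lambda=(I_n\otimes\Psi)\tilde\Lambda(I_n\otimes\Psi^\top)$, where $\tilde\Lambda$ is read as $\operatorname{blockdiag}(\tilde\Lambda_1,\dots,\tilde\Lambda_n)$. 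Therefore
\[
J^\top\Lambda J=J^\top(I_n\otimes\Psi)\tilde\Lambda(I_n\otimes\Psi^\top)J=\tilde J^\top\tilde\Lambda\tilde J .
\]

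Second, $p_i-y_i$ sums to zero because both $p_i$ and the one-hot $y_i$ sum to one. So $1_C^\top(p_i-y_i)=0$, which gives $\Pi(p_i-y_i)=p_i-y_i$ and $(I_n\otimes\Psi\Psi^\top)(p-Y)=p-Y$. Multiplying by $J^\top$ yields $J^\top(p-Y)=J^\top(I_n\otimes\Psi)(I_n\otimes\Psi^\top)(p-Y)=\tilde J^\top\tilde g$. The same two identities hold with $J$ replaced by $J_r=JH$: right-multiplication by $H$ commutes with the left factor $(I_n\otimes\Psi^\top)$, so $\tilde J_r=\tilde JH$, $J_r^\top\Lambda J_r=\tilde J_r^\top\tilde\Lambda\tilde J_r$ and $J_r^\top(p-Y)=\tilde J_r^\top\tilde g$.

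Finally, I would apply $(I_n\otimes\Psi^\top)$ to the definitions of the NTK predictors $\hat f(X)$ and $\hat f_{A_0,B_0}(X)$ given after Proposition~\ref{prop:gen_loss_ntk_preds}, using Lemma~\ref{lemma:hess_multi} to identify $\nabla_f^2\mathcal L_n=\Lambda$ and $\nabla_f\mathcal L_n=p-Y$. The prefactor $(I_n\otimes\Psi^\top)J$ becomes $\tilde J$. Substituting the two identities inside the pseudo-inverse and in the gradient term gives
\[
(I_n\otimes \Psi^\top)\hat f(X)=(I_n\otimes \Psi^\top) f(W_0;X)+\tilde J(\tilde J^\top \tilde \Lambda\tilde J)^{\dagger}\tilde J^\top(-\tilde g).
\]
The LoRA formula follows in the same way with $J_r$ and $\tilde J_r$ in place of $J$ and $\tilde J$. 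Lemma~\ref{lemma:psi_inv} is not needed for this algebraic statement; it matters only later, where invertibility of $\tilde\Lambda$ is used.

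The only step needing care is justifying $\Lambda_i=\Psi\tilde\Lambda_i\Psi^\top$, that is, checking that the kernel direction $1_C$ is annihilated on both sides; the rest is direct substitution.
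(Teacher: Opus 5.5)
Your proposal is correct and follows the paper's proof essentially step for step. You factor $\Lambda=(I_n\otimes\Psi)\tilde\Lambda(I_n\otimes\Psi^\top)$, use $1_C^\top(p_i-y_i)=0$ to get $\tilde J^\top\tilde g=J^\top(p-Y)$, and substitute into the predictor definitions, with $J_r=JH$ in place of $J$ for LoRA. Your explicit check that $\Lambda_i\Pi=\Pi\Lambda_i=\Lambda_i$ (from $\Lambda_i1_C=0$ and symmetry) justifies the factorization that the paper only asserts ``by definition of $\tilde\Lambda_i$'', and you correctly keep the sign $-\tilde g$, which the paper's displayed computation drops.
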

Having resolved the singularity problem, we can extend the binary class analysis mutatis mutandis. We first start by stating the \textit{K-FAC} approximations for $\tilde J^\top \tilde J$ and $\tilde J^\top \tilde \Lambda\tilde J$.

\begin{assumption}
    Let $\tilde \delta_i := \delta_i\Psi$,  
    $\tilde T:=\sum_{i=1}^n \tilde \delta_i\tilde \delta_i^\top$, and $\Theta:= \sum_{i=1}^n \delta_i  \Lambda_i \delta_i^\top$, with prior pooling if tokenized inputs/outputs.  The following holds 
    \begin{itemize}
        \item $\tilde J^\top \tilde J
        = S\otimes \tilde T$.
    \item $\tilde J^\top \tilde \Lambda\tilde J
    = S\otimes \Theta$
    \item   $p_{i,c}>0$ for all $i=1,..,n$, $c=1,...,C$.
    \end{itemize}

    \label{assump:kfac_3}
\end{assumption}




\subsubsection{The prediction alignment problem}
\begin{algorithm}[t]
\caption{CG-LoRA for cross entropy}
\label{alg:cg-lora_ce}
\DontPrintSemicolon
\SetKwInOut{Input}{Input}
\SetKwInOut{Output}{Output}

\Input{Batch $\{(x_i,y_i)\}_{i=1}^{B_\mathrm{init}}$; rank $r$; number of LoRA layers $p$; pretrained weights $\{W_0^{(k)}\}_{k=1}^p$. \\Hyperparameters: oversampling factor $m$; number of power iterations $q$; scaling $\gamma$.}
\Output{$\{(A_0^{(k)},B_0^{(k)})\}_{k=1}^p$.}

Compute layer inputs $\{h^{(k)}\}_{k=1}^p$ by a forward pass\;
Compute layer outputs $\{\delta^{(k)}\}_{k=1}^p$ by a backward pass\;

\For{$k \gets 1$ \KwTo $p$}{\BlankLine
$(d_\mathrm{in},d_\mathrm{out})\gets\operatorname{shape}(W_0^{(k)})$ \;
\tcp{Pooling if tokenized inputs/outputs}
  \If{$\operatorname{shape}(h^{(k)})=(B_\mathrm{init},\Omega,d_\mathrm{in})$}{
  $h^{(k)}\gets \sum_{\omega=1}^\Omega h^{(k)}[:,\omega,:]$ \quad 
  $\delta^{(k)}\gets \sum_{\omega=1}^\Omega \delta^{(k)}[:,\omega,:]$} 
  $(U_S,D_S,U_{\tilde T},D_{\tilde T}) \gets
  \operatorname{Subspace}\!\bigl(h^{(k)},\delta^{(k)},m,q\bigr)$ \tcp*[r]{$O(\max(d_\mathrm{in},d_\mathrm{out})m(qn+m))$}
  {$O(nd_\mathrm{out}m)$, see Section \ref{app:hutchinson}}
  $\Phi\gets D_{\tilde T}^{-1/2}U_{\tilde T}^\top \Theta U_{\tilde T}D_{\tilde T}^{-1/2}$\tcp*[r]{$O(n\max(d_\mathrm{in},d_\mathrm{out})m+nm^2)$}
  $F  \gets \Phi^{-1/2}D_{\tilde T}^{-1/2}U_{\tilde T}^\top \nabla_W\mathcal{L}_n(f(W;X))|_{W_0}U_SD_S^{-1/2}$\tcp*[r]{$O(n\max(d_\mathrm{in},d_\mathrm{out})m+nm^2)$}
  $U_{F,r}, D_{F,r},V_{F,r}\gets  \operatorname{SVD}_r(F)$ \tcp*[r]{$O(m^2r)$} 
  $V_{\Phi^{-1/2}F,r}\gets  \text{right} \operatorname{SVD}_r(\Phi^{-1/2}F)$ \tcp*[r]{$O(m^2r)$}  
  $E\gets \operatorname{orth}(FV_{\Phi^{-1/2}F,r})$\tcp*[r]{$O(m^2r)$}
  $L\gets U_{\tilde T}D_{\tilde T}^{-1/2}\Phi^{-1/2}E, \quad R\gets U_SD_S^{-1/2}V_{\Phi^{-1/2}F,r}$\tcp*[r]{$O(\max(d_\mathrm{in},d_\mathrm{out})mr)$}
  $(A_0^{(k)},B_0^{(k)})\gets \operatorname{BalancedSVD} (L,R, E^\top FV_{\Phi^{-1/2}F,r})$\tcp*[r]{$O(\max(d_\mathrm{in},d_\mathrm{out})r^2)$}
  $A_0^{(k)}\gets \frac{d_\mathrm{out}^{1/4}}{\gamma}A_0^{(k)} , \quad B_0^{(k)} \gets \frac{d_\mathrm{out}^{1/4}}{\gamma }B_0^{(k)}$ }
\Return $\{(A_0^{(k)},B_0^{(k)})\}_{k=1}^p$\;
\end{algorithm}
Under Assumption \ref{assump:kfac_3}, all results derived for the binary cross entropy loss can be transposed mutatis mutandis to the multiclass case by updating $T$ by $\tilde T$ and $O$ by $\Theta$. For ease of exposition, we only state a summarized result with $r\leq\operatorname{rank}(F)$. 
\begin{theorem}\label{thm:cg_lora_multi}
Let Assumption \ref{assump:kfac_3} be true. Let $\Phi := D_{\tilde T}^{-1/2}U_{\tilde T}^\top \Theta U_{\tilde T} D_{\tilde T}^{-1/2}$. Then $\Phi$ is positive definite.

Let $F =\Phi^{-1/2}D_{\tilde T}^{-1/2} U_{\tilde T}^\top \nabla  U_SD_S^{-1/2}\in\mathbb{R}^{r_T\times r_S}$
Let $E=\operatorname{orth}(FV_{\Phi^{-1/2}F,r})$. Let top $r$ right modes $ R_r=U_SD_S^{-1/2}V_{\Phi^{-1/2}F,r}$ and  left modes $L_r=U_TD_T^{-1/2}\Phi^{-1/2}E$.   Consider the following initialization scheme
\begin{equation}\begin{cases}
        (A_0,B_0) \gets \operatorname{Balanced SVD}(L_r,R_r, E^\top FV_{\Phi^{-1/2}F,r}) \\
        (A_0,B_0) \gets (\frac{d_\mathrm{out}^{1/4}}{\gamma}A_0,\frac{d_\mathrm{out}^{1/4}}{\gamma}B_0 )
    \end{cases}  \tag{CG-LoRA}\end{equation} 
    then 
    $\Vert \hat{f}(X) - \hat{f}_{A_0,B_0}(X) \Vert_2\leq\Vert \Phi^{-1/2}(I-\Pi_E)\Phi^{1/2}\Vert_2(\sum_{i\geq r+1}s_i^2(\Phi^{-1/2}F))^{1/2}.$

\end{theorem}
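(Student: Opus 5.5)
The plan is to transpose the binary argument (Propositions~\ref{prop:diff_preds_binary}, \ref{prop:upp_bound_preds}, \ref{prop:opt_q}) to the reduced multiclass quantities supplied by Proposition~\ref{prop:proj_ntk_pred} and Assumption~\ref{assump:kfac_3}. Throughout, $\Vert \hat f(X)-\hat f_{A_0,B_0}(X)\Vert_2$ is read as the identifiable discrepancy $\Vert (I_n\otimes\Psi^\top)(\hat f(X)-\hat f_{A_0,B_0}(X))\Vert_2$.

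\emph{Step 1: $\Phi$ is positive definite.} Since $\Lambda_i 1_C=0$ and $\Lambda_i$ is symmetric, we have $\Lambda_i=\Psi\Psi^\top\Lambda_i\Psi\Psi^\top$. Hence
\[
\Theta=\sum_i \tilde\delta_i\tilde\Lambda_i\tilde\delta_i^\top .
\]
By Lemma~\ref{lemma:psi_inv} each $\tilde\Lambda_i$ is positive definite, so with $c:=\min_i\lambda_{\min}(\tilde\Lambda_i)>0$ we get $\Theta\succeq c\,\tilde T$. It follows that $\Phi\succeq c\,D_{\tilde T}^{-1/2}U_{\tilde T}^\top \tilde T U_{\tilde T}D_{\tilde T}^{-1/2}=c\,I_{r_T}$.

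\emph{Step 2: exact expression of the discrepancy.} By Assumption~\ref{assump:kfac_3}, exactly as in Lemma~\ref{lemma:k_vs_kr}, $\tilde J$ has the thin SVD
\[
\tilde J=U\Sigma V^\top,\qquad \Sigma=D_S^{1/2}\otimes D_{\tilde T}^{1/2},\qquad V=U_S\otimes U_{\tilde T}.
\]
Also $\tilde J_r=UP$ with $P=\Sigma V^\top H$. Comparing $\tilde J^\top\tilde\Lambda\tilde J=V\Sigma (U^\top\tilde\Lambda U)\Sigma V^\top$ with $S\otimes\Theta$ gives
\[
G:=U^\top\tilde\Lambda U=I_{r_S}\otimes\Phi \quad\text{(on the relevant range)}.
\]
Substituting into Proposition~\ref{prop:proj_ntk_pred}, the projected difference becomes
\[
U\big[G^{-1}-P(P^\top G P)^\dagger P^\top\big]U^\top\tilde J^\top\cdots = U\,G^{-1/2}\big(I-\Pi_{G^{1/2}P}\big)G^{-1/2}\,\Sigma^{-1}V^\top\operatorname{vec}(\nabla),
\]
where we used $\tilde J^\top\tilde g=\operatorname{vec}(\nabla)$. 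The column space of $G^{1/2}P$ is handled by the same tensor-product orthogonal-complement argument as in Lemma~\ref{lemma:k_vs_kr}. This yields
\[
I-\Pi_{G^{1/2}P}=(I-P_A)\otimes(I-P_B),\qquad P_B=\Pi_{\Phi^{1/2}D_{\tilde T}^{1/2}U_{\tilde T}^\top B_0}.
\]
Un-vectorising, and using $\Phi^{-1/2}D_{\tilde T}^{-1/2}U_{\tilde T}^\top\nabla U_SD_S^{-1/2}=F$, the discrepancy equals $\Vert\Phi^{-1/2}(I-P_B)F(I-P_A)\Vert_F$, which is the multiclass analogue of Proposition~\ref{prop:diff_preds_binary}.

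\emph{Step 3: upper bound.} Insert $\Phi^{1/2}\Phi^{-1/2}$ and apply submultiplicativity as done before Proposition~\ref{prop:upp_bound_preds}. The theorem's bound then follows, provided two conditions hold: the column space of $D_S^{1/2}U_S^\top A_0^\top$ is that of $V_{\Phi^{-1/2}F,r}$, and that of $\Phi^{1/2}D_{\tilde T}^{1/2}U_{\tilde T}^\top B_0$ is that of $E$. Under these conditions, $\Phi^{-1/2}F(I-P_A)$ has Frobenius norm $(\sum_{i\ge r+1}s_i^2(\Phi^{-1/2}F))^{1/2}$ by construction.

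\emph{Step 4: the algorithm output satisfies these conditions.} BalancedSVD returns $B_0=L_rX_B$ and $A_0^\top=R_rX_A$, where $X_A,X_B$ are products of Cholesky factors, singular-vector matrices of $K$ and $D_K^{1/2}$. The global rescaling by $d_\mathrm{out}^{1/4}/\gamma$ does not affect projections. So the only requirement is that $X_A,X_B$ be invertible, i.e.\ that $K$, equivalently $E^\top FV_{\Phi^{-1/2}F,r}$, be invertible (cf.\ Proposition~\ref{prop:opt_q}).

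This invertibility is the step I expect to be the main obstacle. My argument for it runs as follows. Since $E=\operatorname{orth}(FV_{\Phi^{-1/2}F,r})$, the matrix $E^\top FV_{\Phi^{-1/2}F,r}$ is the triangular QR factor, so it suffices that $FV_{\Phi^{-1/2}F,r}$ has rank $r$. Write
\[
FV_{\Phi^{-1/2}F,r}=\Phi^{1/2}\,\big(\Phi^{-1/2}FV_{\Phi^{-1/2}F,r}\big)=\Phi^{1/2}\,U_{\Phi^{-1/2}F,r}D_{\Phi^{-1/2}F,r}.
\]
Here $\Phi^{1/2}$ is invertible by Step~1, and the diagonal factor is nonsingular because $r\le\operatorname{rank}(F)=\operatorname{rank}(\Phi^{-1/2}F)$. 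The delicate bookkeeping remains Step~2, namely verifying that $G=I\otimes\Phi$ holds on the range of $U$ and that the weighted pseudo-inverse collapses to the weighted projector.
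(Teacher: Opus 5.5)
Your proposal is correct and follows the paper's route. You transpose the binary argument: the exact formula $\Vert\Phi^{-1/2}(I-P_B)F(I-P_A)\Vert_F$ via $G=I_{r_S}\otimes\Phi$, then the submultiplicative bound of Proposition~\ref{prop:upp_bound_preds}, then identification of the column spaces of the BalancedSVD output. The hedge in Step~2 is unnecessary, since $G=\Sigma^{-1}V^\top(S\otimes\Theta)V\Sigma^{-1}=I_{r_S}\otimes\Phi$ holds exactly. Your Step~4 shows $E^\top FV_{\Phi^{-1/2}F,r}$ is invertible because it is the QR factor of the full-column-rank matrix $\Phi^{1/2}U_{\Phi^{-1/2}F,r}D_{\Phi^{-1/2}F,r}$; the paper's proof of Theorem~\ref{thm:cg_lora_binary_improv} only asserts that $Q_A,Q_B$ are invertible, so this fills that in. Your bound $\Theta\succeq c\,\tilde T$ is a more quantitative version of the paper's $\operatorname{Im}(O)=\operatorname{Im}(T)$ argument for $\Phi\succ 0$.
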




\subsection{Proofs}

\subsubsection{Proof of Proposition \ref{prop:gen_loss_ntk_preds}}

Let $\theta = \mathrm{vec}(\Delta W)$. The normal equations give
$$J^\top \nabla_f\mathcal{L}_n(f)|_{f=f(W_0;X)} + J^\top \nabla^2_f\mathcal{L}_n(f)|_{f=f(W_0;X)}J\theta=0,$$
from which we immediately deduce the minimum norm solution as 
$$\theta^\star = -(J^\top \nabla^2_f\mathcal{L}_n(f)|_{f=f(W_0;X)}J)^\dagger J^\top \nabla_f\mathcal{L}_n(f)|_{f=f(W_0;X)}$$
The NTK prediction is then given by 
$$\hat f(X) = f(W_0;X)+J\theta^\star =f(W_0;X)+J (J^\top \nabla^2_f\mathcal{L}_n(f)|_{f=f(W_0;X)}J)^\dagger J^\top (-\nabla_f\mathcal{L}_n(f)|_{f=f(W_0;X)}).$$
The proof for LoRA is identical by replacing the full finetuning Jacobian $J$ by its counterpart $J_r$.

\subsubsection{Proof of Lemma \ref{lemma:hess_binary}}
The binary cross entropy loss is defined as 
$$l(f_i,y_i)=-y_i\log p_i -(1-y_i)\log(1-p_i),$$
where $p_i=\frac{1}{1+e^{-f_i}}$.
Hence 
$$\nabla_fl(f_i,y_i)=\frac{\partial l}{\partial p}\frac{\partial p}{\partial f}=(-\frac{y_i}{p_i}+\frac{1-y_i}{1-p_i})p_i(1-p_i)=-y_i(1-p_i)+(1-y_i)p_i=p_i-y_i.$$
Furthermore $\nabla_f^2l(f_i,y_i)=\nabla_f(p_i-y_i)=p_i(1-p_i)$.
Since $\mathcal{L}_n(f(W_0,X))=\sum_{i=1}^n l(f_i,y_i)$ then we immediately deduce $\nabla_f\mathcal{L}_n(f)|_{f=f(W_0;X)} = p-Y$, and $\nabla_f^2\mathcal{L}_n(f)|_{f=f(W_0;X)} = \operatorname{diag}(p\odot (1-p)).$

\subsubsection{Proof of Proposition \ref{prop:phi_inv}}
\begin{proof}
To show invertibility of $\Phi$, it is equivalent to show that $O$ is positive definite on $\operatorname{Im}(T)$. Indeed, let $x\in \mathbb{R}^{r_T}$ then
$$x^\top \Phi x= (U_TD_T^{-1/2}x)^\top O (U_TD_T^{-1/2}x) $$
where $U_TD_T^{-1/2}x\in\operatorname{Im}(T)$.

Now, let $\Delta=(\delta_1,...,\delta_n)\in\mathbb{R}^{ d_\mathrm{out}\times nC} $ then $T=\Delta \Delta^\top$ and $O=\Delta \mathrm{Diag}(p_1(1-p_1), ...,p_n(1-p_n)) \Delta^\top $.
In particular since each $0<p_i<1$ then $\operatorname{Im}(O)=\operatorname{Im}(T)$. Furthermore, $\operatorname{Im}(O)\perp \operatorname{Ker}(O^\top)=\operatorname{Ker}(O)$  since $O$ is symmetric. This shows that $\operatorname{Im}(T)\perp \operatorname{Ker}(O)$ and in particular that  
$O$ is positive definite on $\operatorname{Im}(T)$ and $\Phi$ is invertible.    
\end{proof}

\subsubsection{Proof of Proposition \ref{prop:diff_preds_binary}}
\begin{proof}
As for the squared-loss case, the proof is decomposed into two steps:
\begin{enumerate}
    \item Rewrite the expressions of $\hat f(X)$ and $\hat f_{A_0,B_0}(X)$ as orthogonal projections of the finetuning residuals.
    \item Compute the difference between the projections.
\end{enumerate}
    \paragraph{Step 1}: Let us introduce whitened coordinates to make appear orthogonal projections which are simpler to manipulate. 
    
    Let $J=UDV^\top$ be a thin SVD. Let $\Gamma=\mathcal{C}_0^{1/2}J$. Since $\mathcal{C}_0$ is invertible, we can write
    $$\hat f(X)  = f(W_0;X)+\mathcal{C}_0^{-1/2}\Gamma(\Gamma^\top \Gamma )^{\dagger}\Gamma^\top \mathcal{C}_0^{-1/2}(Y-p_0)$$   
    $\Pi_\Gamma= \Gamma(\Gamma^\top \Gamma )^{\dagger}\Gamma^\top=\Gamma \Gamma^\dagger$ is simply the projection over $\operatorname{col}(\Gamma)=\operatorname{col}(\mathcal{C}_0^{1/2}U)$. Hence, we can also rewrite 
    $$\Pi_\Gamma=\Pi_{\mathcal{C}_0^{1/2}U}= \mathcal{C}_0^{1/2} U (U^\top \mathcal{C}_0 U)^\dagger U^\top \mathcal{C}_0^{1/2}.$$
      Let $G=U^\top \mathcal{C}_0U$ then $x^\top Gx = (Ux)^\top \mathcal{C}_0(Ux)>0$ since $\mathcal{C}_0\succ0$. This implies that $G$ is non singular and we can write
    $$\hat f(X)  = f(W_0;X)+U G^{-1} U^\top(Y-p_0)$$
    For LoRA, we obtain instead
     $$\hat f_{A_0,B_0}(X)  = f(W_0;X)+\mathcal{C}_0^{-1/2}\Pi_{\Gamma_r} \mathcal{C}_0^{-1/2}(Y-p_0)$$    where $\Gamma_r=\mathcal{C}_0^{1/2}JH$. In particular $\operatorname{col}(\Gamma_r)=\operatorname{col}(\mathcal{C}_0^{1/2}UP)$ where $P=DV^\top H$.
     Using the same calculus as above, this implies
    \begin{align*}
        \hat f_{A_0,B_0}(X)  &= f(W_0;X)+UP((UP)^\top \mathcal{C}_0  UP)  (UP)^\top(Y-p_0) \\
&=f(W_0;X)+UP(P^\top GP)  P^\top U^\top (Y-p_0) \\
&= f(W_0;X)+UG^{-1/2}G^{1/2}P(P^\top GP)(G^{1/2}P)^\top G^{-1/2} U^\top(Y-p_0)\\
&=f(W_0;X)+UG^{-1/2} \Pi_Q G^{-1/2} U^\top(Y-p_0)
    \end{align*}
    where $Q:=G^{1/2}P$.
    Taking the difference, we obtain  
$$\Vert \hat{f}(X)-\hat f_{A_0,B_0}(X)\Vert = \Vert UG^{-1/2}(I-\Pi_Q)G^{-1/2}U^\top (Y-p_0)\Vert_2$$  
Finally, recall that
\begin{align*}
    U^\top (Y-p_0) &= (D_S^{-1/2} U_S^\top \otimes D_T^{-1/2}U_T^\top)  J^\top (Y-p_0)\\
    &=-(D_S^{-1/2} U_S^\top \otimes D_T^{-1/2}U_T^\top) \operatorname{vec}(\nabla),
\end{align*}
which leads to our intermediate value
$$\Vert \hat{f}(X)-\hat f_{A_0,B_0}(X)\Vert = \Vert G^{-1/2}(I-\Pi_Q)G^{-1/2}(D_S^{-1/2} U_S^\top \otimes D_T^{-1/2}U_T^\top) \operatorname{vec}(\nabla)\Vert_2$$  
\paragraph{Step 2}:
We need now to characterize  the column space of $Q$. First observe that
\begin{align*}
    G&=U^\top\mathcal C_0U\\
    &=D^{-1}V^\top (J^\top\mathcal{C}_0 J)VD^{-1}\\
    &=(D_S^{-1/2} U_S^\top \otimes D_{ T}^{-1/2} U_{ T}^\top) (J^\top\mathcal{C}_0 J)(U_S D_S^{-1/2}\otimes U_{ T} D_{ T}^{-1/2})\\
    &=(D_S^{-1/2} U_S^\top \otimes D_{ T}^{-1/2} U_{ T}^\top) (S\otimes O)(U_S D_S^{-1/2}\otimes U_{ T} D_{ T}^{-1/2}) \\
    &=I_{r_\mathrm{r_S}}\otimes (D_{ T}^{-1/2} U_{ T}^\top O   U_{ T} D_{ T}^{-1/2}) \\
    &= I_{r_S}\otimes \Phi.
\end{align*}
hence by usual properties of the Kronecker product, we also have $G^{1/2}=I_{r_S} \otimes \Phi^{1/2}$. This gives
\begin{align*}
    Q &= G^{1/2} (D_S^{1/2} U_S^\top \otimes D_T^{1/2} U_T^\top) (I_{d_\mathrm{in}}\otimes B_0 \quad A_0^\top \otimes I_{d_\mathrm{out}}) \\
    &= (I_{r_S} \otimes \Phi^{1/2})(D_S^{1/2} U_S^\top \otimes D_T^{1/2} U_T^\top B_0 \quad  D_S^{1/2}U_S^\top A_0^\top \otimes D_T^{1/2}U_T^\top)\\
    &=(D_S^{1/2} U_S^\top \otimes \Phi^{1/2}D_T^{1/2} U_T^\top B_0 \quad D_S^{1/2}U_S^\top A_0^\top \otimes \Phi^{1/2}D_T^{1/2}U_T^\top)
\end{align*}
    therefore its column space is the sum of the two columns spaces:
    
(i) $\operatorname{col}(D_S^{1/2} U_S^\top \otimes \Phi^{1/2}D_T^{1/2} U_T^\top B_0 ) = \mathbb{R}^{r_S} \otimes \operatorname{col}(\Phi^{1/2}D_T ^{1/2}U_T^\top B_0 )$.

(ii) $\operatorname{col}(D_S^{1/2}U_S^\top A_0^\top \otimes \Phi^{1/2}D_T^{1/2}U_T^\top) = \operatorname{col}(D_S^{1/2}U_S^\top A_0^\top)\otimes \operatorname{col}(\Phi^{1/2}D_T^{1/2}U_T^\top)= \operatorname{col}(D_S^{1/2}U_S^\top A_0^\top)\otimes \mathbb{R}^{r_T}$.

    Then following the same arguments as the proof of Lemma \ref{lemma:k_vs_kr}, we obtain that 
    $$\operatorname{col}(Q)^\perp = \operatorname{col}(D_S^{1/2}U_S^\top A_0^\top)^\perp\otimes  \operatorname{col}(\Phi^{1/2}D_T^{1/2}U_T^\top B_0)^\perp$$
    Therefore writing $g=\operatorname{vec}(\nabla)$
    \begin{align*}
        &\Vert \hat{f}(X) - \hat f_{A_0,B_0}(X) \Vert_2\\&=\Vert G^{-1/2}[(I_{r_S}-\Pi_{D_S^{1/2}U_S^\top A_0^\top})\otimes (I_{r_T}-\Pi_{\Phi^{1/2}D_T^{1/2}U_T^\top B_0})]G^{-1/2}(D_S^{-1/2}U_S^\top \otimes D_T^{-1/2}U_T^\top) g\Vert_2 \\
        &= \Vert (I_{r_S} \otimes \Phi^{-1/2} )[(I_{r_S}-\Pi_{D_S^{1/2}U_S^\top A_0^\top})\otimes (I_{r_T}-\Pi_{\Phi^{1/2}D_T^{1/2}U_T^\top B_0})] (I_{r_S} \otimes \Phi^{-1/2} )(D_S^{-1/2}U_S^\top \otimes D_T^{-1/2}U_T^\top) g\Vert_2 \\
        &=\Vert [(I_{r_S}-\Pi_{D_S^{1/2}U_S^\top A_0^\top})D_S^{-1/2}U_S^\top ]\otimes [(\Phi^{-1/2} (I_{r_T}-\Pi_{\Phi^{1/2}D_T^{1/2}U_T^\top B_0}) \Phi^{-1/2}) D_T^{-1/2}U_T^\top]g\Vert_2\\
        &= \Vert \Phi^{-1/2}(I_{r_T}-\Pi_{\Phi^{1/2}D_T^{1/2}U_T^\top B_0}) F  (I_{r_S}-\Pi_{D_S^{1/2}U_S^\top A_0^\top})\Vert_F
    \end{align*}
    where $F=\Phi^{-1/2}D_T^{-1/2}U_T^\top \nabla U_S D_S^{-1/2}$.
\end{proof}


\subsubsection{Proof of Proposition \ref{prop:upp_bound_preds} }
\begin{proof}
    Let $A_0=U_SD_S^{-1/2}V_{\Phi^{-1/2}F,r}$, then $P_A=\Pi_{V_{\Phi^{-1/2}F,r}}$.
    Hence, 
    $$\Vert \Phi^{-1/2}F(I-P_A)\Vert_F=(\sum_{i\geq r+1} s_i^2(\Phi^{-1/2}F))^{1/2} .$$
    Let $B_0=U_TD_T^{-1/2}\Phi^{-1/2}EQ$, then $P_B=\Pi_{E}$. 
    This implies the desired result 
    \begin{align*}
    \Vert \Phi^{-1/2}(I_{r_T}-P_B)F(I_{r_S}-P_A)\Vert_F &=\Vert  \Phi^{-1/2}(I_{r_T}-P_B)\Phi^{1/2} [\Phi^{-1/2}F(I_{r_S}-P_A)] \Vert_F\\
    &\leq  \Vert\Phi^{-1/2}(I_{r_T}-P_B)\Phi^{1/2}\Vert_2 \Vert \Phi^{-1/2}F(I_{r_S}-P_A)\Vert_F \\
    &=\Vert\Phi^{-1/2}(I_{r_T}-\Pi_E)\Phi^{1/2}\Vert_2(\sum_{i\geq r+1} s_i^2(\Phi^{-1/2}F))^{1/2} . 
\end{align*}
\end{proof}

\subsubsection{Proof of Proposition \ref{prop:opt_q}}
\begin{proof}
The proof follows mutatis mutandis the proof of Proposition \ref{prop:opt_q} by noting that $E$ is full column rank hence the pseudo inverses can be simplified.   
\end{proof}

\subsubsection{Proof of Theorem \ref{thm:cg_lora_binary_improv}}

\begin{proof}
    By definition of Algorithm \ref{alg:bal_svd}, our proposed $A_0,B_0$ naturally verify $\operatorname{col}(A_0)\subset \operatorname{col}(R_r)=\operatorname{col}(U_SD_S^{-1/2}V_{\Phi^{-1/2}F,r})$ and $\operatorname{col}(B_0)=\operatorname{col}(L_r)=\operatorname{col}(U_TD_T^{-1/2}\Phi^{-1/2}E)$. Hence, this implies that $A_0=U_SD_S^{-1/2}V_{\Phi^{-1/2},r}Q_A$ and $B_0=U_TD_T^{-1/2}\Phi^{-1/2}EQ_B$ for some invertible $Q_A,Q_B$. Therefore, Proposition \ref{prop:upp_bound_preds} immediately applies and gives the upper bound.

\end{proof}

\subsubsection{Proof of Lemma \ref{lemma:hess_multi}}
The multiclass cross entropy loss is defined as
$$l(f_i,y_i)=-\sum_{c=1}^Cy_{i,c}\log p_{i,c}=-\sum_{c=1}^Cy_{i,c}f_{i,c} +\log(\sum_{c=1}^C e^{f_{i,c}}),$$
where $p_{i,c}=\frac{e^{f_{i,c}}}{\sum_{c'=1}^C e^{f_{i,c'}}}$. Hence
$$\nabla_f(l(f_i,y_i))=-y_i+p_i.$$
Furthermore, $\nabla^2_f(l(f_i,y_i))|=\nabla_f p_i=\operatorname{Diag}(p_i)-p_ip_i^\top$. 
Stacking inputs yields the desired result.

\subsubsection{Proof of Proposition \ref{lemma:psi_inv}}

Let $i=1,...,n$. Let us first show that the kernel of $\Lambda_i$ is indeed $\operatorname{span}(1_C)$. Let any vector $v$ then
$$v^T \Lambda_iv = v^\top \operatorname{Diag}(p_i)v - (v^\top p_i)^2 = \sum_{c=1}^C p_{i,c} v_c^2 - (\sum_{c=1}^C p_{i,c} v_c)^2.$$
We recognize the variance of the numbers $v_1,...,v_C$ under probability weights $p_{i,1},...p_{i,C}$. This already proves that $\Lambda_i$ is positive semi definite.

Furthermore, if we assume that each $p_{i,c}>0$ then by definition of a variance, it can be zero if and only if the coordinates $v_c$ are all equal or equivalently $v\in \operatorname{span}(1_C)$.

Going back to our lemma, let $v$ a vector then clearly $v^\top\Psi^\top \Lambda_i \Psi v >0$ since 
$\Psi v\perp 1_c$ by definition of $\Psi$. This proves that $\tilde \Lambda_i$ is invertible.

\subsubsection{Proof of Proposition \ref{prop:proj_ntk_pred}}

First, we observe that by definition of $\tilde \Lambda_i$, we immediately obtain that 
$$\Lambda = (I_n\otimes \Psi)\tilde \Lambda (I_n\otimes \Psi^\top ), $$
hence we have 
$$J^\top \Lambda J = J^\top (I_n\otimes \Psi)\tilde \Lambda (I_n\otimes \Psi^\top ) J = \tilde J\tilde \Lambda \tilde J.$$
Now, let $y_i$ and $p_i$ be the label and predicted probability of sample $i$. Since $y_i$ is a one hot label and $p_i$ is a probability vector then $p_i^\top 1_C=y_i^\top 1_C=1$ which implies that $p_i-y_i\in1_C^\perp$ or equivalently $p_i-y_i=\Psi\Psi^\top (p_i-y_i)$. Stacking all inputs together, we obtain that 
$$(I_n\otimes \Psi\Psi^\top)(p-Y)= p-Y=g$$ which implies that 
$$\tilde J^\top \tilde g=J^\top(I_n\otimes \Psi\Psi^\top)g = J^\top g.$$
By definition of $\hat f(X)$, we have 
\begin{align*}
    (I_n\otimes \Psi^\top)\hat f(X)&=(I_n\otimes \Psi^\top) (f(W_0;X)+ J( J^\top  \Lambda J)^{\dagger} J^\top g) \\
    &=(I_n\otimes \Psi^\top) f(W_0;X) + \tilde J ( J^\top  \Lambda J)^{\dagger} J^\top g \\
    &= (I_n\otimes \Psi^\top) f(W_0;X) + \tilde J (\tilde J^\top  \Lambda \tilde J)^{\dagger} \tilde J^\top \tilde g. 
\end{align*}

\subsubsection{Proof of Theorem \ref{thm:cg_lora_multi}}

The proof follows mutatis mutandis the proof of Theorem \ref{thm:cg_lora_binary_improv} and is omitted.

\newpage

\section{Obtaining activation signals}
\label{app:hutchinson}

We wish to compute the pre-activation derivatives signal, defined by the \textit{K-FAC} design, $T, \tilde T,  \Theta$ which have general expression
$$\sum_{i=1}^n \delta_i M_i\delta_i^\top,$$
where: 1) $M_i=I_C$ for $ T$, 2) $M_i=(I_C-\frac{1}{C}1_C1_C^\top)$ for  $\tilde T$, 3) $M_i=\Lambda_i=\operatorname{Diag}(p_i)-p_ip_i^\top$ for  $\Theta$. 

This task is challenging for two reasons: 1) we need to compute derivatives which involve backward passes, 2) the quantities $\delta_i,M_i$ scale with the number of classes $C$ (or $NC$ for generative tasks where $N$ is sequence length) which can be extremely large.

For the first point, let us observe that our desired quantity can be obtained by  vector-Jacobian products (VJP). For a linear layer with batched input $h\in\mathbb{R}^{n\times d_\mathrm{in}}$ and output $u\in\mathbb{R}^{n\times d_\mathrm{out}}$, consider for any $z\in\mathbb{R}^{n\times C}$ the following
\begin{align*}
    \phi 
    = (\delta_1z_1 \ ...
          \delta_n z_n)^\top
    \in\mathbb{R}^{n\times d_\mathrm{out}},
\end{align*}
Hence $\phi^\top \phi =  \sum_{i=1}^p\delta_i z_iz_i^\top\delta_i^\top$ is a good candidate. In particular, computing $\phi^\top \phi \Omega$ or $Q^\top\phi^\top\phi Q$ for $\Omega, Q\in\mathbb{R}^{d_\mathrm{out}\times m}$ costs $O(nd_\mathrm{out} m)$. 

For the second point, the question becomes to know which $z_i$ to choose and what is the incurred cost.
\subsection{Exact computations (small $C$)}

\begin{enumerate}
    \item For computing $T$, let $c=1,...,C$ and  $z_i=e_c$ for any $i=1,...,n$,  where $e_c=(0 ... 0 \ 1 \ 0 ... 0)^\top$ is the canonical basis vector at position $c$. We compute the corresponding $\phi^{(c)}$ for each $c=1,...,C$ and take the sum to obtain 
    $$\sum_{c=1}^C(\phi^{(c)})^\top \phi^{(c)}= \sum_{c=1}^C\sum_{i=1}^n \delta_ie_ce_c^\top\delta_i^\top=\sum_{i=1}^n \delta_i\delta_i^\top=T.$$
    \item For computing $\tilde T$, we repeat the above process with $z_i= e_c-\frac{1}{C}1_C$  for $c=1,...,C$.
    \begin{align*}
        \sum_{c=1}^C(\phi^{(c)})^\top \phi^{(c)}&= \sum_{c=1}^C\sum_{i=1}^n \delta_i(e_c-\frac{1}{C}1)(e_c-\frac{1}{C}1_C)^\top\delta_i \\
        &=\sum_{i=1}^n\delta_i[(\sum_{c=1}^C e_ce_c^\top)-\frac{1}{C}(\sum_{c=1}^Ce_c)1_C^\top-\frac{1}{C}1_C(\sum_{c=1}^Ce_c^\top)+\frac{1}{C^2}(\sum_{c=1}^C1_C1_C^\top)]\delta_i^\top \\
        &=\sum_{i=1}^n\delta_i[I_C-\frac{1}{C}1_C1_C^\top]\delta_i^\top=\tilde T
    \end{align*}
    where we used the fact that $\sum_{c=1}^Ce_c=1_C$.
    \item For computing $\Theta$,  we repeat the above process with $z_i= \sqrt{p_{i,c}}(e_c-p_i)$  for $c=1,...,C$. 
    \begin{align*}
        \sum_{c=1}^C(\phi^{(c)})^\top \phi^{(c)}&= \sum_{i=1}^n \delta_i[\sum_{c=1}^Cp_{i,c}(e_c-p_i)(e_c-p_i)^\top]\delta_i \\
        &=\sum_{i=1}^n \delta_i[\sum_{c=1}^Cp_{i,c} (e_ce_c^\top -e_cp_i^\top -p_ie_c^\top +p_ip_i^\top) ]\delta_i^\top\\
        &=\sum_{i=1}^n \delta_i[ \operatorname{Diag}(p_i)-(\sum_{c=1}^Cp_{i,c}e_c) p_i^\top -p_i (\sum_{c=1}^Cp_{i,c}e_c^\top) + (\sum_{c=1}^Cp_{i,c})p_ip_i^\top]\delta_i^\top \\
        &=\sum_{i=1}^n \delta_i[ \operatorname{Diag}(p_i)-p_i p_i^\top ]\delta_i^\top  =\Theta
    \end{align*}
    where we used the fact that $\sum_{c=1}^Cp_{i,c}=1$ and $\sum_{c=1}^Cp_{i,c}e_c=p_i$.
\end{enumerate}
In all above cases, this requires doing $C$ backward passes as well as a computational cost of $O(C)$ due to the sum.
\subsection{Approximate computation (large $C$)}
When $C$ is large as in generative modeling (where $C$ gets replaced by $NC$), we propose the following alternative. 
We consider random variables $z_i$ with $\mathbb{E}(z_iz_i^\top)=M_i$ such that only
 $$\mathbb{E}(\phi^\top \phi) =\sum_{i=1}^p\delta_iM_i\delta_i^\top$$
 holds.
\begin{enumerate}
\item For computing $T$, we consider any $z_i$ such that $\mathbb{E}(z_iz_i^\top)=I_C$ such as Rademacher or standard normal random variables. 
\item For computing  $\tilde T=\sum_{i=1}^n\delta_i\Psi\Psi^\top\delta_i^\top$, we consider  Rademacher/standard normal variables and additionally center them as following
$$\tilde z_i=(I_C-\frac{1}{C}1_C1_C^\top) z_i=z_i-(\frac{1}{C}\sum_{c=1}^Cz_{i,c})1_C.$$
\item For computing $\Theta=\sum_{i=1}^n\delta_i\Lambda_i\delta_i$, we use a more involved argument.
Let $g_i\sim\mathcal{N}(0,I_C)$ and
$$z_i=(I_C-p_i1_C^\top)\operatorname{Diag}(\sqrt p_i)g_i$$
therefore $z_i$ is itself Gaussian with 
$$\operatorname{Cov}(z_i)=(I_C-p_i1_C^\top)\operatorname{Diag}(p_i)(I_C-1_Cp_i^\top)$$
Note that $ (I_C-p_i1_C^\top)\operatorname{Diag}(p_i)=\operatorname{Diag}(p_i)-p_ip_i^\top$ since $1_C^\top\operatorname{Diag}(p_i)=p_i^\top$.

And 
$$(\operatorname{Diag}(p_i)-p_ip_i^\top)(I_C-1_Cp_i^\top)=\operatorname{Diag}(p_i)-p_ip_i^\top -p_ip_i^\top+p_i(p_i^\top1_C)p_i^\top = \operatorname{Diag}(p_i)-p_ip_i^\top=\Lambda_i$$
Therefore, the Hutchinson  estimator enjoys
$$\mathbb{E}(\phi^\top \phi) =\mathbb{E}( \sum_{i=1}^n \delta_{i}z_iz_i^\top  \delta_i^\top)=\sum_{i=1}^n\delta_i \operatorname{Cov}(z_i)\delta_i^\top = \sum_{i=1}^n \delta_i\Lambda_i \delta_i^\top$$
\end{enumerate}
In all above cases, we compute $\phi^{(p)}$ for $p=1,...,P$ and take the average $\frac{1}{P}\sum_{p=1}^P(\phi^{(p)})^\top \phi^{(p)}$ which requires $P$ backward passes. Furthermore, the computational complexity is now in $O(P)$.

\subsection{Number of backward passes}\label{app:nb_passes}

Each operation requiring $\delta$, namely $\delta(\delta^\top \Omega)$, $(Q_T^\top\delta)( \delta^\top Q_T)$ in Algorithm \ref{alg:subspace} or $\Theta=\sum_{i=1}^n \delta_i\Lambda_i\delta_i^\top$ in Algorithm \ref{alg:cg-lora_ce} requires $\min(C,P)$ backward passes depending on whether we implement exact/approximate VJP. Hence, in total, we perform $2(1+q)\min(P,C)$ backward passes where $q$ is the number of power iterations chosen for the Rayleigh Ritz method. With $q=0$, compared to existing LoRA-GA and LoRA-One, our method has $2\min(P,C)$ additional backward passes. Nonetheless,
we compensate those backward passes with our efficient Algorithm \ref{alg:subspace}. 

\newpage
\section{Experimental details}\label{app:exp}

\subsection{Baselines}
We used the available implementations of LoRA-GA and LoRA-One by sourcing their provided repository, available at \url{https://github.com/Outsider565/LoRA-GA/tree/main} and \url{https://github.com/YuanheZ/LoRA-One/tree/main} respectively. We also applied their recommended scaling parameter $\gamma$. For LoRA+, we set the ratio to their proposed default when using Kaiming initialization namely $\eta_B/\eta_A=2^4$.
\subsection{Training details for Natural Language Understanding}

\textit{LoRA parameters}. Rank $r=8$, $\alpha=2r$. LoRA layers are all query/value (24 layers for RoBERTa-base, 72 layers for T5-base)

\textit{Preprocessing.} We use default tokenizer from Hugging face with additional dynamic padding. The maximum input sequence length is set at $\Omega=128$. For T5-base on GLUE datasets, input sentences are further preprocessed to be task-specific ("classify the grammaticality of \{sentence\}" for CoLA) such that classification is possible, and maximum output length is set to 8.

\textit{Number of samples for non-zero based initialization.} CG-LoRA, LoRA-GA, LoRA-One are run on the same single random batch of size $B_\mathrm{init}=32$ sampled using class stratification.

\textit{Training.}   
Batch size $B_\mathrm{fine}=32$ samples and perform a single full pass on the training dataset.  AdamW optimizer for all methods, with default PyTorch values and no weight decay.

\textit{Learning rate.} The learning rate is found by grid search for each dataset and each model. Candidate values are chosen from $\{2e-5, 5e-5, 1e-4, 2e-4, 5e-4, 8e-4,1e-3\}$, and they are compared using a separate validation dataset. During finetuning, we also use a cosine schedule with warm-up ratio equal to 0.03.

\textit{Precision.} Backbone and LoRA adapters are in FP32.

\textit{Hardware:} RoBERTa-base/T5-base : Single Nvidia A40. 

\subsection{Training details for Natural Language Generation}

\textit{LoRA parameters}. Rank $r=8$, $\alpha=2r$. LoRA layers are all query/value (64 layers for LLaMa 2-7B)

\textit{Preprocessing.} We reused the exact implementation of LoRA-GA/LoRA-One for preprocessing. The maximum input sequence length is set at $\Omega=1024$. 

\textit{Generation.} Greedy decoding. The maximum output length is $512$. 

\textit{Number of samples for non-zero based initialization.} CG-LoRA, LoRA-GA, LoRA-One are run on the same random batch of size $B_\mathrm{init}=32$ using gradient accumulation with micro batches of size 1. 

\textit{Training.} Batch size $B_\mathrm{fine}=1$ sample. Single full pass on the training dataset,
with gradient accumulation every $32$ steps. AdamW optimizer for all methods, with default PyTorch values and no weight decay.

\textit{Learning rate.} The learning rate is found by grid search for each dataset and each model. Candidate values are chosen from  $\{2e-5, 5e-5, 1e-4, 2e-4, 5e-4, 8e-4,1e-3\}$, and they are compared using a separate validation dataset. During finetuning, we also use a cosine schedule with warm-up ratio equal to 0.03.

\textit{Precision.} Backbone is in FP16 and LoRA adapters are in FP32.

\textit{Hardware:} LLaMa 2-7B : Single Nvidia A100. 

\subsection{Algorithmic details for CG-LoRA}

For our experiments proposed in Section \ref{sec:exp}, we implement the cross-entropy loss based initialization (Algorithm \ref{alg:cg-lora_ce}) for all experiments. A more thorough analysis of differences between the two algorithms is proposed in Appendix \ref{app:ce_vs_squared}.

\textit{Float precision}. FP64 for initializing $(A_0,B_0)$ which are then moved back to FP32 for finetuning.


\textit{Exact vs approximate computation of output derivatives }. For RoBERTa-base and T5-base, we considered exact computation for all GLUE datasets. For LLaMa 2-7B, we considered  approximate computations with $P=1$. See Appendix \ref{app:hutchinson} for more details on their differences.

\textit{Hyperparameters.} 1) the oversampling factor $m$ for the sketching matrix, 2) the number of power iterations $q$. 3) the scaling factor $\gamma$.  

We consider $m =2r$ for all models/datasets. 

For RoBERTa-base: For the noisier experiments, namely CG-LoRA (shift) on all datasets, and CG-LoRA (no-shift) on small datasets CoLA and MRPC, we take $q=1$. Otherwise, we take $q=0$. For T5-base and LLaMA 2-7B, we consider $q=0$ for all datasets/methods.


Finally, we choose $\gamma=16$ for Natural Language Understanding and $\gamma=64$ for Natural Language Generation, as proposed in \cite{gawang2024lora}.

\subsection{ Results for T5-base and additional plots} \label{app:results}

\begin{table}[t!]
\small
\centering
\caption{Test accuracy of finetuned T5-base. LoRA rank $r=8$. Value in bold represents the best LoRA method.}
\vspace{1em}
\label{tab:glue-t5}
\renewcommand{\arraystretch}{1.15}
\setlength{\tabcolsep}{6pt}
\adjustbox{width=0.9\linewidth}{
\begin{tabular}{lccccc}
\toprule
& \textbf{MNLI} & \textbf{SST-2} & \textbf{CoLA} & \textbf{QNLI} & \textbf{MRPC}  \\
\midrule
Full
& 86.00$_{\pm0.12}$
& 94.22$_{\pm0.14}$
& $81.33_{\pm0.35}$
& 93.16$_{\pm0.10}$
& 84.96$_{\pm 1.02}$  
 \\
rsLoRA
& 85.18$_{\pm0.12}$
& 93.76$_{\pm 0.19}$
& $76.22_{\pm 3.54}$
& 93.05$_{\pm0.11}$
& 84.47$_{\pm1.02}$
 \\
 LoRA+ & $84.44_{\pm 0.12}$ & 93.92$_{\pm 0.09}$ & $79.08_{\pm 1.41}$ & $92.89_{\pm 0.13}$ & $84.31_{\pm 0.69}$\\
 \midrule
LoRA-GA
& 85.12$_{\pm0.15}$
& 93.96$_{\pm0.05}$
& $79.41_{\pm0.56}$
& 92.91$_{\pm 0.32}$
& 83.90$_{\pm1.02}$
 \\
 LoRA-One
& 85.53$_{\pm0.15}$
& \textbf{94.15}$_{\pm 0.33}$
& 79.16$_{\pm0.83}$
& 92.88$_{\pm0.11}$
& 84.31$_{\pm0.52}$\\

\midrule

CG-LoRA (no-shift)
& 85.40$_{\pm0.11}$
& \textbf{94.15}$_{\pm0.33}$
& \textbf{80.24}$_{\pm0.61}$
& 92.96$_{\pm0.08}$
& \textbf{85.94}$_{\pm 0.50}$ \\
CG-LoRA (shift)
& \textbf{85.62}$_{\pm0.13}$
& 93.96$_{\pm 0.23}$
&  79.45$_{\pm 0.78}$
& \textbf{93.08}$_{\pm0.15}$
& 83.82$_{\pm0.35}$ \\
\bottomrule
\end{tabular}}
\end{table}

For T5-base, we compute the output logit functions by teacher forcing. When computing the K-FAC subspaces in Algorithm \ref{alg:subspace}, we further multiply each input tokens by their encoding mask,  i.e $h_{i,w} \gets h_{i,w}\times\mathrm{mask}_{i,w}$,  before doing the pooling average to avoid contributions from padded tokens. 

Table  \ref{tab:glue-t5} in Appendix show that the results are more balanced compared to those for RoBERTa-base. Since T5-base is  a text to text architecture, the pretrained loss gradients are more informative for the finetuning task, and CG-LoRA performs on par with related state of the art algorithms. Nonetheless, we still observe a more significant advantage on smaller datasets where CG-LoRA tends to converge at a faster rate than other LoRA methods. Both shifted and not shifted versions perform similarly well which can be explained by the fact that the gradients are not noisy and the adapter initialization are correctly scaled as discussed by \cite{li2025beyond}. 

\begin{figure}[h!]
    \centering
    \vspace{-0.5em}
    \includegraphics[width=0.45\linewidth]{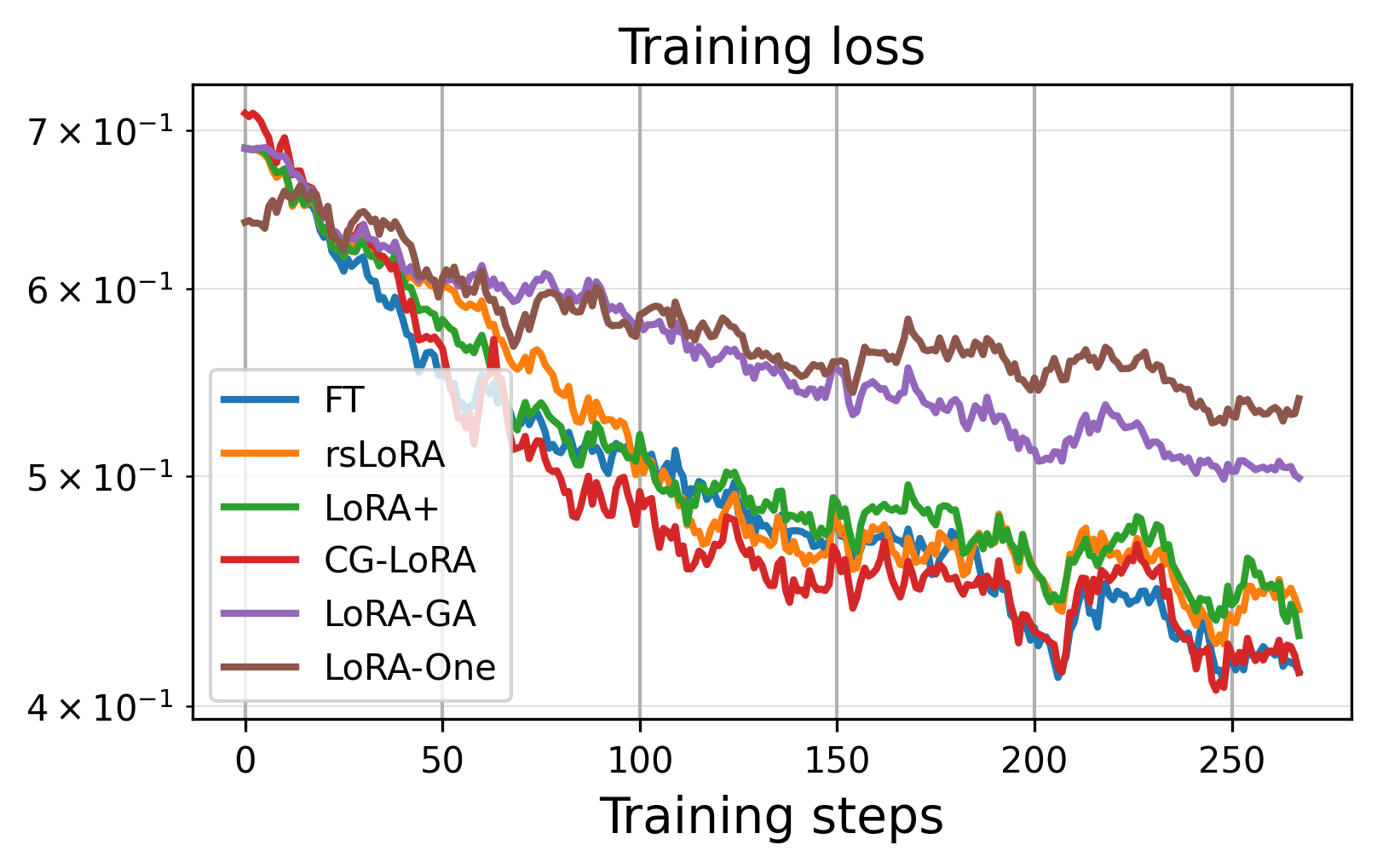} \includegraphics[width=0.45\linewidth]{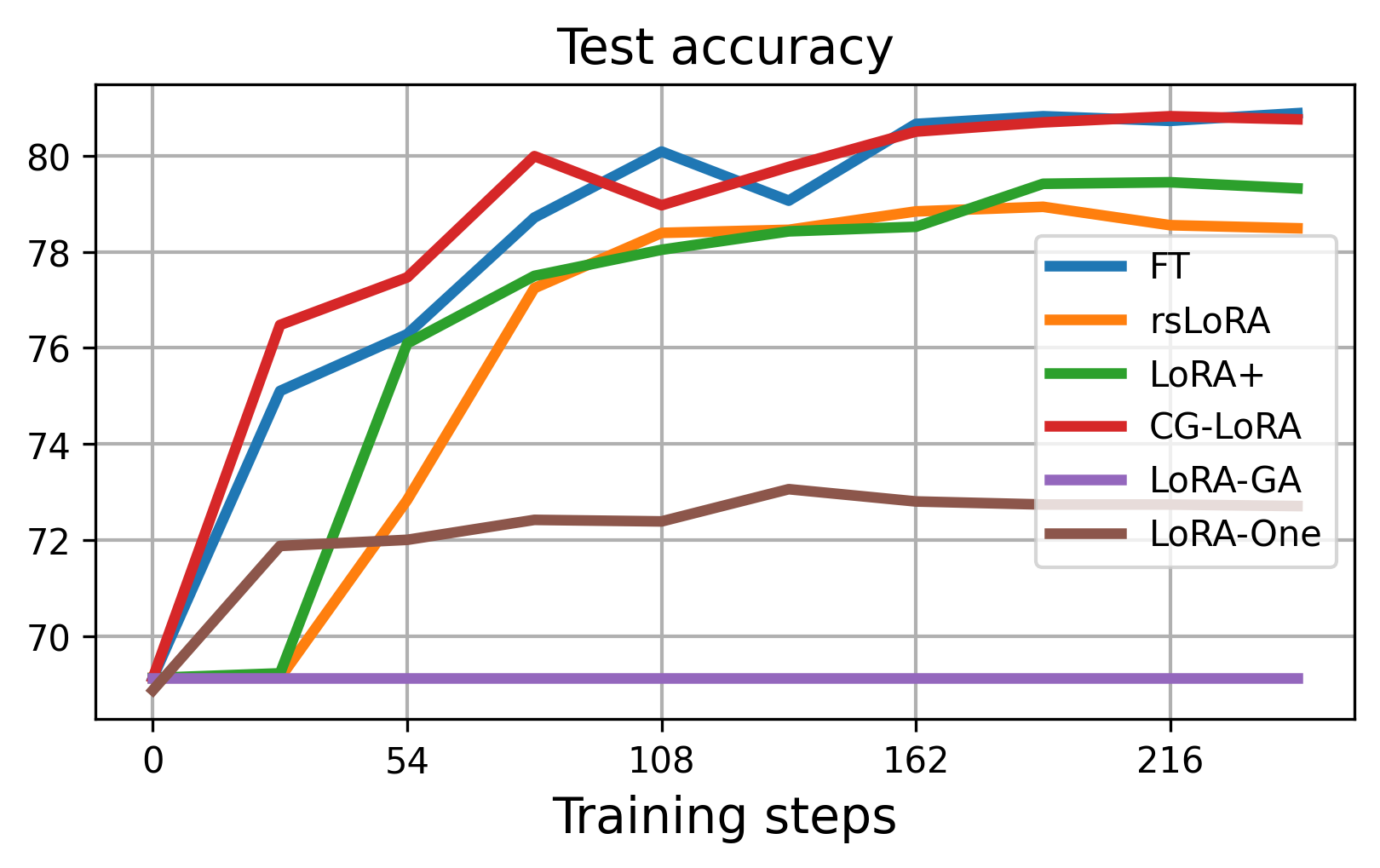} 
    \caption{Performance of finetuned RoBERTa-base on CoLA. CG-LoRA (no shift) is represented.}
    \label{fig:roberta_cola_loss}
\end{figure}


\begin{figure}[h!]
    \centering
    \includegraphics[width=0.45\linewidth]{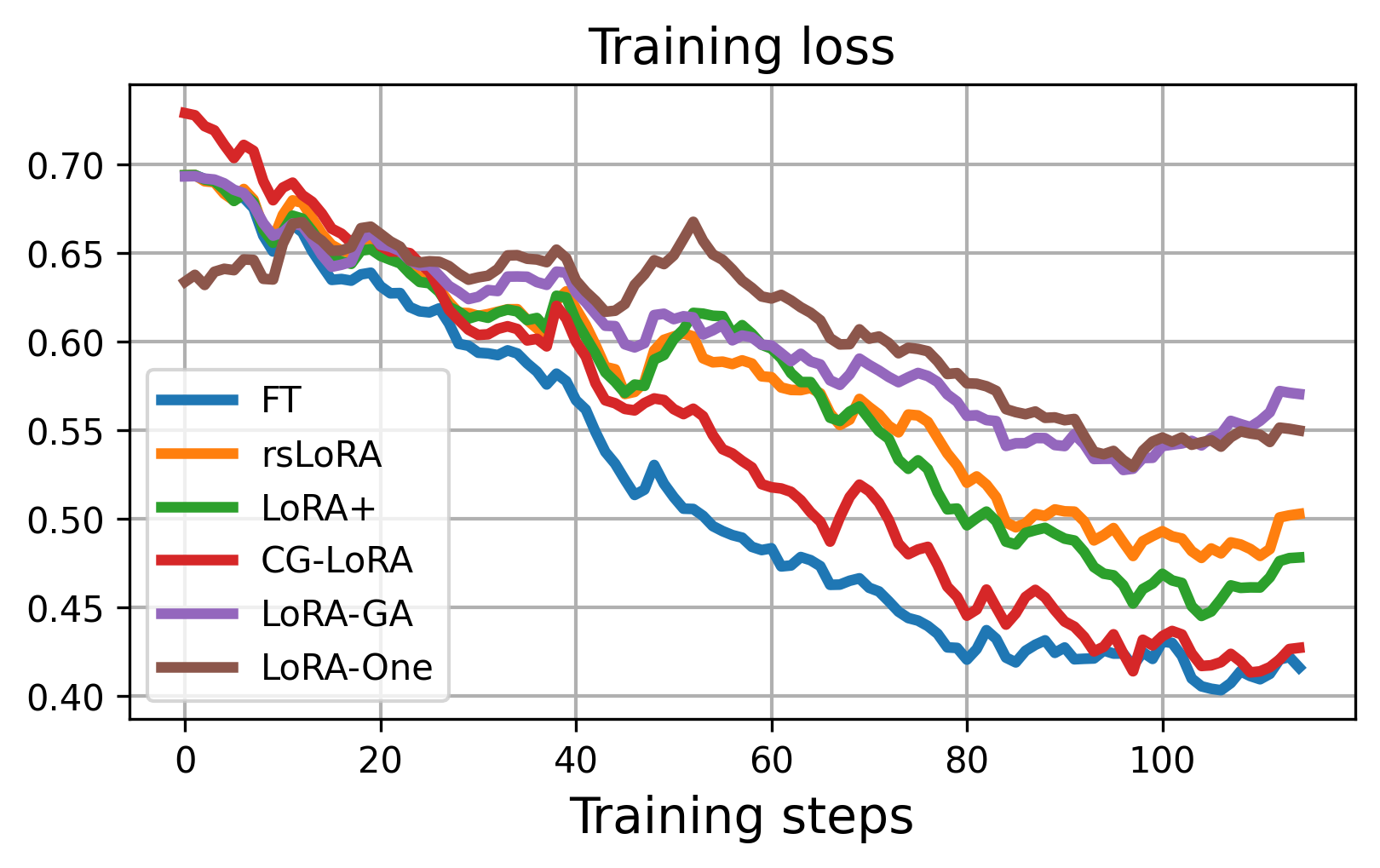} \includegraphics[width=0.45\linewidth]{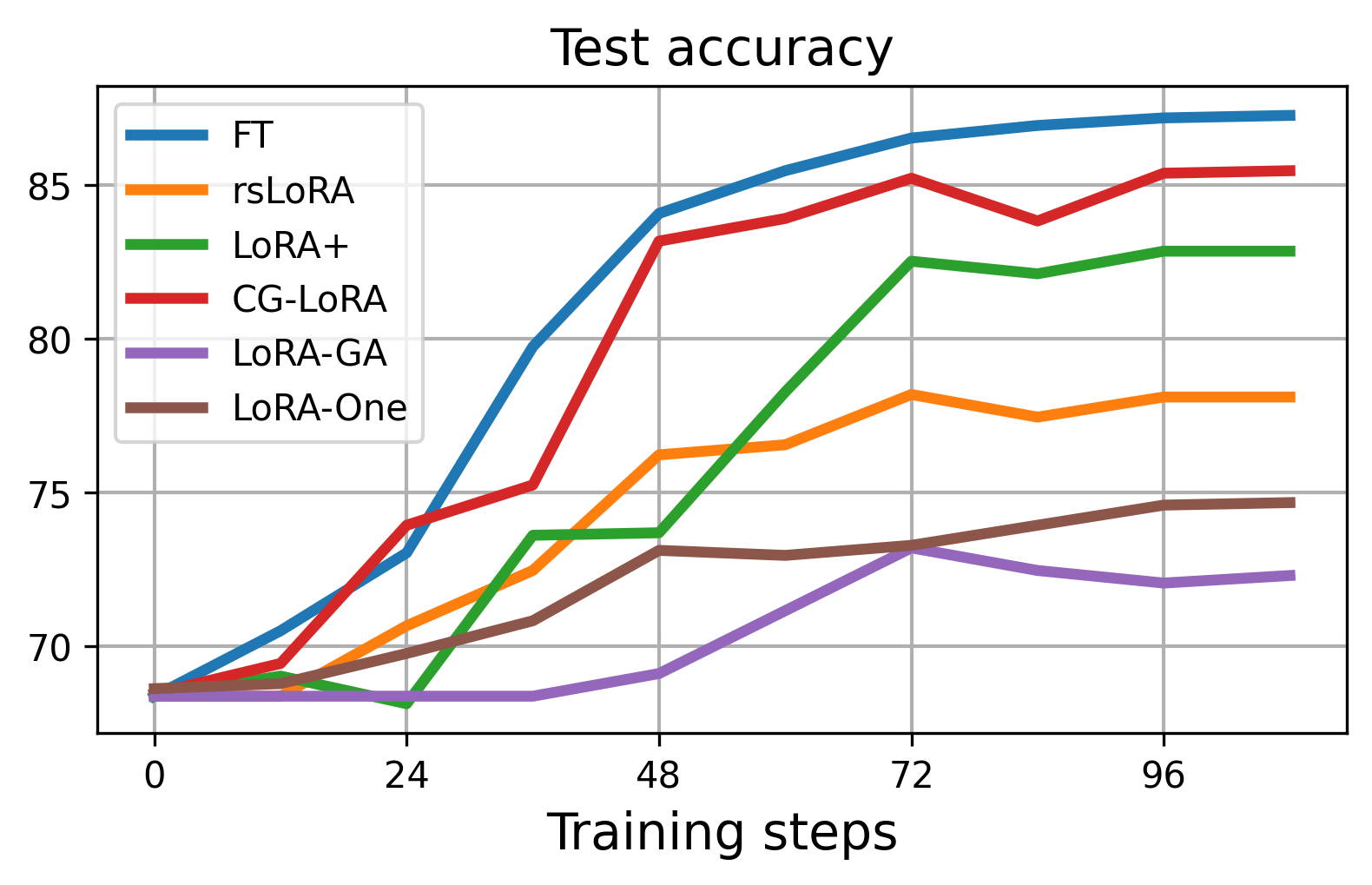} 
    \caption{Performance of finetuned RoBERTa-base on MRPC. }
    \label{fig:roberta_mrpc}
\end{figure}

\begin{figure}[h!]
    \centering
    \includegraphics[width=0.45\linewidth]{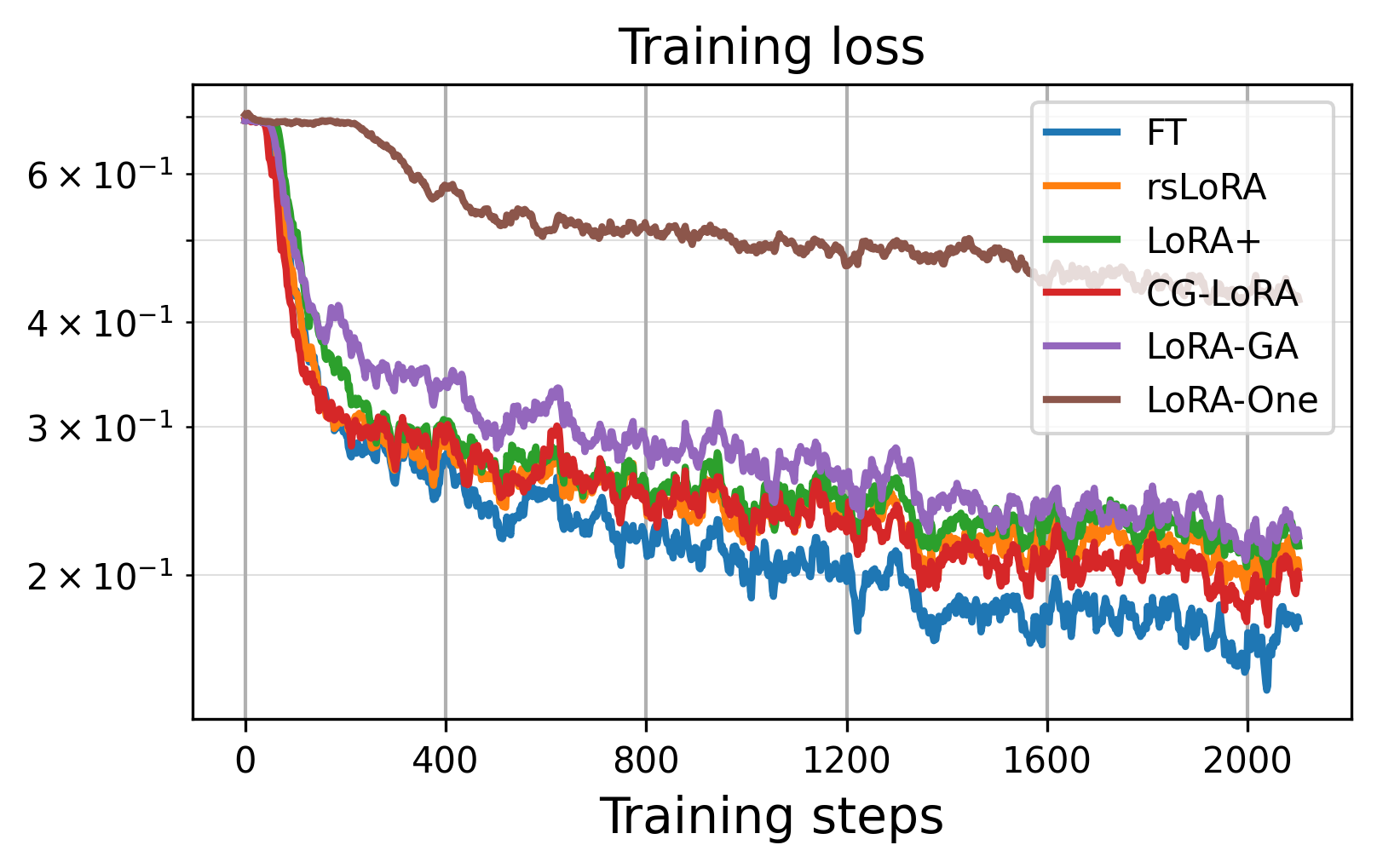} \includegraphics[width=0.45\linewidth]{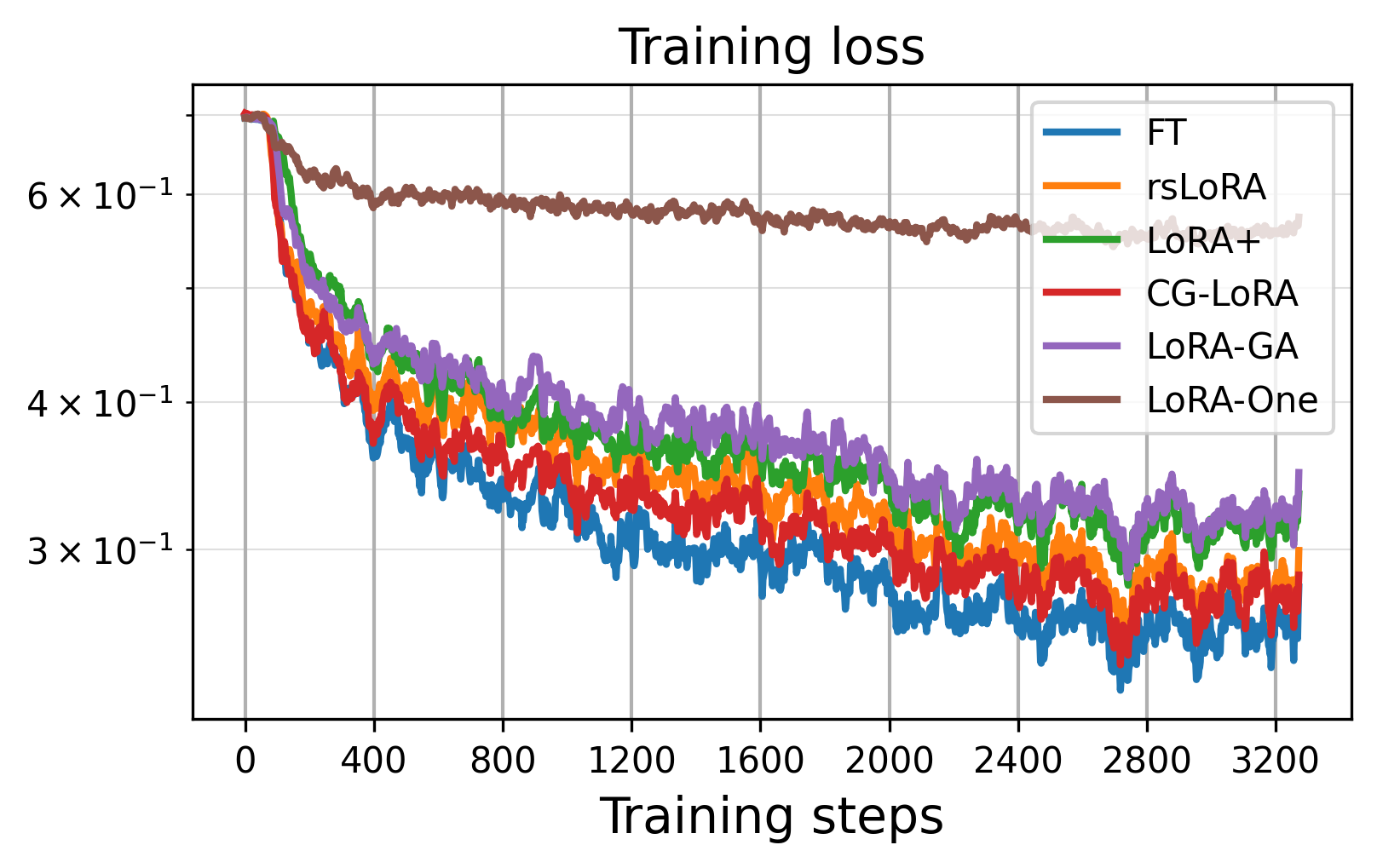} 
    \caption{Performance of finetuned RoBERTa-base on SST2, QNLI. }
    \label{fig:roberta_sst2}
\end{figure}

\begin{figure}[h!]
    \centering
    \includegraphics[width=0.38\linewidth]{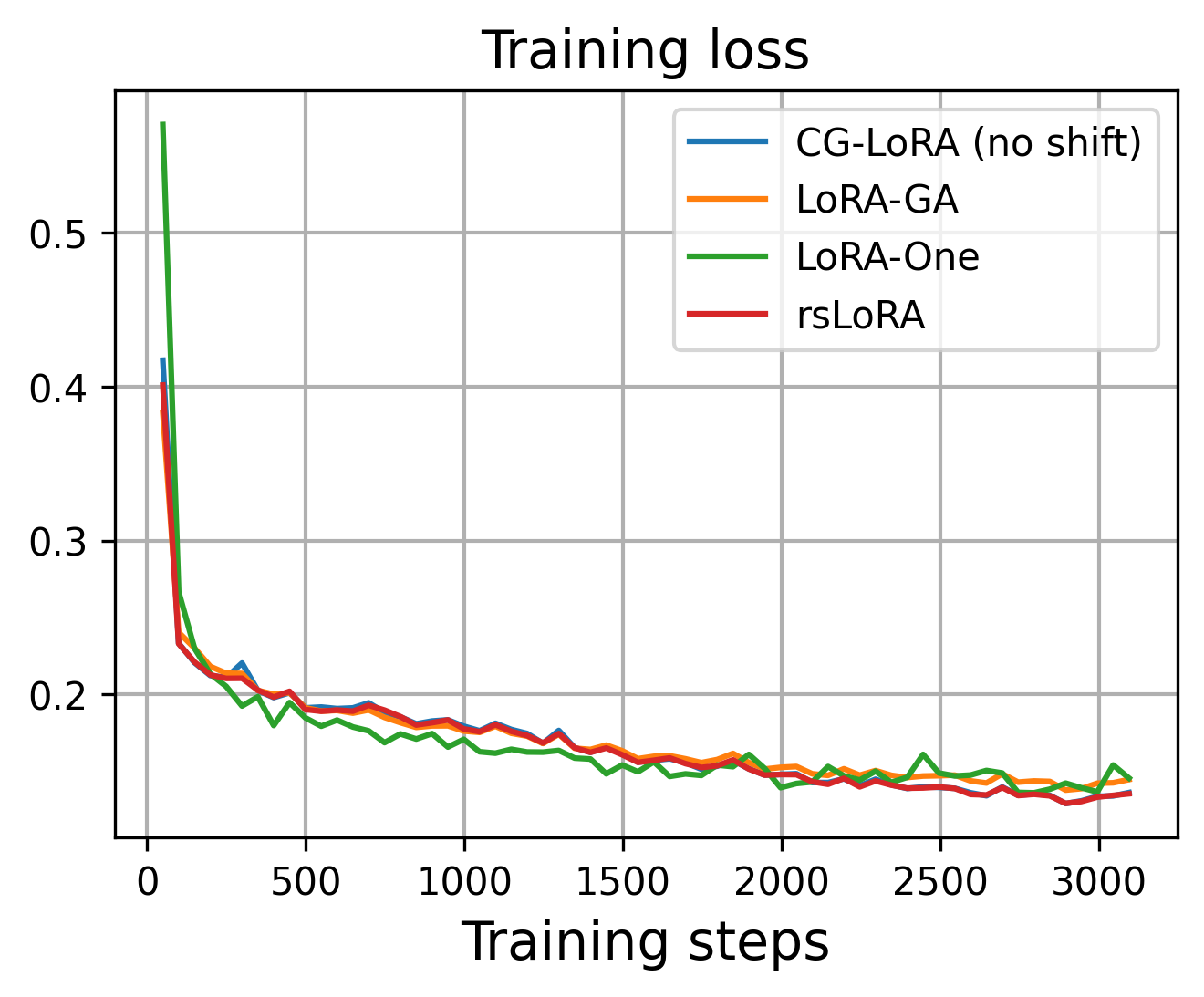} \includegraphics[width=0.45\linewidth]{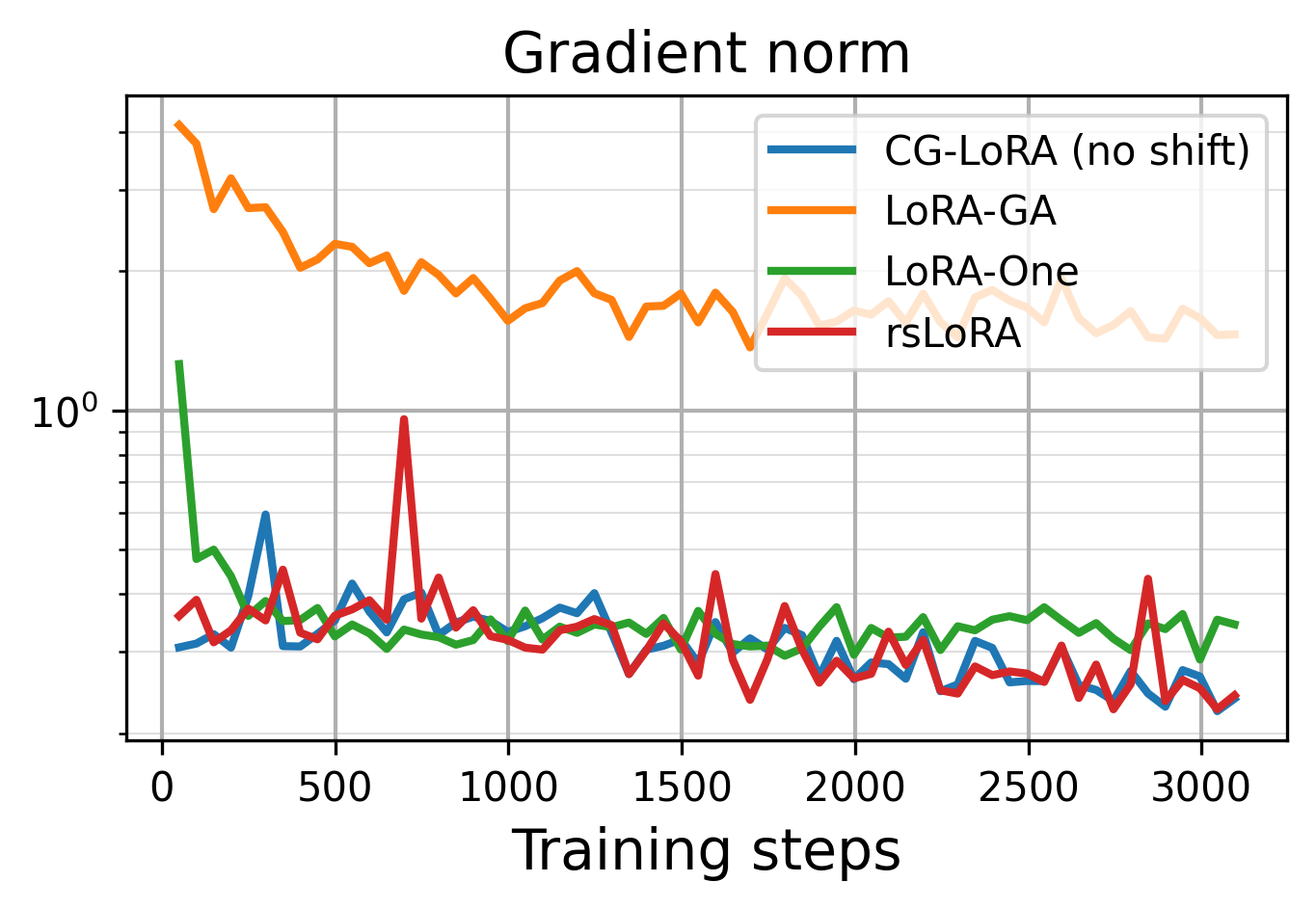} 
    \caption{Training dynamics of finetuned LLaMa 2-7B on MetaMath. }
    \label{fig:llama_metamath}
\end{figure}

\begin{figure}[h!]
    \centering
    \includegraphics[width=0.4\linewidth]{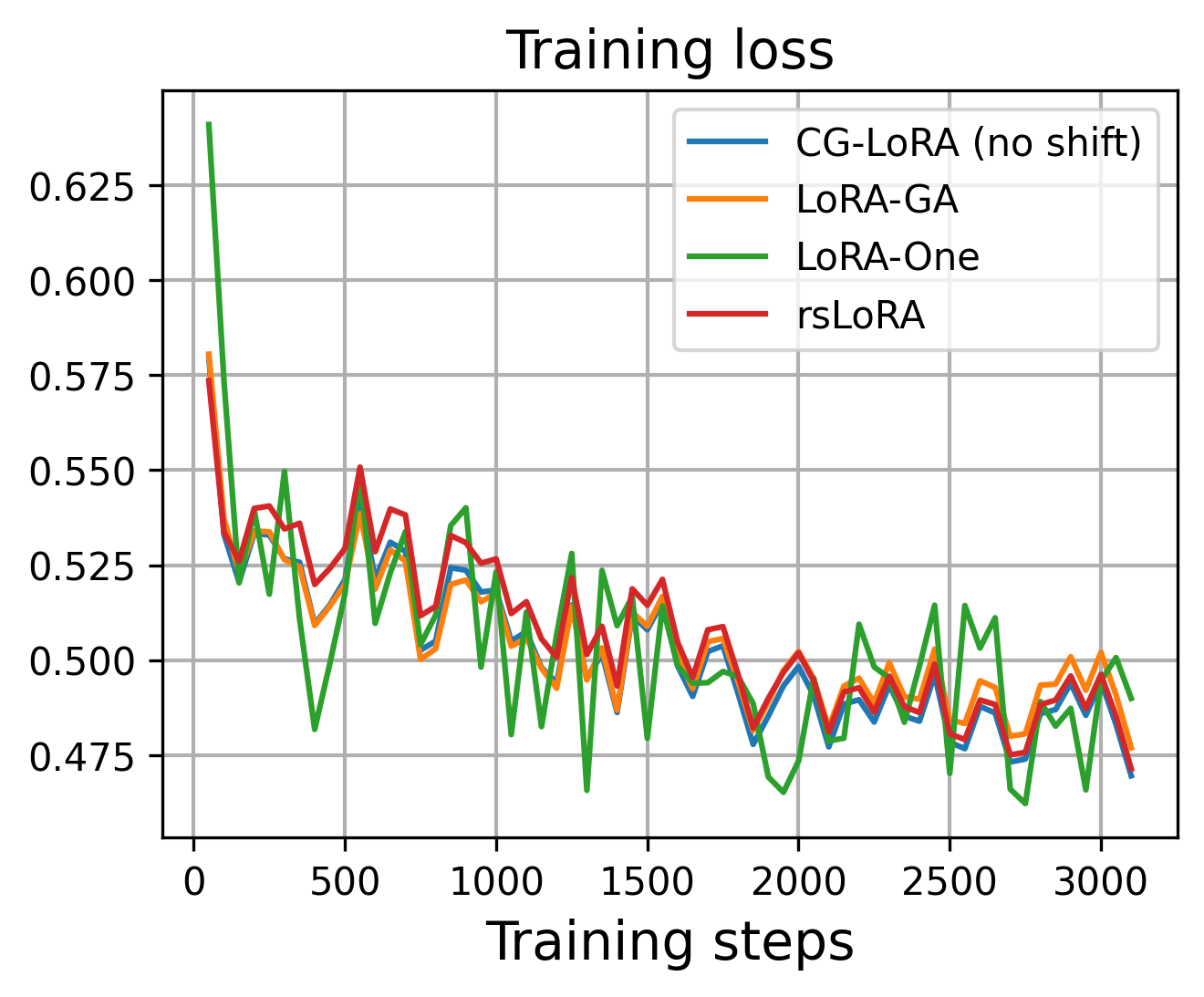} \includegraphics[width=0.45\linewidth]{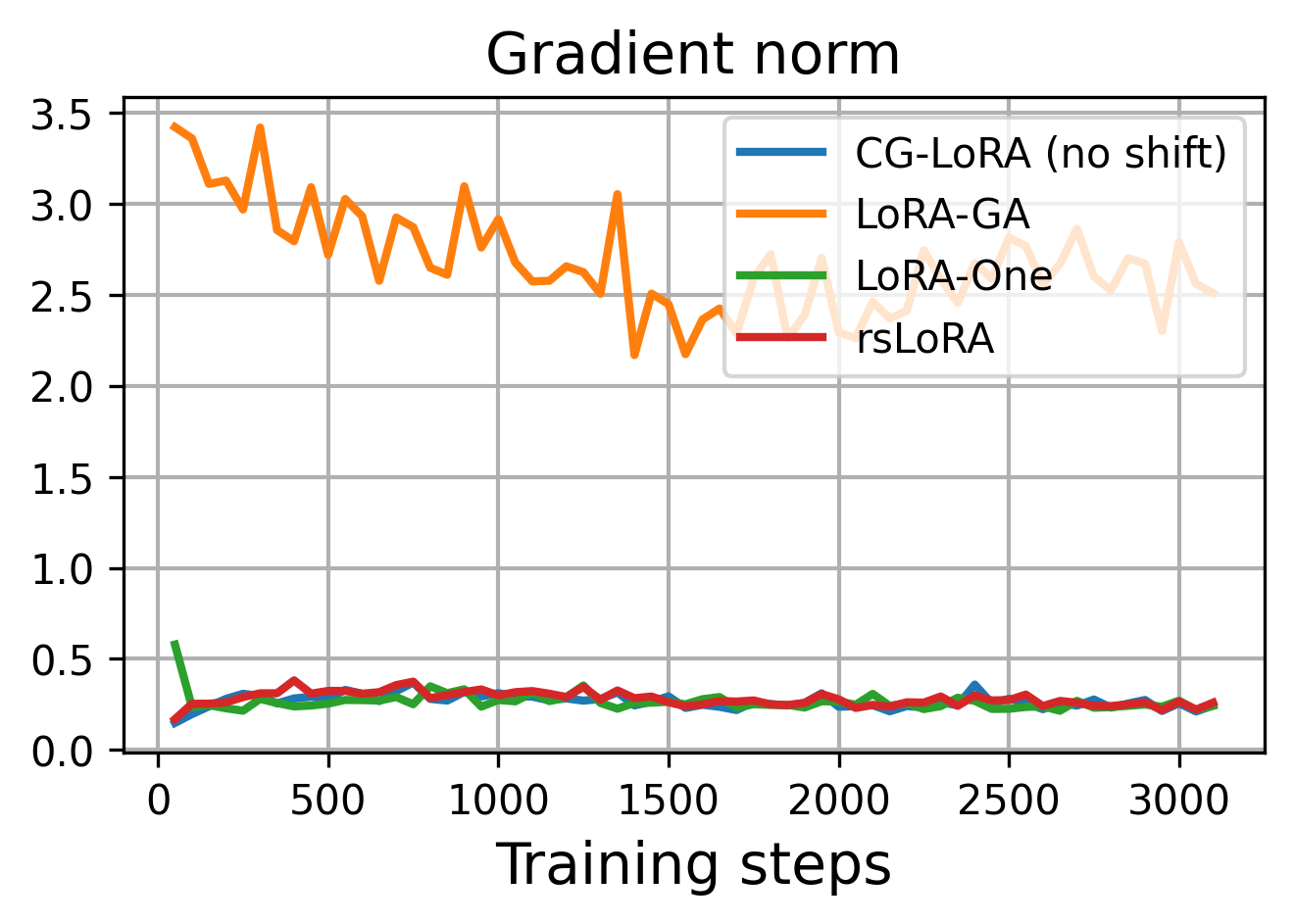} 
    \caption{Training dynamics of finetuned LLaMa 2-7B on CodeFeedback. }
    \label{fig:llama_code}
\end{figure}

\begin{figure}[h!]
    \centering
    \includegraphics[width=0.4\linewidth]{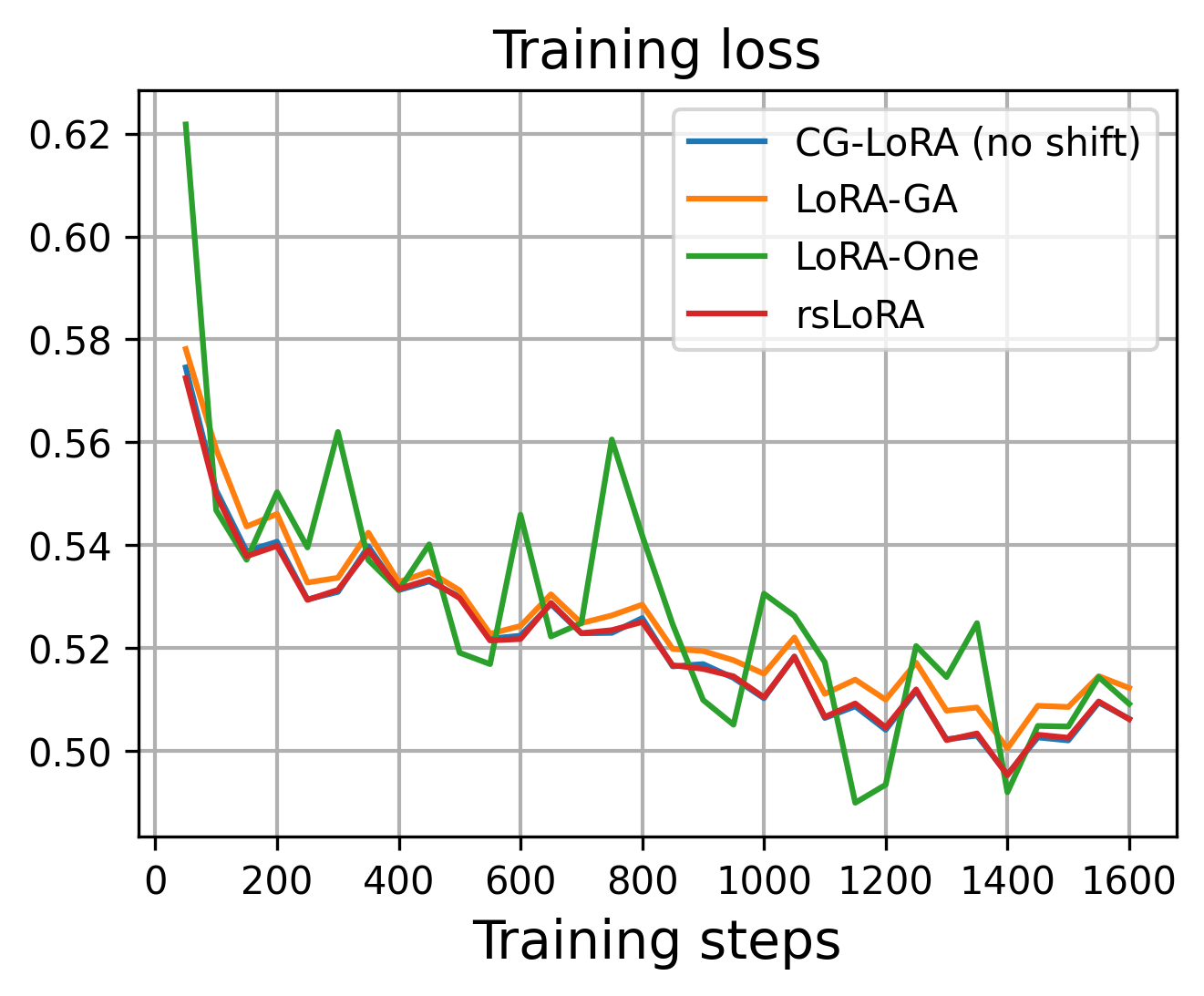} \includegraphics[width=0.45\linewidth]{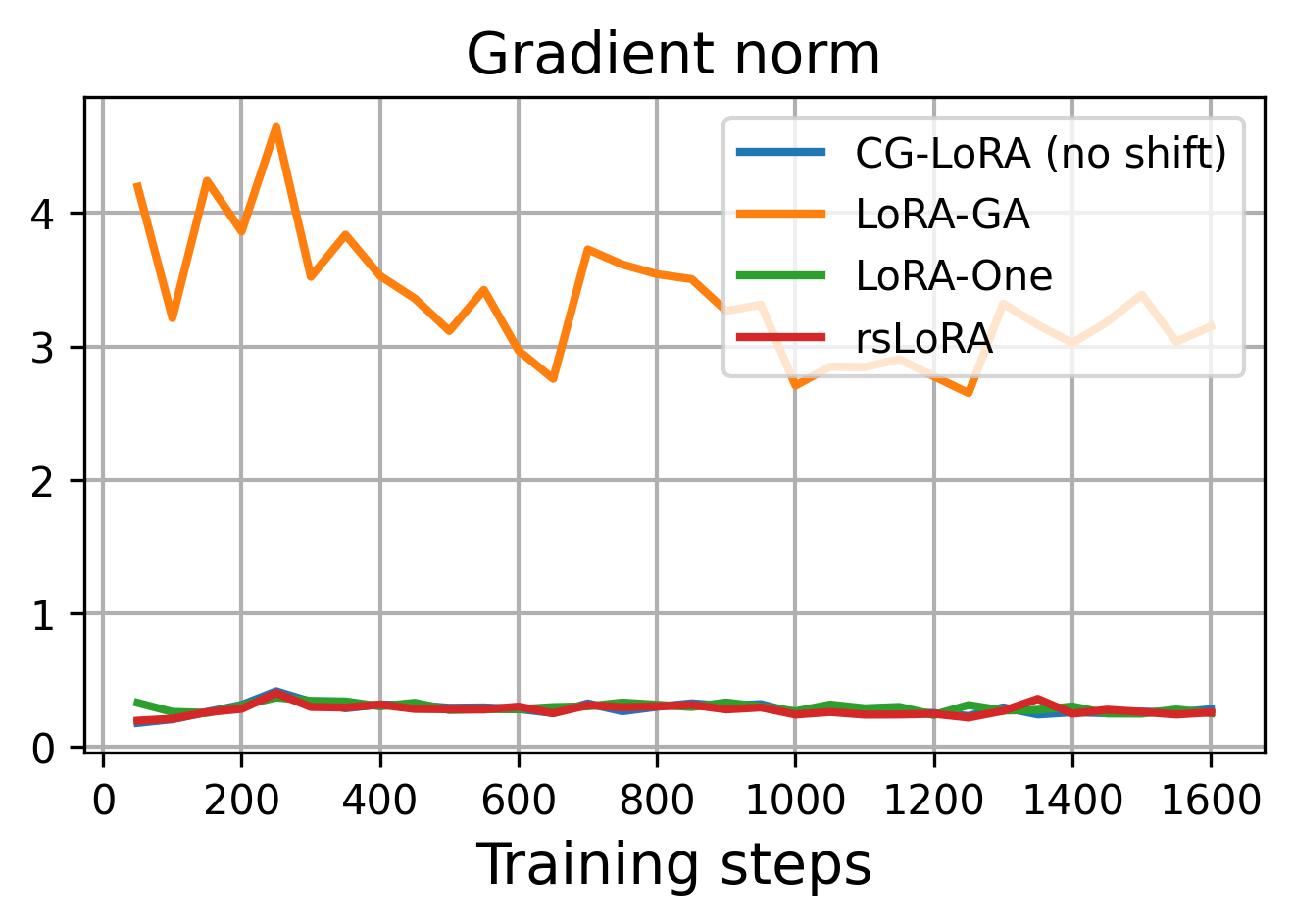} 
    \caption{Training dynamics of finetuned LLaMa 2-7B on WizardLM. }
    \label{fig:llama_wizard}
\end{figure}
\newpage

\section{Ablation study}\label{app:abla}



\subsection{On the cross-entropy loss correction}\label{app:ce_vs_squared}

In this part, we investigate whether our proposed Algorithm \ref{alg:cg-lora_ce} for the cross-entropy loss truly improves finetuning performance compared to our initial analysis with the square-loss resulting in Algorithm \ref{alg:cg-lora}. In Table \ref{tab:ce_vs_squaured}, we compare the performance of both RoBERTa-base and T5-base on the two small GLUE classification tasks, CoLA and MRPC, with and without the correction. While both versions are competitive especially when compared to previous baselines, our proposed correction can drastically improve downstream task performance.
\begin{table}[h!]
\small
\centering
\caption{Test accuracy of finetuned CG-LoRA (no shift) with and without cross-entropy correction. LoRA rank $r=8$. Query/Value LoRA layers.  }

\vspace{1em}
\label{tab:ce_vs_squaured}

\begin{tabular}{lcc}
\toprule
& \textbf{CoLA} & \textbf{MRPC}  \\
\midrule
RoBERTa-base, Algo \ref{alg:cg-lora_ce}

& \textbf{80.76} $_{\pm0.45}$ 
& \textbf{85.04}$_{\pm 1.70}$ 
 \\
RoBERTa-base,  Algo  \ref{alg:cg-lora}

& $80.66_{\pm 0.32}$
& $82.92_{\pm1.40}$
 \\
 \midrule
 T5-base, Algo \ref{alg:cg-lora_ce}
& \textbf{80.24} $_{\pm0.61}$ 
& \textbf{85.94}$_{\pm 0.50}$ 
 \\
T5-base,  Algo  \ref{alg:cg-lora}

& \textbf{79.89} $_{\pm0.19}$ 

& $84.88_{\pm1.10}$
 \\

\bottomrule
\end{tabular}
\end{table}

\subsection{Exact versus approximate computation of output derivatives}

We test finetuning RoBERTa-base with CG-LoRA on CoLA, using our proposed Hutchinson estimator in Subsection \ref{app:hutchinson} with different number of Hutchinson probes $P$. In this case, we already know that exact computation is possible by backpropagating each output coordinate ($C=2$). Nonetheless, we still seek to investigate in that simpler case whether  our theoretically proposed Hutchinson estimator is a valid alternative. Figure \ref{fig:probes} shows that for small values of $P$, we are close to the exact case.
\begin{figure}[h!]
    \centering
    \includegraphics[width=0.45\linewidth]{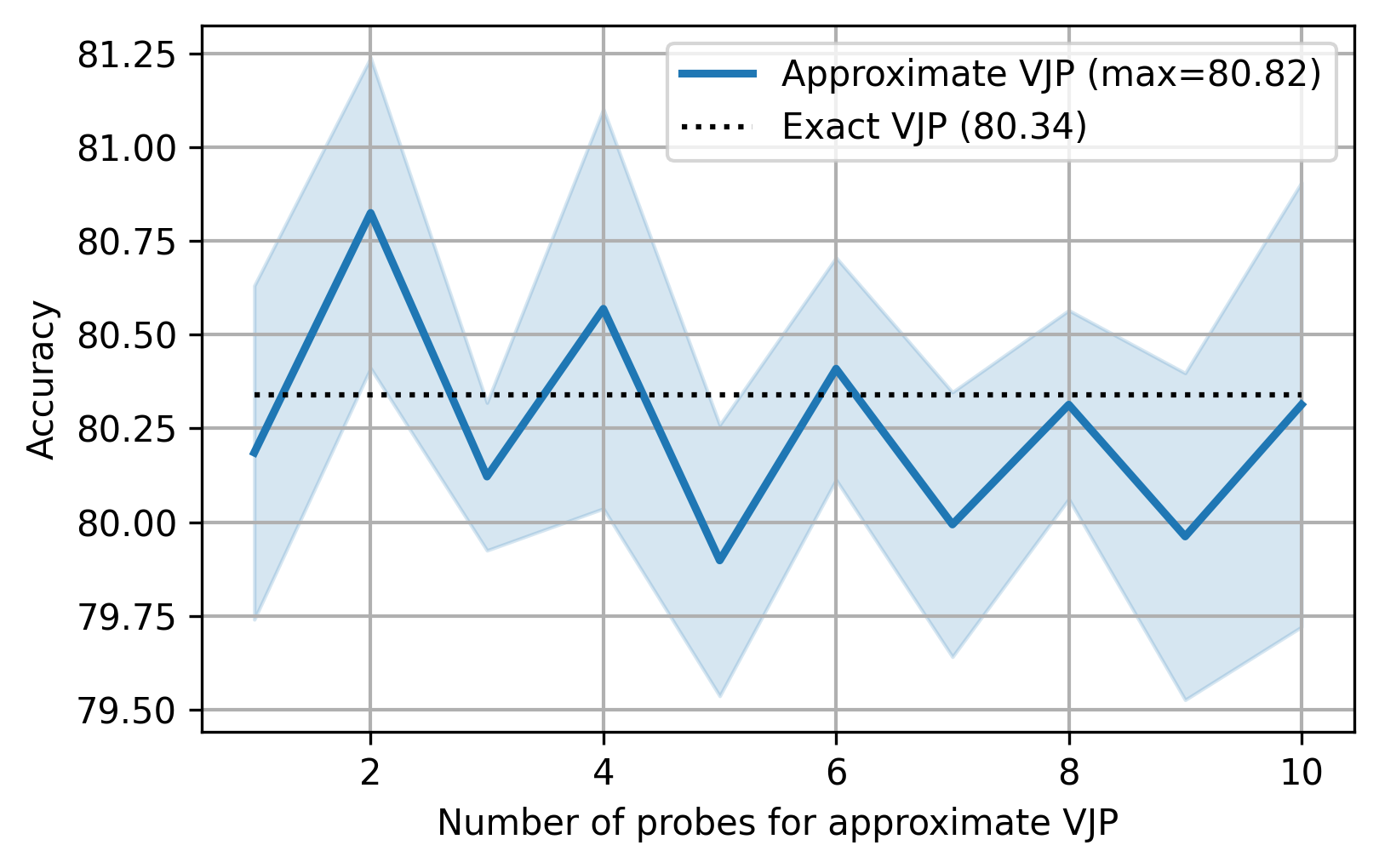}
    \caption{Accuracy versus number of Hutchinson probes}
    \label{fig:probes}
\end{figure}

\end{document}